\newif\ifARXIV
\newcommand{\nn}{\nonumber}
\newtheorem*{rep@theorem}{\rep@title}
\newcommand{\newreptheorem}[2]{%
\newenvironment{rep#1}[1]{%
 \def\rep@title{#2 \ref{##1}}%
 \begin{rep@theorem}}%
 {\end{rep@theorem}}}
\newtheorem{Lemma}{Lemma}
\newtheorem{Theorem}{Theorem}
\newtheorem{Theorem*}{Theorem}
\newtheorem{Corollary*}{Corollary}
\newtheorem{Definition*}{Definition}
\newtheorem{Definition}{Definition}
\newtheorem*{example}{Example}
\newcommand{\cH}{\mathcal{H}}
\newcolumntype{W}{ >{\raggedleft \arraybackslash} X }
\newcommand{\vast}{\bBigg@{4}}
\newcommand{\Vast}{\bBigg@{5}}
\newcommand{\expectation}[1]{\mathbb{E}_{#1}}
\newcommand{\cut}[1]{}
\newcommand{\ceql}{\! = \!}
\newcommand{\myset}[1]{\mathcal{\uppercase{#1}}}
\newcommand{\argmax}[1]{\operatorname{argmax}_{#1}\hspace{1mm}}
\newcommand{\RealLine}{\mathbb{R}}
\newcommand{\NaturalNumbers}{\mathbb{N}}
\newcommand{\intSet}[1]{\NaturalNumbers_{#1}}
\newcommand{\beq}{\begin{equation}}
\newcommand{\eeq}{\end{equation}}
\newcommand{\bmp}[1]{\begin{minipage}{#1}}
\newcommand{\emp}{\end{minipage}}
\newcommand{\rhs}{\emph{r.h.s.}\xspace}
\tikzstyle{dgraph}=[->, line width=1.5pt]
\tikzstyle{dec}=[rectangle,draw=red!50,thick,minimum size=6mm]  % decision node
\tikzstyle{utility}=[diamond,draw=red!50,thick,minimum size=6mm]  % utility node
\tikzstyle{sobs}=[fill=green!15,thick]  % sequentally observed node                      
\tikzstyle{cont}=[circle,draw=blue!50,thick,minimum size=6mm,line width=2pt,>=stealth]  % continuous  node                     
\tikzstyle{oval}=[ellipse,draw=blue!50,thick,minimum size=6mm,line width=2pt,>=stealth]  % continuous node
\tikzstyle{disc}=[rectangle,draw=blue!50,thick,minimum size=6mm]  % discrete node
\tikzstyle{obs}=[circle,fill=black!20,thick,draw]  % observed node
\tikzstyle{hid}=[circle,draw,thick]  %  non observed node
\tikzstyle{sep}=[rectangle,draw=magenta!50,thick,minimum size=6mm]  % discrete node
\tikzstyle{det}=[fill=red!15,,rectangle,draw=red!50,thick,minimum size=6mm]  % deterministic node
\begin{document}

\title{A Gauss-Newton Method for Markov Decision Processes}

\author{\name Thomas Furmston \email T.Furmston@cs.ucl.ac.uk \\
       \addr Department of Computer Science \\
       University College London \\
       London, WC1E 6BT
       \AND
       \name Guy Lever \email G.Lever@cs.ucl.ac.uk \\
       \addr Department of Computer Science \\
       University College London \\
       London, WC1E 6BT
}

\ifARXIV
\editor{}
\else
\editor{Anon}
\fi

\maketitle
\begin{abstract}Approximate Newton methods are a standard optimization tool which aim to maintain the benefits of Newton's method, such as a fast rate of convergence, whilst alleviating its drawbacks, such as computationally expensive calculation or estimation of the inverse Hessian. In this work we investigate approximate Newton methods for policy optimization in Markov decision processes  (MDPs). We first analyse the structure of the Hessian of the objective function for MDPs. We show that, like the gradient, the Hessian exhibits useful structure in the context of MDPs and we use this analysis to motivate two Gauss-Newton Methods for MDPs. Like the Gauss-Newton method for non-linear least squares, these methods involve approximating the Hessian by ignoring certain terms in the Hessian which are difficult to estimate. The approximate Hessians possess desirable properties, such as negative definiteness, and we demonstrate several important performance guarantees including guaranteed ascent directions, invariance to affine transformation of the parameter space, and convergence guarantees. We finally provide a unifying perspective of key policy search algorithms, demonstrating that our second Gauss-Newton algorithm is closely related to both the EM-algorithm and natural gradient ascent applied to MDPs, but performs significantly better in practice on a range of challenging domains.
\end{abstract}

\ifARXIV
\begin{keywords}
~
\end{keywords}
\else
\begin{keywords}
Markov Decision Processes, Reinforcement Learning, Newton Method, Function Approximation
\end{keywords}
\fi

\tableofcontents

\section{Introduction}\label{sec:intro}

Markov decision processes (MDPs) are the standard model for optimal control in a fully observable environment \citep{Bertsekas-2010}. Strong empirical results have been obtained in numerous challenging real-world optimal control problems using the MDP framework. This includes problems of non-linear control \citep{stengel-1993,li-todorov-2004,Todorov09iterativelocal,Deisenroth2011c,rawlik-2012-etal,spall-1998}, robotic applications \citep{Kober-peters-2011,kohl-2004,vlassis-etal-2009}, biological movement systems \citep{li-thesis}, traffic management \citep{richter-2007,srinivasan-2006}, helicopter flight control \citep{Abbeel-Coates}, elevator scheduling \citep{Crites-Barto} and numerous games, including chess \citep{veness-2009}, go \citep{gelly-2008}, backgammon \citep{Tesauro-1994} and Atari video games \citep{citeulike:13527831}.  

It is well-known that the global optimum of a MDP can be obtained through methods based on dynamic programming, such as  value iteration \citep{bellman-book} and policy iteration \citep{howard-policy-iteration-1960}. However, these techniques are known to suffer from the curse of dimensionality, which makes them infeasible for most real-world problems of interest. As a result, most research in the reinforcement learning and control theory literature has focused on obtaining approximate or locally optimal solutions. There exists a broad spectrum of such techniques, including approximate dynamic programming methods \citep{Bertsekas-2010}, tree search methods \citep{russell-norig-09,kocsis-2006-ecml,Browne-etal-2012}, local trajectory-optimization techniques, such as differential dynamic programming \citep{Jacobson-70} and iLQG \citep{li-todorov-2006}, and policy search methods \citep{williams92,baxter-etal-2001,sutton-etal-00,marbach-etal-2001,Kakade-2002,Kober-peters-2011}. 

The focus of this paper is on policy search methods, which are a family of algorithms that have proven extremely popular in recent years, and which have numerous desirable properties that make them attractive in practice. Policy search algorithms are typically specialized applications of techniques from numerical optimization \citep{nocedal-2006,dempster-1977}. As such, the controller is defined in terms of a differentiable representation and local information about the objective function, such as the gradient, is used to update the controller in a smooth, non-greedy manner. Such updates are performed in an incremental manner until the algorithm converges to a local optimum of the objective function. There are several benefits to such an approach: the smooth updates of the control parameters endow these algorithms with very general convergence guarantees; as performance is improved at each iteration (or at least on average in stochastic policy search methods) these algorithms have good anytime performance properties; it is not necessary to approximate the value function, which is typically a difficult function to approximate -- instead it is only necessary to approximate a low-dimensional projection of the value function, an observation which has led to the emergence of so called actor-critic methods \citep{Konda:2003:AA:942271.942292,DBLP:conf/nips/KondaT99,bhatnagar-incremental-NAC-2008,bhatnagar_2009}; policy search methods are easily extendable to models for optimal control in a partially observable environment, such as the finite state controllers \citep{DBLP:conf/uai/MeuleauPKK99,toussaint-etal-06}.

In (stochastic) steepest gradient ascent \citep{williams92,baxter-etal-2001,sutton-etal-00} the control parameters are updated by moving in the direction of the gradient of the objective function. While steepest gradient ascent has enjoyed some success, it suffers from a serious issue that can hinder its performance. Specifically, the steepest ascent direction is not invariant to rescaling the components of the parameter space and the gradient is often poorly-scaled, i.e., the variation of the objective function differs dramatically along the different components of the gradient, and this leads to a poor rate of convergence. It also makes the construction of a good step size sequence a difficult problem, which is an important issue in stochastic methods.\footnote{This is because line search techniques lose much of their desirability in stochastic numerical optimization algorithms, due to variance in the evaluations.} Poor scaling is a well-known problem with steepest gradient ascent and alternative numerical optimization techniques have been considered in the policy search literature. Two approaches that have proven to be particularly popular are Expectation Maximization \citep{dempster-1977} and natural gradient ascent \citep{amari-1996,amari-1998,amari-etal-1992}, which have both been successfully applied to various challenging MDPs (see \citet{Dayan-Hinton-EM-RL-97,Kober-Peters-08,toussaint-etal-2011} and \citet{Kakade-2002,Bagnell-2003} respectively).

An avenue of research that has received less attention is the application of Newton's method to Markov decision processes. Although \citet{baxter-etal-2001} provide such an extension of their \textit{GPOMDP} algorithm, they give no empirical results in either \citet{baxter-etal-2001} or the accompanying paper of empirical comparisons \citep{baxter-etal-2001b}. There has since been only a limited amount of research into using the second order information contained in the Hessian during the parameter update. To the best of our knowledge only two attempts have been made: in \citet{Schraudolph-etal-nips-2006} an on-line estimate of a Hessian-vector product is used to adapt the step size sequence in an on-line manner; in \citet{ngo-etal-2011}, Bayesian policy gradient methods \citep{ghavamzadeh-bayesian-gradient} are extended to the Newton method. There are several reasons for this lack of interest. Firstly, in many problems the construction and inversion of the Hessian is too computationally expensive to be feasible. Additionally, the objective function of a MDP is typically not concave, and so the Hessian isn't guaranteed to be negative-definite. As a result, the search direction of the Newton method may not be an ascent direction, and hence a parameter update could actually lower the objective. Additionally, the variance of sample-based estimators of the Hessian will be larger than that of estimators of the gradient. This is an important point because the variance of gradient estimates can be a problematic issue and various methods, such as baselines \citep{weaver-etal-2001,Greensmith-etal-2004}, exist to reduce the variance.

Many of these problems are not particular to Markov decision processes, but are general longstanding issues that plague the Newton method. Various methods have been developed in the optimization literature to alleviate these issues, whilst also maintaining desirable properties of the Newton method. For instance, quasi-Newton methods were designed to efficiently mimic the Newton method using only evaluations of the gradient obtained during previous iterations of the algorithm. These methods have low computational costs, a super-linear rate of convergence and have proven to be extremely effective in practice. See \citet{nocedal-2006} for an introduction to quasi-Newton methods. Alternatively, the well-known Gauss-Newton method is a popular approach that aims to efficiently mimic the Newton method. The Gauss-Newton method is particular to non-linear least squares objective functions, for which the Hessian has a particular structure. Due to this structure there exist certain terms in the Hessian that can be used as a useful proxy for the Hessian itself, with the resulting algorithm having various desirable properties. For instance, the pre-conditioning matrix used in the Gauss-Newton method is guaranteed to be positive-definite, so that the non-linear least squares objective is guaranteed to decrease for a sufficiently small step size.

While a straightforward application of quasi-Newton methods will not typically be possible for MDPs\footnote{In quasi-Newton methods, to ensure an increase in the objective function it is necessary to satisfy the secant condition \citep{nocedal-2006}. This condition is satisfied when the objective is concave/convex or the strong Wolfe conditions are met during a line search. For this reason, stochastic applications of quasi-Newton methods has been restricted to convex/concave objective functions \citep{Schraudolph-2007-etal}.}, in this paper we consider whether an analogue to the Gauss-Newton method exists, so that the benefits of such methods can be applied to MDPs. The specific contributions are as follows:
\begin{itemize}
\item  In Section~\ref{sec:analysisOfHessian}, we present an analysis of the Hessian for MDPs. Our starting point is a policy Hessian theorem (Theorem~\ref{theorem_policy_hessian_theorem}) and we analyse the behaiviour of individual terms of the Hessian to provide insight into constructing efficient approximate Newton methods for policy optimization. In particular we show that certain terms are negligible near local optima.
\item Motivated by this analysis, in Section~\ref{sec_apxn_method} we provide two Gauss-Newton type methods for policy optimization in MDPs which retain certain terms of our Hessian decomposition in the preconditioner in a gradient-based policy search algorithm. The first method discards terms which are negligible near local optima and are difficult to approximate. The second method further discards an additional term which we cannot guarantee to be negative-definite. We provide an analysis of our Gauss-Newton methods and give several important performance guarantees for the second Gauss-Newton method:
\begin{itemize}
\item We demonstrate that the pre-conditioning matrix is negative-definite when the controller is $\log$-concave in the control parameters (detailing some widely used controllers for which this condition holds) guaranteeing that the search direction is an ascent direction.
\item We show that the method is invariant to affine transformations of the parameter space and thus does not suffer the significant drawback of steepest ascent.
\item We provide a convergence analysis, demonstrating linear convergence to local optima, in terms of the step size of the update. One key practical benefit of this analysis is that the step size for the incremental update can be chosen independently of unknown quantities, while retaining a guarantee of convergence.
\item The preconditioner has a particular form which enables the assent direction to be computed particularly efficiently via a Hessian-free conjugate gradient method in large parameter spaces. 
\end{itemize} 
\item In Section~\ref{sec:unifying} we present a unifying perspective for several policy search methods. In particular we relate the search direction of our second Gauss-Newton algorithm to that of Expectation Maximization (which provides new insights in to the latter algorithm when used for policy search), and we also discuss its relationship to the natural gradient algorithm. 
\item In Section~\ref{sec:experiments} we present experiments demonstrating state-of-the-art performance on challenging domains including Tetris and robotic arm applications.
\end{itemize}

\section{Preliminaries and Background}

In Section~\ref{sec:mdps} we introduce Markov decision processes, along with some standard terminology relating to these models that will be required throughout the paper. In Section~\ref{sec_parametric_policy_search} we introduce policy search methods and detail several key algorithms from the literature.

\subsection{Markov Decision Processes}\label{sec:mdps}

In a Markov decision process an \textit{agent}, or \textit{controller}, interacts with an environment over the course of a planning horizon. 
At each point in the planning horizon the agent selects an action (based on the the current state of the environment) and receives a scalar reward. 
The amount of reward received depends on the selected action and the state of the environment. 
Once an action has been performed the system transitions to the next point in the planning horizon, and the new state of the environment is determined 
(often in a stochastic manner) by the action the agent selected and the current state of the environment.  
The optimality of an agent's behaviour is measured in terms of the total reward the agent can expect to 
receive over the course of the planning horizon, so that optimal control is obtained when this quantity 
is maximized.

Formally a MDP is described by the tuple $\{\myset{S}, \myset{A}, D, P, R \}$, in which $\myset{S}$ and $\myset{A}$ are sets, known respectively as the state and action space, 
$D$ is the initial state distribution, which is a distribution over the state space, 
$P$ is the transition dynamics and is formed of the set of conditional distributions over the state space, $\{P(\cdot|s,a)\}_{(s,a) \in \mathcal{S} \times \mathcal{A}}$, and $R :\mathcal{S} \times \mathcal{A} \to [0, R_{\textnormal{max}}]$  is the (deterministic) reward function, which is assumed to be bounded and non-negative. Given a planning horizon, $H \in \mathbb{N}$, and a time-point in the planning horizon, $t \in \intSet{H}$, we use the notation $s_t$ and $a_t$ to denote the random
variable of the state and action of the $t^\textnormal{th}$ time-point, respectively.
The state at the initial time-point is determined by the initial state distribution, $s_1 \sim D(\cdot)$. 
At any given time-point, $t \in \mathbb{N}_H$, and given the state of the environment, the agent selects an action, $a_t \sim \pi(\cdot|s_t)$, according to the \emph{policy} $\pi$. The state of the next point in the planning horizon is determined according to the transition dynamics, $s_{t+1} \sim P(\cdot| a_t, s_t)$.
This process of selecting actions and transitioning to a new state is iterated sequentially through all of the time-points in the planning horizon. 
At each point in the planning horizon the agent receives a scalar reward, which is determined by the reward function. 

The objective of a MDP is to find the policy that maximizes a given function of the expected reward over the course of the planning horizon. 
In this paper we usually consider the infinite horizon discounted reward framework, so that the objective function takes the form
\begin{equation}
U(\pi): = \sum_{t=1}^\infty \mathbb{E}_{s_t, a_t \sim p_t} \bigg[ \gamma^{t-1} R(s_t, a_t) ; \pi, D \bigg], \label{objectiveFunction}
\end{equation}
where we use the semi-colon to identify parameters of the distribution, rather than conditioning variables, and where the distribution of $s_t$ and $a_t$, which we denote by $p_t$, is given by the marginal at time $t$ of the joint distribution over $(s_{1:t},a_{1:t})$, where $s_{1:t} = (s_1,s_2,...,s_t)$, $a_{1:t} = (a_1,a_2,...,a_t)$, denoted by  
\begin{equation*}
p(s_{1:t}, a_{1:t}; \pi): = \pi(a_t|s_t) \bigg\{ \prod_{\tau=1}^{t-1} P( s_{\tau+1}| s_\tau, a_\tau) \times \pi(a_\tau|s_\tau) \bigg\} D(s_1).
\end{equation*}
The discount factor $\gamma \in [0,1)$, in (\ref{objectiveFunction}) ensures that the objective is bounded.

We use the notation $\xi_t = (s_1, a_1, s_2, a_2, ...., s_t, a_t)$ to denote trajectories through the state-action space of length, $t \in \mathbb{N}$. 
We use $\xi$ to denote trajectories that are of infinite length, and use $\Xi$ to denote the space of all such trajectories. Given a trajectory, $\xi \in \Xi$,
we use the notation $R(\xi)$ to denote the total discounted reward of the trajectory, so that 
\begin{equation*}
R(\xi) = \sum_{t=1}^{\infty} \gamma^{t-1} R(s_t, a_t).
\end{equation*}
Similarly, we use the notation $p(\xi;\pi)$ to denote the probability of generating the trajectory $\xi$ under the policy $\pi$.

We now introduce several functions that are of central importance. The value function w.r.t. policy $\pi$ is
defined as the total expected future reward given the current state,
\begin{equation}
V_\pi(s) := \sum_{t=1}^\infty \mathbb{E}_{s_t, a_t \sim p_t} \bigg[ \gamma^{t-1} R(s_t, a_t) \big| s_1 \ceql s;\pi \bigg]. \label{sValFnInfiniteSummation}
\end{equation}
It can be seen that $U(\pi) = \mathbb{E}_{s \sim D} [V_\pi(s)]$. The
value function can also be written as the solution of the following
fixed-point equation,   
\begin{equation}
V_\pi(s) = \mathbb{E}_{a \sim \pi(\cdot|s)} \bigg[ R(s, a) + \gamma \mathbb{E}_{s' \sim P(\cdot| s, a)} \big[ V_\pi(s') \big] \bigg], \label{sValFnFixedPoint}
\end{equation}
which is known as the Bellman equation \citep{Bertsekas-2010}. The state-action value function w.r.t. policy $\pi$ is given by 
\begin{equation}
Q_\pi(s, a): = R(s, a) + \gamma \mathbb{E}_{s' \sim P(\cdot| s, a)}
\bigg[ V_\pi(s') \bigg], \label{saValFn} 
\end{equation}
and gives the value of performing an action, in a given state, and then following the policy. Note that $V_\pi(s) = \sum_{a \in \mathcal{A}} \pi(a|s) Q_\pi(s,a)$. 
Finally, the advantage function \citep{Baird93}
\begin{equation*}
A_{\pi}(s, a) := Q_\pi(s, a) - V_\pi(s), \label{advFn}
\end{equation*}
gives the relative advantage of an action in relation to the
other actions available in that state and it can be seen that $\sum_{a \in \mathcal{A}} \pi(a|s) A_{\pi}(s,
a) = 0$, for each $s \in \mathcal{S}$.

\subsection{Policy Search Methods}\label{sec_parametric_policy_search}

In policy search methods the policy is given some differentiable parametric form, denoted $\pi(a|s;\bm{w})$ with $\bm{w}$ the policy parameter, and local information, such as the gradient of the objective function, is used to update the policy in a smooth non-greedy manner. This process is iterated in an incremental manner until the algorithm converges to a local optimum of the objective function. Denoting the parameter space by $\myset{W} \subset \mathbb{R}^n$, $n \in \mathbb{N}$, we write the objective function directly in terms of the parameter vector, i.e.,
\begin{equation}
U(\bm{w}) = \sum_{(s,a) \in \mathcal{S} \times \mathcal{A}} \quad \sum_{t=1}^\infty \gamma^{t-1} p_t(s,a;\bm{w}) R(s, a) , \quad \quad \quad \bm{w} \in \mathcal{W}, \label{ps_objectiveFunction}
\end{equation}
while the trajectory distribution is written in the form
\begin{equation}
p(a_{1:H}, s_{1:H};\bm{w}) = p(a_H|s_H;\bm{w}) \bigg\{ \prod_{t=1}^{H-1} p(s_{t+1}|a_t,s_t) \pi(a_t|s_t;\bm{w}) \bigg\} p_1(s_1), \quad H \in \mathbb{N}. \label{trajectory_prob_parametric} 
\end{equation}
Similarly, $V(s;\bm{w})$, $Q(s,a;\bm{w})$ and $A(s,a;\bm{w})$ denote respectively the value function, state-action value function and the advantage function in terms of the parameter vector $\bm{w}$. We introduce the notation
\begin{equation}
p_{\gamma}(s,a; \bm{w}): = \sum_{t=1}^{\infty} \gamma^{t-1} p_t(s,a; \bm{w}). \label{gamma_state_dist}
\end{equation}
Note that the objective function can be written
\begin{equation}
U(\bm{w}) = \sum_{(s,a) \in \mathcal{S} \times \mathcal{A}} p_{\gamma}(s,a; \bm{w}) R(s, a). \label{ps_objectiveFunction_secondForm}
\end{equation}

% In parametric policy search methods local information, such as the gradient of the objective function, is used to update the parameters. 

We shall consider two forms of policy search algorithm in this paper, gradient-based optimization methods and methods based on iteratively optimizing a lower-bound on the objective function. 
In gradient-based methods the update of the policy parameters take the form
\begin{equation}
\bm{w}^{\textnormal{new}} =  \bm{w} + \alpha \mathcal{M}(\bm{w}) \nabla_{\bm{w}} U(\bm{w}), \label{general_gradient_step}
\end{equation}
where $\alpha \in \RealLine^+$ is the step size parameter and $\mathcal{M}(\bm{w})$ is some preconditioning matrix that possibly depends on $\bm{w} \in \mathcal{W}$. If $\mathcal{M}(\bm{w})$ is positive-definite and $\alpha$ is sufficiently small, then such an update will increase the total expected reward. Provided that the preconditioning matrix is always negative-definite and the step size sequence is appropriately selected, by iteratively updating the policy parameters according to (\ref{general_gradient_step}) the policy parameters will converge to a local optimum of (\ref{ps_objectiveFunction}). This generic gradient-based policy search algorithm is given in Algorithm~\ref{GenericPolicySearchAlg}. Gradient-based methods vary in the form of the preconditioning matrix used in the parameter update. The choice of the preconditioning matrix determines various aspects of the resulting algorithm, such as the computational complexity, the rate at which the algorithm converges to a local optimum and invariance properties of the parameter update. Typically the gradient $\nabla_{\bm{w}} U(\bm{w})$ and the preconditioner $\mathcal{M}(\bm{w})$ will not be known exactly and must be approximated by collecting data from the system. In the context of reinforcement learning, the Expectation Maximization (EM) algorithm searches for the optimal policy by iteratively optimizing a lower bound on the objective function. While the EM-algorithm doesn't have an update of the form given in (\ref{general_gradient_step}) we shall see in Section~\ref{compareGaussNewton2EM} that the algorithm is closely related to such an update. We now review specific policy search methods.

\begin{algorithm}[tb]
  \caption{Generic gradient-based policy search algorithm}\label{GenericPolicySearchAlg}
  \KwIn{Initial vector of policy parameters, $\bm{w}_0 \in \mathcal{W}$, and a step size sequence, $\{ \alpha_k \}_{k=0}^\infty$, with $\alpha_k \in \mathbb{R}^+$ for $k \in \mathbb{N}$.}

  Set iteration counter, $k \leftarrow 0$.
  
  \Repeat{ \textnormal{Convergence of the policy parameters}}{
  
  Calculate the gradient of the objective $\nabla_{\bm{w} = \bm{w}_k} U(\bm{w})$, and the preconditioner $\mathcal{M}(\bm{w}_k)$ at the current point in the parameter space.

  \begin{spacing}{1.5}  
  \end{spacing}      
  
  Update policy parameters, $\bm{w}_{k+1} = \bm{w}_k + \alpha_k \mathcal{M}(\bm{w}_k) \nabla_{\bm{w} = \bm{w}_k} U(\bm{w})$.
  
  \begin{spacing}{1.5}  
  \end{spacing}      
  
  Update iteration counter, $k \leftarrow k+1$.
  
  \begin{spacing}{1.5}  
  \end{spacing}        
  
  }

  \begin{spacing}{2.5}  
  \end{spacing}      
  
  \KwRet{$\bm{w}_k$}

\end{algorithm} 

\subsubsection{Steepest Gradient Ascent}\label{sec_steepest_gradient_ascent}

Steepest gradient ascent corresponds to the choice $\mathcal{M}(\bm{w}) = I_{n}$, where $I_{n}$ denotes the $n \times n$ identity matrix so that the parameter update takes the form:
\begin{framed}
\noindent\underline{Policy search update using steepest ascent}
\begin{equation}
\bm{w}^{\textnormal{new}} =  \bm{w} + \alpha \nabla_{\bm{w}} U(\bm{w}). \label{steepest_gradient_step}
\end{equation}
\end{framed}
\noindent The gradient $\nabla_{\bm{w}} U(\bm{w})$ can be written in a relatively simple form using the following theorem \citep{sutton-etal-00}:

\begin{Theorem}[Policy Gradient Theorem \citep{sutton-etal-00}]\label{theorem_policy_gradient_theorem}
Suppose we are given a Markov Decision Process with objective (\ref{ps_objectiveFunction}) and Markovian trajectory distribution (\ref{trajectory_prob_parametric}). For any given parameter vector, $\bm{w} \in \mathcal{W}$, the gradient of (\ref{ps_objectiveFunction}) takes the form
\begin{equation}
\nabla_{\bm{w}} U(\bm{w}) = \sum_{s \in \myset{S}} \sum_{a \in \myset{A}} p_{\gamma}(s,a; \bm{w}) Q(s,a;\bm{w}) \nabla_{\bm{w}} \log \pi(a|s; \bm{w}). \label{pg_gradient}
\end{equation}
\begin{proof}
This is a well-known result that can be found in \citet{sutton-etal-00}. A derivation of (\ref{pg_gradient}) is provided in Section~\ref{app:pol_grad_theorem} in the Appendix.
\end{proof}
\end{Theorem}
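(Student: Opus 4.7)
The plan is to derive the policy gradient from the Bellman fixed-point equation (\ref{sValFnFixedPoint}) by differentiating the recursion for $V$ and then unrolling the result using the discount factor. Since $U(\bm{w}) = \sum_{s \in \mathcal{S}} D(s) V(s;\bm{w})$, it suffices to compute $\nabla_{\bm{w}} V(s;\bm{w})$ for each $s$ and then average.

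Writing the Bellman equation as $V(s;\bm{w}) = \sum_{a} \pi(a|s;\bm{w}) \left[R(s,a) + \gamma \sum_{s'} P(s'|s,a) V(s';\bm{w})\right]$ and differentiating, the bracketed expression collapses into $Q(s,a;\bm{w})$ by definition (\ref{saValFn}), yielding
\[
\nabla_{\bm{w}} V(s;\bm{w}) = \sum_{a} \nabla_{\bm{w}} \pi(a|s;\bm{w})\, Q(s,a;\bm{w}) + \gamma \sum_{a} \pi(a|s;\bm{w}) \sum_{s'} P(s'|s,a)\, \nabla_{\bm{w}} V(s';\bm{w}).
\]
The log-derivative identity $\nabla_{\bm{w}} \pi = \pi \nabla_{\bm{w}} \log \pi$ rewrites the first term as $\sum_{a} \pi(a|s;\bm{w}) Q(s,a;\bm{w}) \nabla_{\bm{w}} \log \pi(a|s;\bm{w})$, producing a clean linear recursion for $\nabla_{\bm{w}} V(s;\bm{w})$ driven by the $\gamma$-discounted Markov transition kernel $P_{\pi}(s'|s) = \sum_{a} \pi(a|s;\bm{w}) P(s'|s,a)$.

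Next I would iterate this recursion. Setting $g(s) = \sum_{a} \pi(a|s;\bm{w}) Q(s,a;\bm{w}) \nabla_{\bm{w}} \log \pi(a|s;\bm{w})$ and denoting the $(t-1)$-step kernel of $P_{\pi}$ by $P_{\pi}^{t-1}$, the unrolled expression is $\nabla_{\bm{w}} V(s;\bm{w}) = \sum_{t=1}^{\infty} \gamma^{t-1} \sum_{s'} P_{\pi}^{t-1}(s'|s) g(s')$. Averaging over $s \sim D$ and interchanging sums gives $\nabla_{\bm{w}} U(\bm{w}) = \sum_{t=1}^{\infty} \gamma^{t-1} \sum_{s,a} p_t(s,a;\bm{w}) Q(s,a;\bm{w}) \nabla_{\bm{w}} \log \pi(a|s;\bm{w})$, since the measure $\sum_{s_1} D(s_1) P_{\pi}^{t-1}(\cdot|s_1)$ weighted by $\pi(a|\cdot;\bm{w})$ is exactly the marginal $p_t(\cdot,\cdot;\bm{w})$. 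Recognising (\ref{gamma_state_dist}) as the discounted state-action occupancy then yields the claimed formula.

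The main obstacle is bookkeeping rather than mathematical depth: one has to justify the interchange of differentiation with the infinite sum defining $V$, and the interchange of sums used to pass from the per-step form to the occupancy form. Both are routine given that $R$ is bounded and $\gamma < 1$, which forces $V$ to be uniformly bounded by $R_{\textnormal{max}}/(1-\gamma)$ and makes every series involved geometrically convergent; a cleaner alternative is to prove the identity on a finite horizon $H$ by induction and then let $H \to \infty$. One should also adopt the convention that terms with $\pi(a|s;\bm{w}) = 0$ in the log-derivative step are read as the original $\nabla_{\bm{w}} \pi(a|s;\bm{w})$, which then vanishes harmlessly against the zero weight in the surrounding sum.
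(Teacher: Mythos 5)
Your proof is correct, but it takes a genuinely different route from the one in the paper. You differentiate the Bellman fixed-point equation (\ref{sValFnFixedPoint}) to obtain a linear recursion for $\nabla_{\bm{w}} V(s;\bm{w})$ driven by the policy-averaged kernel $P_\pi$, and then unroll that recursion into the discounted occupancy $p_\gamma$. The paper instead starts from the explicit trajectory-sum representation $V(s;\bm{w}) = \sum_{t} \sum_{s_{1:t},a_{1:t}} \gamma^{t-1} p(s_{1:t},a_{1:t}|s_1=s;\bm{w}) R(s_t,a_t)$, applies the likelihood-ratio identity $\nabla_{\bm{w}} \log p(s_{1:t},a_{1:t}|s_1=s;\bm{w}) = \sum_{\tau=1}^t \nabla_{\bm{w}} \log \pi(a_\tau|s_\tau;\bm{w})$, and swaps the order of the $t$ and $\tau$ summations so that the inner sum collapses to $Q(s_\tau,a_\tau;\bm{w})$ (Lemma~\ref{ValueGradientLemma}). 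Both are standard and both require the same routine justification of interchanging differentiation and summation, which you correctly flag. Your Bellman-recursion argument is the more elementary of the two for the gradient alone, and the geometric convergence coming from $\gamma<1$ makes the limit interchanges transparent. The paper's trajectory-based route pays off later: the same machinery, applied to second derivatives of $\log p$, yields the Hessian decomposition of Theorem~\ref{theorem_policy_hessian_theorem} and the companion identity for $\nabla_{\bm{w}} Q(s,a;\bm{w})$ needed for $\mathcal{H}_{12}$, whereas differentiating the Bellman recursion twice produces cross terms that are considerably messier to organise.
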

It is not possible to calculate the gradient exactly for most real-world MDPs of interest. For instance, in discrete domains the size of the state-action space may be too large for enumeration over these sets to be feasible. Alternatively, in continuous domains the presence of non-linearities in the transition dynamics makes the calculation of the occupancy marginals an intractable problem. Various techniques have been proposed in the literature to estimate the gradient, including the method of finite-differences \citep{kiefer-1952,kohl-2004,Tedrake-2005}, simultaneous perturbation methods \citep{spall-1992,spall-1998,srinivasan-2006} and likelihood-ratio methods \citep{glynn-86, glynn-90, williams92, baxter-etal-2001,Konda:2003:AA:942271.942292,DBLP:conf/nips/KondaT99,sutton-etal-00,bhatnagar_2009,Kober-peters-2011}. Likelihood-ratio methods, which originated in the statistics literature and were later applied to MDPs, are now the prominent method for estimating the gradient. There are numerous such methods in the literature, including Monte-Carlo methods \citep{williams92,baxter-etal-2001} and actor-critic methods \citep{Konda:2003:AA:942271.942292,DBLP:conf/nips/KondaT99,sutton-etal-00,bhatnagar_2009,Kober-peters-2011}.

Steepest gradient ascent is known to perform poorly on objective functions that are poorly-scaled, that is, if changes to some parameters produce much larger variations to the function than changes in other parameters. In this case steepest gradient ascent zig-zags along the ridges of the objective in the parameter space \citep[see e.g.,][]{nocedal-2006}. It can be extremely difficult to gauge an appropriate scale for these steps sizes in poorly-scaled problems and the robustness of optimization algorithms to poor scaling is of significant practical importance in reinforcement learning since line search procedures to find a suitable step size are often impractical.

\subsubsection{Natural Gradient Ascent}\label{pg_natural_gradients_section}

Natural gradient ascent techniques originated in the neural network and blind source separation literature \citep{amari-1996,amari-1998,amari-1996b,amari-etal-1992}, and were introduced into the policy search literature in \citet{Kakade-2002}. To address the issue of poor scaling, natural gradient methods take the perspective that the parameter space should be viewed with a manifold structure in which distance between points on the manifold captures discrepancy between the models induced by different parameter vectors. In natural gradient ascent $\mathcal{M}(\bm{w}) = G^{-1}(\bm{w})$ in (\ref{general_gradient_step}), with $G(\bm{w})$ denoting the Fisher information matrix, so that the parameter update takes the form
\begin{framed}
\noindent\underline{Policy search update using natural gradient ascent}
\begin{equation}
\bm{w}^\textnormal{new} = \bm{w} + \alpha G^{-1}(\bm{w}) \nabla_{\bm{w}} U(\bm{w}). \label{natural_gradients_step}
\end{equation}
\end{framed}
\noindent In the case of Markov decision processes the Fisher information matrix takes the form,
\begin{equation}
G(\bm{w}) = - \sum_{s \in \mathcal{S}} \sum_{a \in \mathcal{A}} p_{\gamma}(s,a; \bm{w}) \nabla_{\bm{w}} \nabla_{\bm{w}}^\top \log \pi(a| s; \bm{w}), \label{fisher_information}
\end{equation}
which can then be viewed as a imposing a local norm on the parameter space which is second order approximation to the KL-divergence between induced policy distributions. When the trajectory distribution satisfies the Fisher regularity conditions \citep{Lehmann-book} there is an alternate, equivalent, form of the Fisher information matrix given by  
\begin{equation}
G(\bm{w}) = \sum_{s \in \mathcal{S}} \sum_{a \in \mathcal{A}} p_{\gamma}(s,a; \bm{w}) \nabla_{\bm{w}} \log \pi(a| s; \bm{w}) \nabla_{\bm{w}}^\top \log \pi(a| s; \bm{w}) . \label{fisher_information2}
\end{equation}

There are several desirable properties of the natural gradient approach: the Fisher information matrix is always positive-definite, regardless of the policy parametrization; The search direction is invariant to the parametrization of the policy, \citep{Bagnell-2003,peters-NAC_2008}. Additionally, when using a compatible function approximator \citep{sutton-etal-00} within an actor-critic framework, then the optimal critic parameters coincide with the natural gradient. Furthermore, natural gradient ascent has been shown to perform well in some difficult MDP environments, including Tetris \citep{Kakade-2002} and several challenging robotics problems \citep{peters-NAC_2008}.  However, theoretically, the rate of convergence of natural gradient ascent is the same as steepest gradient ascent, i.e., linear, although, it has been noted to be substantially faster in practice. 

\subsubsection{Expectation Maximization}\label{pg_em_construct}

An alternative optimization procedure that has been the focus of much research in the planning and reinforcement learning communities is the EM-algorithm \citep{Dayan-Hinton-EM-RL-97, toussaint-etal-06, toussaint-etal-2011, Kober-Peters-08, Kober-peters-2011, hoffman-etal-09, furmston-barber, furmston-barber-10}. The EM-algorithm is a powerful optimization technique popular in the statistics and machine learning community \citep[see e.g.,][]{dempster-1977, little-em-book, neal-hinton-em-variants} that has been successfully applied to a large number of problems. See \citet{barber-book-11} for a general overview of some of the applications of the algorithm in the machine learning literature. Among the strengths of the algorithm are its guarantee of increasing the objective function at each iteration, its often simple update equations and its generalization to highly intractable models through variational Bayes approximations \citep{saul-etal-1996}. 

Given the advantages of the EM-algorithm it is natural to extend the algorithm to the MDP framework. Several derivations of the EM-algorithm for MDPs exist \citep{Kober-peters-2011,toussaint-etal-2011}. For reference we state the lower-bound upon which the algorithm is based in the following theorem.

\begin{Theorem}\label{theorem_em_lower_bound}
Suppose we are given a Markov Decision Process with objective (\ref{ps_objectiveFunction}) and Markovian trajectory distribution (\ref{trajectory_prob_parametric}). Given any distribution, $q$, over the space of trajectories, $\Xi$, then the following bound holds, 
\begin{equation}
\log U(\bm{w}) \ge H_{\textnormal{entropy}}(q(\xi)) + \expectation{\xi \sim q(\cdot)} \bigg[ \log \big( p(\xi; \bm{w}) R(\xi) \big) \bigg], \quad \quad \forall \bm{w} \in \mathcal{W}, \label{pg_MDP_bound}
\end{equation}
in which $H_{\textnormal{entropy}}$ denotes the entropy function \citep{barber-book-11}.
\begin{proof}
The proof is based on an application of Jensen's inequality and can be found in \citet{Kober-peters-2011}. 
\end{proof}
\end{Theorem}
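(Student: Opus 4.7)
The natural route is to interpret the objective as an expectation over trajectories and then apply Jensen's inequality to the concave function $\log$. Concretely, I would first rewrite
\[
U(\bm{w}) \;=\; \sum_{\xi \in \Xi} p(\xi;\bm{w}) R(\xi)
\]
using the trajectory formulation introduced earlier (so that $R(\xi)=\sum_{t=1}^\infty \gamma^{t-1} R(s_t,a_t)$ and $p(\xi;\bm{w})$ is the parametric trajectory distribution from~(\ref{trajectory_prob_parametric})). The equivalence with the form~(\ref{ps_objectiveFunction}) follows from exchanging sums and recognising marginals.

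Next, I would introduce the variational distribution $q$ over $\Xi$ by multiplying and dividing by $q(\xi)$, giving
\[
U(\bm{w}) \;=\; \sum_{\xi\in\Xi} q(\xi)\,\frac{p(\xi;\bm{w}) R(\xi)}{q(\xi)} \;=\; \mathbb{E}_{\xi\sim q(\cdot)}\!\left[\frac{p(\xi;\bm{w}) R(\xi)}{q(\xi)}\right].
\]
Because $R$ is non-negative and $p(\xi;\bm{w})\ge 0$, the quantity being averaged is non-negative, so $\log$ is well-defined (with the usual convention $0\log 0 = 0$, and restricting $q$'s support to the set where $p(\xi;\bm{w})R(\xi)>0$ to avoid $-\infty$). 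Applying Jensen's inequality for the concave function $\log$ then yields
\[
\log U(\bm{w}) \;\ge\; \mathbb{E}_{\xi\sim q(\cdot)}\!\left[\log\frac{p(\xi;\bm{w}) R(\xi)}{q(\xi)}\right].
\]

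Finally, splitting the logarithm inside the expectation gives
\[
\log U(\bm{w}) \;\ge\; \mathbb{E}_{\xi\sim q(\cdot)}\bigl[\log\bigl(p(\xi;\bm{w}) R(\xi)\bigr)\bigr] \;-\; \mathbb{E}_{\xi\sim q(\cdot)}[\log q(\xi)],
\]
and identifying $-\mathbb{E}_{q}[\log q(\xi)] = H_{\textnormal{entropy}}(q(\xi))$ delivers the stated bound~(\ref{pg_MDP_bound}). There is no real obstacle here: the only subtlety is the support condition on $q$ (needed so the $\log$ term is finite), which is the standard requirement underlying the EM construction and can be assumed without loss of generality since $q$ is arbitrary. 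Everything else reduces to the standard variational lower bound obtained by treating $p(\xi;\bm{w}) R(\xi)$ as an unnormalised joint distribution whose normaliser is precisely $U(\bm{w})$.
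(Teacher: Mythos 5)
Your proposal is correct and follows exactly the route the paper indicates: the paper itself does not write out the argument but states that the proof "is based on an application of Jensen's inequality" and defers to \citet{Kober-peters-2011}, and your derivation is precisely that standard Jensen/variational argument (with the support condition on $q$ handled appropriately). No further comment is needed.
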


The distribution, $q$, in Theorem \ref{theorem_em_lower_bound} is often referred to as the variational distribution. An EM-algorithm is obtained through coordinate-wise optimization of (\ref{pg_MDP_bound}) with respect to the variational distribution (the E-step) and the policy parameters (the M-step). In the E-step the lower-bound is optimized when $q(\xi) \propto p(\xi; \bm{w}') R(\xi)$, in which $\bm{w}'$ are the current policy parameters. In the M-step the lower-bound is optimized with respect to $\bm{w}$, which, given $q(\xi) \propto p(\xi; \bm{w}') R(\xi)$ and the Markovian structure of $\log p(\xi; \bm{w})$, is equivalent to optimizing the function,
\begin{equation}
\mathcal{Q}(\bm{w},\bm{w}') = \sum_{(s,a) \in \mathcal{S} \times \mathcal{A}} p_{\gamma}(s,a;\bm{w}') Q(s,a;\bm{w}') \bigg[ \log \pi(a|s; \bm{w}) \bigg], \label{pg_general_em}
\end{equation}
with respect to the first parameter, $\bm{w}$. The E-step and M-step are iterated in this manner until the policy parameters converge to a local optimum of the objective function.

\section{The Hessian of Markov Decision Processes}\label{sec:analysisOfHessian}

As noted in Section~\ref{sec:intro}, the Newton method suffers from issues that often make its application to MDPs 
unattractive in practice. As a result there has been comparatively little research into the Newton method in the 
policy search literature. However, the Newton method has significant attractive properties, such as affine invariance of the policy parametrization and a quadratic rate of convergence. It is of interest, therefore, to consider whether one can construct an efficient Gauss-Newton type method for MDPs, in which the positive aspects of the Newton method are maintained and the negative aspects are alleviated. 
To this end, in this section we provide an analysis of the Hessian of a MDP. This analysis will 
then be used in Section~\ref{sec_apxn_method} to propose Gauss-Newton type methods for MDPs.

In Section~\ref{sec:policyHessianTheorem} we provide a novel representation of the Hessian of a MDP, in Section~\ref{sec:policyHessianDefiniteProperties}
we detail the definiteness properties of certain terms in the Hessian and in 
Section~\ref{hessian_analysis_section_local_optimum} we analyse the behaviour 
of individual terms of the Hessian in the vicinity of a local optimum.

\subsection{The Policy Hessian Theorem}\label{sec:policyHessianTheorem}

There is a standard expansion of the Hessian of a MDP in the policy search literature \citep{baxter-etal-2001,Kakade-2001,Kakade-2002} 
that, as with the gradient, takes a relatively simple form. This is summarized in the following result. 

\begin{Theorem}[Policy Hessian Theorem]\label{theorem_policy_hessian_theorem}
Suppose we are given a Markov Decision Process with objective (\ref{ps_objectiveFunction}) and Markovian trajectory distribution (\ref{trajectory_prob_parametric}). For any given parameter vector, $\bm{w} \in \mathcal{W}$, the Hessian of (\ref{ps_objectiveFunction}) takes the form
\begin{equation}
\mathcal{H} (\bm{w}) = \mathcal{H}_1(\bm{w}) + \mathcal{H}_2(\bm{w}) + \mathcal{H}_{12}(\bm{w}) + \mathcal{H}_{12}^\top(\bm{w}), \label{pg_hessian}
\end{equation}
in which the matrices $\mathcal{H}_1(\bm{w})$, $\mathcal{H}_2(\bm{w})$ and $\mathcal{H}_{12}(\bm{w})$ can be written in the form
\begin{align}
\mathcal{H}_1(\bm{w}) &:= \sum_{s \in \myset{S}} \sum_{a \in \myset{A}} p_{\gamma}(s,a; \bm{w}) Q(s,a;\bm{w}) \nabla_{\bm{w}} \log \pi(a|s; \bm{w}) \nabla_{\bm{w}}^\top \log \pi(a|s; \bm{w}), \label{pg_hessian_H1} \\
\mathcal{H}_2(\bm{w}) &:= \sum_{s \in \myset{S}} \sum_{a \in \myset{A}} p_{\gamma}(s,a; \bm{w}) Q(s,a;\bm{w}) \nabla_{\bm{w}} \nabla_{\bm{w}}^\top \log \pi(a|s; \bm{w}), \label{pg_hessian_H2} \\
\mathcal{H}_{12}(\bm{w}) &:= \sum_{s \in \myset{S}} \sum_{a \in \myset{A}} p_\gamma(s,a;\bm{w}) \nabla_{\bm{w}} \log \pi(a|s; \bm{w}) \nabla_{\bm{w}}^\top Q(s,a;\bm{w}). \label{pg_hessian_H12}
\end{align}
\begin{proof}
A derivation for a sample-based estimator of the Hessian can be found in \citet{baxter-etal-2001}. For ease of reference a derivation of (\ref{pg_hessian}) is provided in Section~\ref{app:pol_grad_theorem} in the Appendix.
\end{proof}
\end{Theorem}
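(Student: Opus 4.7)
The plan is to differentiate twice inside the expectation and then collapse the resulting trajectory-level sum back to the occupancy-measure form by exploiting the Markov factorisation (\ref{trajectory_prob_parametric}) together with a score-function baseline argument. Starting from $U(\bm{w}) = \mathbb{E}_{\xi \sim p(\cdot;\bm{w})}[R(\xi)]$ with $R(\xi) = \sum_{\tau \geq 1}\gamma^{\tau-1}R(s_\tau,a_\tau)$, one application of the log-derivative identity gives $\nabla U(\bm{w}) = \mathbb{E}_\xi[R(\xi)\nabla \log p(\xi;\bm{w})]$, and a second application combined with the product rule yields
\begin{equation*}
\mathcal{H}(\bm{w}) = \mathbb{E}_\xi\bigl[R(\xi)\bigl(\nabla \log p(\xi;\bm{w})\nabla^\top \log p(\xi;\bm{w}) + \nabla\nabla^\top \log p(\xi;\bm{w})\bigr)\bigr].
\end{equation*}
The Markov factorisation implies $\log p(\xi;\bm{w}) = \sum_t \log \pi(a_t|s_t;\bm{w})$ up to terms independent of $\bm{w}$, so the outer product becomes a double sum $\sum_{t,t'}\nabla \log \pi_t\,\nabla^\top \log \pi_{t'}$ which I split into a diagonal piece ($t=t'$) and an off-diagonal piece ($t\neq t'$).

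For the diagonal part I would pair the score outer product with $\sum_t \nabla\nabla^\top \log \pi_t$ and use the identity $\nabla \log \pi_t\,\nabla^\top \log \pi_t + \nabla\nabla^\top \log \pi_t = \pi_t^{-1}\nabla\nabla^\top \pi_t$, which has zero conditional mean given $s_t$ since differentiating $\sum_a \pi(a|s;\bm{w}) \equiv 1$ twice yields $\sum_a \nabla\nabla^\top \pi(a|s;\bm{w}) = 0$. Expanding $R(\xi)$ and interchanging sums, conditioning on the sigma-algebra generated by $(s_{1:t},a_{1:t-1},s_t)$ annihilates every reward term with $\tau < t$, while the remaining tail collapses by the Markov property to $\sum_{\tau \geq t}\gamma^{\tau-t}\mathbb{E}[R(s_\tau,a_\tau)\mid s_t,a_t] = Q(s_t,a_t;\bm{w})$ as in (\ref{saValFn}). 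Summing over $t$ against $\gamma^{t-1}p_t(s,a;\bm{w})$ then assembles the occupancy measure $p_\gamma$ from (\ref{gamma_state_dist}) and produces exactly $\mathcal{H}_1(\bm{w}) + \mathcal{H}_2(\bm{w})$.

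For the off-diagonal part I would treat $t < t'$ and invoke symmetry to cover $t > t'$. Conditioning on the sigma-algebra up to and including $s_{t'}$, the zero-mean identity $\mathbb{E}_{a_{t'}\mid s_{t'}}[\nabla \log \pi_{t'}] = 0$ kills every reward with $\tau < t'$, while the tail $\tau \geq t'$ collapses to $\gamma^{t'-1}\mathbb{E}[\nabla \log \pi_t\,\nabla^\top \log \pi_{t'}\,Q(s_{t'},a_{t'};\bm{w})]$. Summing over $t' > t$ and then conditioning on $(s_t,a_t)$ produces $\gamma^{t-1}\mathbb{E}[\nabla \log \pi_t\,\nabla^\top Q(s_t,a_t;\bm{w})]$, where I use the trajectory-level expansion $\nabla Q(s_t,a_t;\bm{w}) = \sum_{\tau > t}\gamma^{\tau-t}\mathbb{E}[\nabla \log \pi(a_\tau|s_\tau;\bm{w})\,Q(s_\tau,a_\tau;\bm{w}) \mid s_t,a_t]$ obtained by recursively differentiating (\ref{saValFn}). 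A final sum over $t$ against $\gamma^{t-1}$ and the occupancy measure then gives $\mathcal{H}_{12}(\bm{w})$, with the symmetric case contributing $\mathcal{H}_{12}^\top(\bm{w})$.

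The main obstacle is the bookkeeping required to isolate each reward regime and to juggle discount factors, conditioning sigma-algebras and the two score-function identities ($\pi^{-1}\nabla \pi$ and $\pi^{-1}\nabla\nabla^\top \pi$ both have zero conditional mean given $s$) without double-counting the $t = t'$ contribution; in particular one must be careful that the recursive expansion of $\nabla Q$ is not itself used before the corresponding sum over $t'$ has been reduced via conditioning. Once this accounting is in place, identifying the three accumulated sums with $\mathcal{H}_1$, $\mathcal{H}_2$ and $\mathcal{H}_{12} + \mathcal{H}_{12}^\top$ is a routine algebraic step.
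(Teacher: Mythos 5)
Your proposal is correct and follows essentially the same route as the paper's proof: differentiate the log-likelihood twice, split the score outer product into its diagonal and off-diagonal parts, collapse the reward tail to $Q$ to obtain $\mathcal{H}_1+\mathcal{H}_2$, and identify the cross terms with $\mathcal{H}_{12}+\mathcal{H}_{12}^\top$ via the trajectory expansion of $\nabla_{\bm{w}}Q$ (the paper's Lemma~\ref{ValueGradientLemma}). The only difference is organizational: you work with the full infinite-trajectory score $\nabla\log p(\xi;\bm{w})$ and must kill the non-causal terms (scores occurring after the reward) by zero-conditional-mean arguments --- which is why your grouping $\nabla\log\pi\,\nabla^\top\log\pi+\nabla\nabla^\top\log\pi=\pi^{-1}\nabla\nabla^\top\pi$ is actually needed on the diagonal part, since neither summand alone has zero conditional mean given $s_t$ --- whereas the paper writes the objective as $\sum_t\gamma^{t-1}\mathbb{E}_{p(s_{1:t},a_{1:t};\bm{w})}[R(s_t,a_t)]$ so that only scores at times $\tau\le t$ ever appear and the regrouping reduces to interchanging orders of summation.
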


We remark that $\mathcal{H}_1(\bm{w})$ and $\mathcal{H}_2(\bm{w})$ are relatively simple to estimate, in the same manner as estimating the policy gradient. The term $\mathcal{H}_{12}(\bm{w})$ is more difficult to estimate since it contains terms involving the unknown gradient $\nabla_{\bm{w}}^\top Q(s,a;\bm{w})$ and removing this dependence would result in a double sum over state-actions.

Below we will present a novel form for the Hessian of a MDP, with attention given to the term $\mathcal{H}_1(\bm{w}) + \mathcal{H}_2(\bm{w})$ in (\ref{pg_hessian}), which will require the following notion of parametrization with constant curvature.
\begin{Definition}\label{defn_const_curv}
A policy parametrization is said to have constant curvature with respect to the action space, 
if for each $(s,a) \in \mathcal{S} \times \mathcal{A}$ the Hessian of the log-policy, 
$\nabla_{\bm{w}} \nabla^\top_{\bm{w}} \log \pi(a|s;\bm{w})$, does not depend upon the action, i.e.,
\begin{equation*}
\nabla_{\bm{w}} \nabla^\top_{\bm{w}} \log \pi(a|s;\bm{w}) = \nabla_{\bm{w}} \nabla^\top_{\bm{w}} \log \pi(a'|s;\bm{w}), \quad \quad \forall a, a' \in \mathcal{A}.
\end{equation*}
When a policy parametrization satisfies this property the notation, $\nabla_{\bm{w}} \nabla^\top_{\bm{w}} \log \pi(s;\bm{w})$, is used to denote $\nabla_{\bm{w}} \nabla^\top_{\bm{w}} \log \pi(a|s;\bm{w})$, for each $a \in \mathcal{A}$.
\end{Definition}

A common class of policy which satisfies the property of Definition~\ref{defn_const_curv} is, 
$\pi(a|s;\bm{w}) \propto \exp( \bm{w}^\top \bm{\phi} (a, s))$, in which $\bm{\phi} (a, s)$ 
is a vector of features that depends on the state-action pair, $(a,s) \in \mathcal{A} \times \mathcal{S}$. 
Under this parametrization, 
\begin{equation*}
\nabla_{\bm{w}} \nabla^\top_{\bm{w}} \log \pi(a|s;\bm{w}) = - \textnormal{Cov}_{a' \sim \pi(\cdot|s;\bm{w})} \big( \bm{\phi} (a', s), \bm{\phi} (a', s) \big),
\end{equation*}
which does not depend on, $a \in \mathcal{A}$. In the case when the action space is continuous, then the policy 
parametrization $\pi(a|s; \bm{w}; \Sigma) \propto \exp \big( - \frac{1}{2} (a - \bm{w}^\top \bm{\phi}(s))^\top \Sigma^{-1} (a - \bm{w}^\top \bm{\phi}(s)) \big)$,
in which $\bm{\phi}: \mathcal{S} \to \mathbb{R}^n$ is a given feature map, satisfies the properties of Definition~\ref{defn_const_curv} with 
respect to the mean parameters, $\bm{w} \in \mathcal{W}$.

We now present a novel decomposition of the Hessian for Markov decision processes.

\begin{Theorem}\label{corollary_policy_hessian_theorem}
Suppose we are given a Markov Decision Process with objective (\ref{ps_objectiveFunction}) and Markovian trajectory distribution (\ref{trajectory_prob_parametric}). For any given parameter vector, $\bm{w} \in \mathcal{W}$, the Hessian of (\ref{ps_objectiveFunction}) takes the form
\begin{equation}
\mathcal{H} (\bm{w}) = \mathcal{A}_1(\bm{w}) + \mathcal{A}_2(\bm{w}) + \mathcal{H}_{12}(\bm{w}) + \mathcal{H}_{12}^\top(\bm{w}). \label{pg_hessian2a}
\end{equation}
Where,
\begin{align}
\mathcal{A}_1(\bm{w}) &:= \sum_{(s,a) \in \mathcal{S} \times \mathcal{A}} p_\gamma(s,a;\bm{w}) A(s,a;\bm{w}) \nabla_{\bm{w}} \log \pi(a|s;\bm{w}) \nabla^\top_{\bm{w}} \log \pi(a|s;\bm{w}) \nn \\
\mathcal{A}_2(\bm{w}) &:= \sum_{(s,a) \in \mathcal{S} \times \mathcal{A}} p_\gamma(s,a;\bm{w}) A(s,a;\bm{w}) \nabla_{\bm{w}}  \nabla^\top_{\bm{w}} \log \pi(a|s;\bm{w}). \nn
\end{align}
When the curvature of the $\log$-policy is independent of the action, then the Hessian takes the form
\begin{equation}
\mathcal{H} (\bm{w}) = \mathcal{A}_1(\bm{w}) + \mathcal{H}_{12}(\bm{w}) + \mathcal{H}_{12}^\top(\bm{w}). \label{pg_hessian2b}
\end{equation}
\end{Theorem}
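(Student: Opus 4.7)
My plan is to start from the expansion of the Hessian given by the Policy Hessian Theorem (Theorem~\ref{theorem_policy_hessian_theorem}) and manipulate the $\mathcal{H}_1(\bm{w}) + \mathcal{H}_2(\bm{w})$ terms, since $\mathcal{H}_{12}(\bm{w}) + \mathcal{H}_{12}^\top(\bm{w})$ already appears in the target form. Substituting $Q(s,a;\bm{w}) = A(s,a;\bm{w}) + V(s;\bm{w})$ into the definitions \eqref{pg_hessian_H1}--\eqref{pg_hessian_H2} splits each term into the corresponding $\mathcal{A}_i(\bm{w})$ piece plus a ``residual'' piece involving $V(s;\bm{w})$. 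So to establish \eqref{pg_hessian2a} it suffices to show that these two residual pieces cancel.

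To see this, I would use the Markovian factorization $p_\gamma(s,a;\bm{w}) = p_\gamma(s;\bm{w})\pi(a|s;\bm{w})$ (which follows immediately from definition~\eqref{gamma_state_dist} and $p_t(s,a;\bm{w}) = p_t(s;\bm{w})\pi(a|s;\bm{w})$) to pull $V(s;\bm{w})$ and $p_\gamma(s;\bm{w})$ outside the sum over actions. The residual then reduces to
\begin{equation*}
\sum_{s \in \mathcal{S}} p_\gamma(s;\bm{w})\, V(s;\bm{w}) \sum_{a \in \mathcal{A}} \pi(a|s;\bm{w}) \Bigl[ \nabla_{\bm{w}} \log \pi(a|s;\bm{w})\,\nabla^\top_{\bm{w}} \log \pi(a|s;\bm{w}) + \nabla_{\bm{w}} \nabla^\top_{\bm{w}} \log \pi(a|s;\bm{w}) \Bigr].
\end{equation*}
The inner bracket vanishes identically for every $s$: differentiating the normalization $\sum_a \pi(a|s;\bm{w}) = 1$ twice gives $\sum_a \nabla_{\bm{w}} \nabla^\top_{\bm{w}} \pi(a|s;\bm{w}) = 0$, and applying the log-derivative identity $\nabla_{\bm{w}} \nabla^\top_{\bm{w}} \pi = \pi\bigl(\nabla_{\bm{w}}\log\pi\,\nabla^\top_{\bm{w}}\log\pi + \nabla_{\bm{w}}\nabla^\top_{\bm{w}}\log\pi\bigr)$ yields exactly the bracket above. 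Hence the residuals cancel and \eqref{pg_hessian2a} follows.

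For the second claim \eqref{pg_hessian2b}, I need to show $\mathcal{A}_2(\bm{w}) = 0$ whenever the log-policy has action-independent curvature as in Definition~\ref{defn_const_curv}. Under that hypothesis I can write $\nabla_{\bm{w}} \nabla^\top_{\bm{w}} \log \pi(a|s;\bm{w}) = \nabla_{\bm{w}} \nabla^\top_{\bm{w}} \log \pi(s;\bm{w})$, pull it outside the sum over $a$, again use $p_\gamma(s,a;\bm{w}) = p_\gamma(s;\bm{w})\pi(a|s;\bm{w})$, and invoke the stated property $\sum_{a \in \mathcal{A}} \pi(a|s;\bm{w}) A(s,a;\bm{w}) = 0$ to kill the inner sum.

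The main obstacle—or rather the key observation that makes the proof work—is that one cannot equate $\mathcal{H}_1 = \mathcal{A}_1$ and $\mathcal{H}_2 = \mathcal{A}_2$ termwise, since each residual on its own is nonzero; only their sum vanishes, via the second-derivative log-identity for the normalization of $\pi$. Once that identity is in hand, the rest of the argument is routine rearrangement.
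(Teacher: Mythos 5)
Your proposal is correct and follows essentially the same route as the paper: the paper likewise splits $\mathcal{H}_i = \mathcal{A}_i + \mathcal{V}_i$ via $Q = V + A$, cancels $\mathcal{V}_1 + \mathcal{V}_2$ using the Fisher-information identity $\sum_a \pi\,\nabla\log\pi\,\nabla^\top\log\pi = -\sum_a \pi\,\nabla\nabla^\top\log\pi$ (your twice-differentiated normalization), and kills $\mathcal{A}_2$ under constant curvature via $\sum_a \pi(a|s;\bm{w})A(s,a;\bm{w})=0$. The only cosmetic difference is that you derive the cancellation identity explicitly whereas the paper invokes it as a stated regularity condition.
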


\begin{proof} See Section~\ref{NovelHessianProof} in the Appendix. \end{proof}

We now present an analysis of the terms of the policy Hessian, simplifying the expansion and demonstrating conditions under which certain terms disappear. The analysis will be used to motivate our Gauss-Newton methods in Section~\ref{sec_apxn_method}.

\subsection{Analysis of the Policy Hessian -- Definiteness}\label{sec:policyHessianDefiniteProperties}

An interesting comparison can be made between the expansions (\ref{pg_hessian}) and (\ref{pg_hessian2a}, \ref{pg_hessian2b}) in terms of the definiteness properties of the component matrices. 
As the state-action value function is non-negative over the entire state-action space, it can be seen that $\mathcal{H}_1(\bm{w})$
is positive-definite for all $\bm{w} \in \mathcal{W}$. Similarly, it can be shown that under certain common policy 
parametrizations $\mathcal{H}_2(\bm{w})$ is negative-definite over the entire parameter space. This is summarized in the 
following theorem.

\begin{Theorem} \label{NegativeDefiniteLemma}
The matrix $\mathcal{H}_2(\bm{w})$ is negative-definite for all $\bm{w} \in \mathcal{W}$ if: 1) the policy is $\log$-concave with respect to the policy parameters; or 2) the policy parametrization has constant curvature with respect to the action space.
\end{Theorem}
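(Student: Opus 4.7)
The plan is to exhibit $\mathcal{H}_2(\bm{w})$ as a non-negative linear combination of negative semi-definite matrices under each hypothesis. Two pieces of non-negativity are available for free: $p_\gamma(s,a;\bm{w})\ge 0$ as a discounted occupancy weight, and $Q(s,a;\bm{w})\ge 0$ by (\ref{saValFn}) together with $R\ge 0$. Thus the definiteness of $\mathcal{H}_2(\bm{w})$ is controlled entirely by the log-policy Hessian $\nabla_{\bm{w}}\nabla^{\top}_{\bm{w}}\log\pi(a|s;\bm{w})$, either pointwise in $(s,a)$ or after aggregating over $a$.

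For hypothesis (1), log-concavity of $\pi(\cdot|s;\bm{w})$ in $\bm{w}$ is, by definition, the statement $\nabla_{\bm{w}}\nabla^{\top}_{\bm{w}}\log\pi(a|s;\bm{w})\preceq 0$ for every $(s,a)$. Since non-negative combinations preserve negative semi-definiteness, the defining sum (\ref{pg_hessian_H2}) gives $\mathcal{H}_2(\bm{w})\preceq 0$ directly; strict definiteness then follows from mild non-degeneracy (strict log-concavity together with a strictly positive coefficient for some $(s,a)$).

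For hypothesis (2), Definition~\ref{defn_const_curv} lets me factor the shared Hessian out of the action sum. Writing $p_\gamma(s,a;\bm{w})=p_\gamma(s;\bm{w})\pi(a|s;\bm{w})$ and using $\sum_a \pi(a|s;\bm{w})Q(s,a;\bm{w})=V(s;\bm{w})$ collapses (\ref{pg_hessian_H2}) to
\begin{equation*}
\mathcal{H}_2(\bm{w})=\sum_{s\in\mathcal{S}} p_\gamma(s;\bm{w})V(s;\bm{w})\,\nabla_{\bm{w}}\nabla^{\top}_{\bm{w}}\log\pi(s;\bm{w}).
\end{equation*}
Again the scalar weights are non-negative, so the remaining task is to show that the shared Hessian is itself negative semi-definite. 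The key observation is that constant curvature forces $\log\pi(a|s;\bm{w})-\log\pi(a'|s;\bm{w})$ to be affine in $\bm{w}$; hence $\pi(\cdot|s;\bm{w})$ is an exponential family $\pi(a|s;\bm{w})\propto\exp\bigl(\phi(a,s)^{\top}\bm{w}+\psi(a,s)\bigr)$, and the shared Hessian equals $-\nabla^2_{\bm{w}}A(s;\bm{w})$ where $A$ is the log-partition function. Convexity of log-partition functions — equivalently, the Hessian equals the negative covariance of $\phi(a',s)$ under $\pi(\cdot|s;\bm{w})$, as already noted in the example following Definition~\ref{defn_const_curv} — yields negative semi-definiteness.

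The hard part is not the sign but the strictness: both cases naturally produce $\preceq 0$, and upgrading to $\prec 0$ requires a non-degeneracy ingredient (strict log-concavity, or positive $p_\gamma$-mass on a state where $V(s;\bm{w})>0$ and the feature covariance is full rank). I would isolate this as a short lemma about when the weighted sum of negative semi-definite matrices is strictly negative definite, since the rest of the argument is essentially bookkeeping on Theorem~\ref{theorem_policy_hessian_theorem}.
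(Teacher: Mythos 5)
Your proposal is correct, and part (1) coincides with the paper's argument: under log-concavity, $\mathcal{H}_2(\bm{w})$ is a non-negative mixture (weights $p_\gamma(s,a;\bm{w})Q(s,a;\bm{w}) \ge 0$, using non-negativity of the reward) of negative semi-definite matrices. For part (2) you take a genuinely different route to the same intermediate object. The paper works through its decomposition $\mathcal{H}_2 = \mathcal{A}_2 + \mathcal{V}_2$ from Theorem~\ref{corollary_policy_hessian_theorem}: constant curvature kills $\mathcal{A}_2$ (Lemma~\ref{Lemma_A2_equal_zero}, because $\sum_a \pi(a|s;\bm{w})A(s,a;\bm{w})=0$), and the Fisher regularity identity $\sum_a \pi\,\nabla\nabla^\top\log\pi = -\sum_a \pi\,\nabla\log\pi\,\nabla^\top\log\pi$ (Lemma~\ref{theorem_value_fisher_information_equiv_lemma}) turns $\mathcal{V}_2$ into $-\mathcal{V}_1$, a manifestly non-positive weighted sum of outer products. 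You instead collapse the action sum directly to $\sum_s p_\gamma(s;\bm{w})V(s;\bm{w})\nabla_{\bm{w}}\nabla^\top_{\bm{w}}\log\pi(s;\bm{w})$ --- the same matrix as the paper's $\mathcal{V}_2$ --- and then establish negative semi-definiteness of the shared Hessian structurally, by showing constant curvature forces an exponential-family form and invoking convexity of the log-partition function. The two justifications are ultimately the same fact (the shared Hessian equals minus the covariance of the score, equivalently of the sufficient statistic), but the paper's version needs only the Fisher identity, while yours additionally needs $\mathcal{W}$ connected so that a vanishing Hessian of $\log\pi(a|s;\bm{w})-\log\pi(a'|s;\bm{w})$ implies affineness; in exchange it yields the stronger structural conclusion that constant-curvature policies are exactly the Gibbs policies. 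One further point in your favour: you are explicit that both hypotheses only deliver negative \emph{semi}-definiteness without an added non-degeneracy condition, whereas the paper's proof silently passes from ``non-negative mixture of negative-definite matrices'' to strict definiteness; isolating that step as a lemma, as you propose, would make the claimed strict definiteness honest.
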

\begin{proof}
See Section~\ref{negative_definite_appendix} in the Appendix.
\end{proof}

It can be seen, therefore, that when the policy parametrization satisfies the properties of Theorem~\ref{NegativeDefiniteLemma} the expansion 
(\ref{pg_hessian}) gives $\mathcal{H}(\bm{w})$ in terms of a positive-definite term, $\mathcal{H}_1(\bm{w})$, a negative-definite 
term, $\mathcal{H}_2(\bm{w})$, and a remainder term, $\mathcal{H}_{12}(\bm{w}) + \mathcal{H}_{12}^\top(\bm{w})$, which we shall 
show, in Section~\ref{hessian_analysis_section_local_optimum}, becomes negligible around a local optimum when given a 
sufficiently rich policy parametrization. In contrast to the state-action value function, the advantage function takes both 
positive and negative values over the state-action space. As a result, the matrices $\mathcal{A}_1(\bm{w})$ and $\mathcal{A}_2(\bm{w})$ in (\ref{pg_hessian2a}, \ref{pg_hessian2b}) can be indefinite over parts of the parameter space.

\subsection{Analysis in Vicinity of a Local Optimum}\label{hessian_analysis_section_local_optimum}

In this section  we consider the term $\mathcal{H}_{12}(\bm{w}) + \mathcal{H}_{12}^\top(\bm{w})$, which is both difficult to estimate and not guaranteed to be negative definite. In particular, we shall consider the conditions under which these terms vanish at a local optimum. We start by noting that
\begin{align}
\mathcal{H}_{12}(\bm{w}) &= \sum_{(s,a) \in \mathcal{S} \times \mathcal{A}} p_\gamma(s, a; \bm{w}) \nabla_{\bm{w}} \log \pi(a|s; \bm{w}) \nabla_{\bm{w}}^\top\left(R(s,a) + \gamma \sum_{s'} p(s'|a,s) V(s';\bm{w})\right), \nonumber\\
&= \gamma \sum_{(s,a) \in \mathcal{S} \times \mathcal{A}} p_\gamma(s,a;\bm{w}) \nabla_{\bm{w}} \log \pi(a| s; \bm{w})   \sum_{s'} p(s'|a,s) \nabla_{\bm{w}}^\top V(s';\bm{w}). \label{H12_dwV_version}
\end{align}
This means that if  $\nabla_{\bm{w}} V(s';\bm{w}) = \bm{0}$, for all $s' \in \mathcal{S}$, then $\cH_{12}(\bm{w}) + \cH_{12}^\top(\bm{w})=\bm{0}$. It is sufficient, therefore, to require that $\nabla_{\bm{w}|\bm{w}=\bm{w}^*} V(s;\bm{w}) = \bm{0}$, for all $s\in\mathcal{S}$, at a local optimum $\bm{w}^*\in \mathcal{W}$. We therefore consider the situations in which this occurs.  We start by introducing the notion of a \textit{value consistent} policy class. This property of a policy class captures the idea that the policy class is rich enough such that changing a parameter to maximally improve the value in one state, does not worsen the value in another state. i.e., when a policy class is value consistent, there are no trade-offs between improving the value in different states.

\begin{Definition} \label{ValueConsistent}
A policy parametrization is said to be \textit{value consistent} w.r.t. a Markov decision process if whenever,
\begin{equation}
\bm{e}_i^\top \nabla_{\bm{w}} V(\hat{s}; \bm{w}) \ne 0, \label{definitionEquation1}
\end{equation}
for some $\hat{s} \in \mathcal{S}$, $\bm{w} \in \mathcal{W}$ and $i \in \mathbb{N}_n$, then $\forall s \in \mathcal{S}$ it holds that either
\begin{equation}
\textnormal{sign} \big(  \bm{e}_i^\top \nabla_{\bm{w}} V(s; \bm{w}) \big) = \textnormal{sign} \big( \bm{e}_i^\top \nabla_{\bm{w}} V(\hat{s}; \bm{w}) \big), \label{definitionEquation2}
\end{equation}
or
\begin{equation}
 \bm{e}_i^\top \nabla_{\bm{w}} V(s; \bm{w})  = 0. \label{definitionEquation3}
\end{equation}
Furthermore, for any state, $s \in \mathcal{S}$, for which (\ref{definitionEquation3}) holds it also holds that
\begin{equation*}
\bm{e}_i^\top \nabla_{\bm{w}} \pi(a|s; \bm{w})  = 0, \quad \quad \quad  \forall a \in \mathcal{A}.
\end{equation*}
The notation $\bm{e}_i$ is used to denote the standard basis vector of $\mathbb{R}^n$ in which the $i^{\textnormal{th}}$ component is equal to one, and all other components are equal to zero.
\end{Definition}
\begin{example}
\textnormal{To illustrate the concept of a value consistent policy parametrization we now consider two simple maze navigation MDPs, one with a value consistent policy parametrization, and one with a policy parametrization that is not value consistent. The two MDPs are displayed in Figure \ref{ToyExamples}. Walls of the maze are solid lines, while the dotted lines indicate state boundaries and are passable. The agent starts, with equal probability, in one of the states marked with an `S'. The agent receives a positive reward for reaching the goal state, which is marked with a `G', and is then reset to one of the start states. All other state-action pairs return a reward of zero. There are four possible actions (up, down, left, right) in each state, and the optimal policy is to move, with probability one, in the direction indicated by the arrow. We consider the policy parametrization, $\pi(a|s;\bm{w}) \propto \exp ( \bm{w}^\top \bm{\phi}(s') )$, where $s'$ denotes the successor state of state-action pair $(s, a)$ and $\bm{\phi}$ is a feature map. We consider the feature map $\bm{\phi}:\mathcal{S}\to \{0,1\}^4$ which indicates the presence of a wall on each of the four state boundaries. 
Perceptual aliasing \citep{whitehead-thesis} occurs in both MDPs under this policy parametrization, with states $2$, $3$ \& $4$ aliased in the hallway problem, and states 
$4$, $5$ \& $6$ aliased in McCallum's grid. In the hallway problem all of the aliased states have the same optimal action, and the value of these states all increase/decrease in unison. Hence, it can be seen that the policy parametrization is value consistent for the hallway problem. In McCallum's grid, however, the optimal action for states $4$ \& $6$ is to move upwards, while in state $5$ it is to move downwards. In this example increasing the probability of moving downwards in state $5$ will also increase the probability of moving downwards in states $4$ \& $6$. There is a point, therefore, at which increasing the probability of moving downwards in state $5$ will decrease the value of states $4$ \& $6$. Thus this policy parametrization is not value consistent for McCallum's grid.}
\end{example}

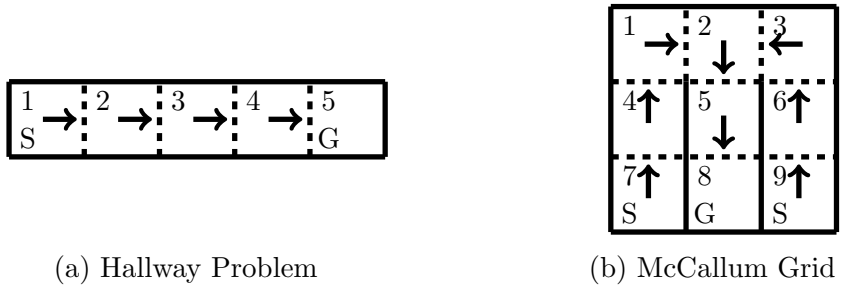
\begin{figure}[t]
\centering
\begin{tikzpicture}
%\draw[help lines] (0,0) grid (15,6);
\draw[line width=2pt] (1, 3) -> (6, 3);
\draw[line width=2pt] (1, 2) -> (6, 2);
\draw[line width=2pt] (1, 2) -> (1, 3);
\draw[line width=2pt,dashed] (2, 2) -> (2, 3);
\draw[line width=2pt,dashed] (3, 2) -> (3, 3);
\draw[line width=2pt,dashed] (4, 2) -> (4, 3);
\draw[line width=2pt,dashed] (5, 2) -> (5, 3);
\draw[line width=2pt] (6, 2) -> (6, 3);

\node at (1.25,2.25) {S};
\node at (5.25,2.25) {G};

\node at (1.25,2.75) {1};
\node at (2.25,2.75) {2};
\node at (3.25,2.75) {3};
\node at (4.25,2.75) {4};
\node at (5.25,2.75) {5};

\draw[line width=2pt,->] (1.45, 2.5) -> (1.9, 2.5);
\draw[line width=2pt,->] (2.45, 2.5) -> (2.9, 2.5);
\draw[line width=2pt,->] (3.45, 2.5) -> (3.9, 2.5);
\draw[line width=2pt,->] (4.45, 2.5) -> (4.9, 2.5);

\draw[line width=2pt] (12, 1) -> (12, 4);
\draw[line width=2pt] (9, 1) -> (9, 4);
\draw[line width=2pt] (9, 1) -> (12, 1);
\draw[line width=2pt] (9, 4) -> (12, 4);
\draw[line width=2pt] (10, 1) -> (10, 3);
\draw[line width=2pt] (11, 1) -> (11, 3);
\draw[line width=2pt,dashed] (10, 3) -> (10, 4);
\draw[line width=2pt,dashed] (11, 3) -> (11, 4);
\draw[line width=2pt,dashed] (9, 2) -> (12, 2);
\draw[line width=2pt,dashed] (9, 3) -> (12, 3);

\node at (9.25,1.25) {S};
\node at (10.25,1.25) {G};
\node at (11.25,1.25) {S};

\draw[line width=2pt,->] (9.5, 1.45) -> (9.5, 1.9);
\draw[line width=2pt,->] (9.5, 2.45) -> (9.5, 2.9);
\draw[line width=2pt,->] (11.5, 1.45) -> (11.5, 1.9);
\draw[line width=2pt,->] (11.5, 2.45) -> (11.5, 2.9);
\draw[line width=2pt,->] (10.5, 2.55) -> (10.5, 2.1);
\draw[line width=2pt,->] (10.5, 3.55) -> (10.5, 3.1);
\draw[line width=2pt,->] (9.45, 3.5) -> (9.9, 3.5);
\draw[line width=2pt,->] (11.55, 3.5) -> (11.1, 3.5);

\node at (9.25,3.75) {1};
\node at (10.25,3.75) {2};
\node at (11.25,3.75) {3};
\node at (9.25,2.75) {4};
\node at (10.25,2.75) {5};
\node at (11.25,2.75) {6};
\node at (9.25,1.75) {7};
\node at (10.25,1.75) {8};
\node at (11.25,1.75) {9};

\node at (3.35,0.5) {(a) Hallway Problem};
\node at (10.35,0.5) {(b) McCallum Grid};
\end{tikzpicture}
\caption{(a) The hallway problem. Under the feature map, $\bm{\phi}$, states 2, 3 and 4 map to the the same feature, and the optimal policy is identical on these states. (b) McCallum's grid. Under the feature map, $\bm{\phi}$, states 4, 5 and 6 map to the same feature, but now the optimal policy differs among these states.} \label{ToyExamples}
\end{figure}

We now show that tabular policies -- i.e., policies such that, for each state $s \in \mathcal{S}$, the conditional distribution $\pi(a|s;\bm{w}_{s})$ is parametrized by a separate parameter vector $\bm{w}_{s} \in \mathbb{R}^{n_s}$ for some $n_s \in \mathbb{N}$ -- are value consistent, regardless of the given Markov decision process.

\begin{Theorem} \label{TabNonDec}
Suppose that a given Markov decision process has a tabular policy parametrization, then the policy parametrization is value consistent.
\begin{proof}
See Section~\ref{ProofOfTabNonDec} in the Appendix.
\end{proof}
\end{Theorem}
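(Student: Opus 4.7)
The plan is to exploit the disjoint block structure of the tabular parametrization. Writing $\bm{w} = (\bm{w}_s)_{s \in \mathcal{S}}$, every standard basis vector $\bm{e}_i \in \mathbb{R}^n$ corresponds to a single coordinate inside exactly one block $\bm{w}_{s(i)}$, for some uniquely determined state $s(i) \in \mathcal{S}$. The first observation I would record is that, because $\pi(a|s;\bm{w})$ depends on $\bm{w}$ only through the block $\bm{w}_s$, we already have $\bm{e}_i^\top \nabla_{\bm{w}} \pi(a|s;\bm{w}) = 0$ for every $a \in \mathcal{A}$ whenever $s \neq s(i)$. This handles the ``extra'' requirement of Definition~\ref{ValueConsistent} for free, provided we can show that the vanishing-gradient states are exactly those with $s \neq s(i)$.

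Next I would differentiate the Bellman equation $V(s;\bm{w}) = \sum_{a} \pi(a|s;\bm{w}) Q(s,a;\bm{w})$ with respect to $w_i$. Using the block structure, the policy-derivative term survives only when $s = s(i)$, so
\begin{equation*}
\bm{e}_i^\top \nabla_{\bm{w}} V(s;\bm{w}) = \mathbb{I}[s = s(i)] \, c_i(\bm{w}) + \gamma \sum_{s' \in \mathcal{S}} P_{\pi}(s'|s) \, \bm{e}_i^\top \nabla_{\bm{w}} V(s';\bm{w}),
\end{equation*}
where $c_i(\bm{w}) := \sum_{a \in \mathcal{A}} \bigl( \bm{e}_i^\top \nabla_{\bm{w}} \pi(a|s(i);\bm{w}) \bigr) Q(s(i),a;\bm{w})$ is independent of $s$, and $P_{\pi}(s'|s) := \sum_a \pi(a|s;\bm{w}) P(s'|s,a)$ is the policy-induced transition kernel. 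Unrolling the recursion against this geometric operator yields the clean product form
\begin{equation*}
\bm{e}_i^\top \nabla_{\bm{w}} V(s;\bm{w}) = c_i(\bm{w}) \cdot \rho_{s \to s(i)}(\bm{w}), \qquad \rho_{s \to s(i)}(\bm{w}) := \sum_{t=1}^{\infty} \gamma^{t-1} \mathbb{P}\bigl( s_t = s(i) \,\big|\, s_1 = s; \bm{w} \bigr).
\end{equation*}
Two features of this factorization drive the rest of the argument: $c_i(\bm{w})$ does not depend on $s$, and the occupancy $\rho_{s \to s(i)}(\bm{w}) \geq 0$.

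Given the factorization, the conditions of Definition~\ref{ValueConsistent} follow by a sign check. If $\bm{e}_i^\top \nabla_{\bm{w}} V(\hat{s};\bm{w}) \neq 0$, then necessarily $c_i(\bm{w}) \neq 0$, and for any other state $s$ the sign of $\bm{e}_i^\top \nabla_{\bm{w}} V(s;\bm{w})$ either agrees with that of $c_i(\bm{w})$ -- which is the sign at $\hat{s}$, giving (\ref{definitionEquation2}) -- or the occupancy $\rho_{s \to s(i)}(\bm{w})$ vanishes and the derivative is zero, giving (\ref{definitionEquation3}). In the latter case, the $t=1$ term of $\rho_{s \to s(i)}(\bm{w})$ equals $\mathbb{I}[s = s(i)]$, so $\rho_{s \to s(i)}(\bm{w}) = 0$ forces $s \neq s(i)$; by the opening observation, $\bm{e}_i^\top \nabla_{\bm{w}} \pi(a|s;\bm{w}) = 0$ for every $a \in \mathcal{A}$, which is exactly the final clause of the definition.

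The only step requiring real work is the unrolling of the Bellman recursion into the product form $c_i(\bm{w}) \cdot \rho_{s \to s(i)}(\bm{w})$; this is a standard Neumann-series / matrix-geometric argument under $\gamma \in [0,1)$ on the operator $I - \gamma P_{\pi}$, and I expect no obstacles since the tabular assumption completely decouples the states in the parametrization. Everything else is sign analysis that trades on non-negativity of the discounted occupancy and the fact that $c_i(\bm{w})$ is a single state-local quantity common to all $s$.
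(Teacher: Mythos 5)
Your proof is correct and follows essentially the same route as the paper: differentiate the Bellman equation, use the block structure of the tabular parametrization to localize the policy-derivative term to the single state $s(i)$, unroll the recursion into a product of a non-negative discounted reachability factor with a state-independent quantity, and finish by a sign check plus the observation that unreachability forces $s \neq s(i)$ and hence a vanishing policy gradient. The only cosmetic difference is that the paper stops the unrolling at the first hitting time of $s(i)$ (leaving $\nabla_{\bm{w}_{s(i)}} V(s(i);\bm{w})$ as the common factor), whereas you unroll fully to the discounted occupancy times $c_i(\bm{w})$; the two factorizations are equivalent.
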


We now show that under a value consistent policy parametrization the terms $\mathcal{H}_{12}(\bm{w})$ and $\mathcal{H}_{12}^\top(\bm{w})$ vanish near local optima.
 
\begin{Theorem} \label{lem:nondecreasing}
Suppose that $\bm{w}^* \in \mathcal{W}$ is a local optimum of the differentiable objective function, $U(\bm{w}) = \mathbb{E}_{s\sim p_1(\cdot)} \big[ V(s;\bm{w}) \big]$. Suppose that the Markov chain induced by $\bm{w}^*$ is ergodic. Suppose that the policy parametrization is value consistent w.r.t. the given Markov decision process. Then $\bm{w}^*$ is a stationary point of $V(s;\bm{w})$ for all $s \in \mathcal{S}$, and $\mathcal{H}_{12}(\bm{w^*}) = \mathcal{H}_{12}^\top(\bm{w^*}) = {\bm 0}$
\begin{proof}
See Appendix~\ref{supp_hessian_analysis_section_local_optimum}
\end{proof}
\end{Theorem}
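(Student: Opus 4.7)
The plan is to first establish that the gradient of the value function vanishes at every state, i.e.\ $\nabla_{\bm{w}} V(s;\bm{w}^*) = \bm{0}$ for all $s \in \mathcal{S}$; substituting this into the representation (\ref{H12_dwV_version}) then immediately kills every summand and gives $\mathcal{H}_{12}(\bm{w}^*) = \mathcal{H}_{12}^\top(\bm{w}^*) = \bm{0}$. So the real work is to prove stationarity of $V(s;\bm{w})$ at $\bm{w}^*$ for every $s$.

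Since $U(\bm{w}) = \sum_s p_1(s) V(s;\bm{w})$, local optimality gives $\sum_s p_1(s) \nabla_{\bm{w}} V(s;\bm{w}^*) = \bm{0}$ componentwise. Fix a coordinate $i$ and argue by contradiction: suppose $\bm{e}_i^\top \nabla_{\bm{w}} V(\hat{s};\bm{w}^*) \neq 0$ for some $\hat{s} \in \mathcal{S}$. By value consistency, every $\bm{e}_i^\top \nabla_{\bm{w}} V(s;\bm{w}^*)$ either shares the sign of $\bm{e}_i^\top \nabla_{\bm{w}} V(\hat{s};\bm{w}^*)$ or is zero. Since a common-sign sum weighted by the non-negative $p_1$ vanishes, every term with $p_1(s)>0$ must itself be zero, so $\bm{e}_i^\top \nabla_{\bm{w}} V(s;\bm{w}^*)=0$ for all $s \in \text{supp}(p_1)$. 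The ``furthermore'' clause of Definition~\ref{ValueConsistent} then additionally gives $\bm{e}_i^\top \nabla_{\bm{w}} \pi(a|s;\bm{w}^*) = 0$ for all actions $a$ and all such $s$.

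Next I would propagate this zero through the state space using a Bellman-type recursion for the value-function gradient. Differentiating $V(s;\bm{w}) = \sum_a \pi(a|s;\bm{w}) Q(s,a;\bm{w})$ and using $\nabla_{\bm{w}} Q(s,a;\bm{w}) = \gamma \sum_{s'} P(s'|a,s) \nabla_{\bm{w}} V(s';\bm{w})$ gives
\begin{equation*}
\nabla_{\bm{w}} V(s;\bm{w}) \;=\; \sum_{a} [\nabla_{\bm{w}} \pi(a|s;\bm{w})]\,Q(s,a;\bm{w}) \;+\; \gamma \sum_{a, s'} \pi(a|s;\bm{w}) P(s'|a,s) \nabla_{\bm{w}} V(s';\bm{w}).
\end{equation*}
Evaluating at $\bm{w}^*$ and contracting with $\bm{e}_i^\top$ for any $s \in \text{supp}(p_1)$, both the left-hand side and the first term on the right are zero by the previous paragraph, leaving $\sum_{s'} P_{\pi}(s'|s;\bm{w}^*)\, \bm{e}_i^\top \nabla_{\bm{w}} V(s';\bm{w}^*) = 0$, where $P_\pi(s'|s;\bm{w}^*) := \sum_a \pi(a|s;\bm{w}^*) P(s'|a,s)$. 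Invoking the common-sign property a second time forces $\bm{e}_i^\top \nabla_{\bm{w}} V(s';\bm{w}^*) = 0$ for every $s'$ reachable from $\text{supp}(p_1)$ in a single step of the induced chain, and the ``furthermore'' clause then re-activates at each such $s'$, supplying $\bm{e}_i^\top \nabla_{\bm{w}} \pi(a|s';\bm{w}^*) = 0$ so the recursion can be re-applied.

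Iterating this one-step propagation and appealing to the irreducibility entailed by ergodicity of the Markov chain induced by $\bm{w}^*$, the set of states on which $\bm{e}_i^\top \nabla_{\bm{w}} V(\cdot;\bm{w}^*)$ vanishes grows to cover all of $\mathcal{S}$, contradicting the assumption on $\hat{s}$. Hence $\nabla_{\bm{w}} V(s;\bm{w}^*) = \bm{0}$ for every coordinate $i$ and every $s$, and substitution into (\ref{H12_dwV_version}) yields $\mathcal{H}_{12}(\bm{w}^*) = \mathcal{H}_{12}^\top(\bm{w}^*) = \bm{0}$. The main obstacle I anticipate is cleanly bookkeeping the induction: the recursion only pushes the zero one step at a time, so one must verify that the ``furthermore'' clause re-triggers at each newly reached state (so the argument compounds rather than stalling), and that ergodicity genuinely delivers finite-step reachability from $\text{supp}(p_1)$ to every state of $\mathcal{S}$.
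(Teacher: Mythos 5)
Your proof is correct and uses essentially the same ingredients as the paper's: the sign-coherence and the ``furthermore'' clause of value consistency, the Bellman-type recursion for $\nabla_{\bm{w}} V$, and ergodicity for reachability, all assembled by contradiction. The only cosmetic difference is the direction of propagation---you push zeros forward from $\mathrm{supp}(p_1)$ to every state, whereas the paper propagates strict positivity backward into the would-be zero set $\mathcal{S}_=$ and then contradicts $\nabla_{\bm{w}}U(\bm{w}^*)=\bm{0}$---but the substance is identical.
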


Furthermore, when we have the additional condition that the gradient of the value function is continuous in $\bm{w}$ (at $\bm{w} = \bm{w}^*$) then $\cH_{12}(\bm{w})+\cH_{12}^\top(\bm{w}) \to \bm{0}$ as $\bm{w} \to \bm{w}^*$. This condition will be satisfied if, for example, the policy is continuously differentiable w.r.t. the policy parameters. 

\iffalse
An immediate corollary is that for tabular policies inducing ergodic Markov chains, local optima of the objective function are also stationary points of the value functions $V(s;\bm{w})$ for all states. This is summarized in the following theorem.
\begin{Theorem} \label{TabularTheorem}
Let $\bm{w}^*\in\mathcal{W}$ be a local optimum, in a tabular policy parametrization, of the differentiable objective  $U(\bm{w}) = \mathbb{E}_{p_1(s)} \big[ V(s;\bm{w}) \big]$. Suppose that the Markov chain induced by $\bm{w}^*$ is ergodic, and that $V(s;\bm{w})$ is differentiable for all states $s$. Then $\cH_{12}(\bm{w}^*) = \cH_{12}^\top(\bm{w}^*) = \bm{0}$.
\end{Theorem}
\fi

\begin{example}[continued]
\textnormal{Returning to the MDPs given in Figure \ref{ToyExamples}, we now empirically observe the behaviour of the term $\cH_{12}(\bm{w})+\cH_{12}^\top(\bm{w})$ as the policy approaches a local optimum of the objective function. Figure \ref{ToyExamples_Hessian_stats_plots} gives the magnitude of $\cH_{12}(\bm{w})+\cH_{12}^\top(\bm{w})$, in terms of the spectral norm, in relation to the distance from the local optimum. In correspondence with the theory, $\cH_{12}(\bm{w})+\cH_{12}^\top(\bm{w}) \to \bm{0}$ as $\bm{w} \to \bm{w}^*$ in the hallway problem, while this is not the case in McCallum's grid. This simple example illustrates the fact that if the feature representation is well-chosen and sufficiently rich the term $\cH_{12}(\bm{w})+\cH_{12}^\top(\bm{w})$ vanishes in the vicinity of a local optimum.}
\end{example}

\begin{figure}[t]
\captionsetup[subfigure]{justification=centering}
\begin{minipage}[b]{.5\linewidth}
\centering \includegraphics[height=2.5in,width=2.75in]{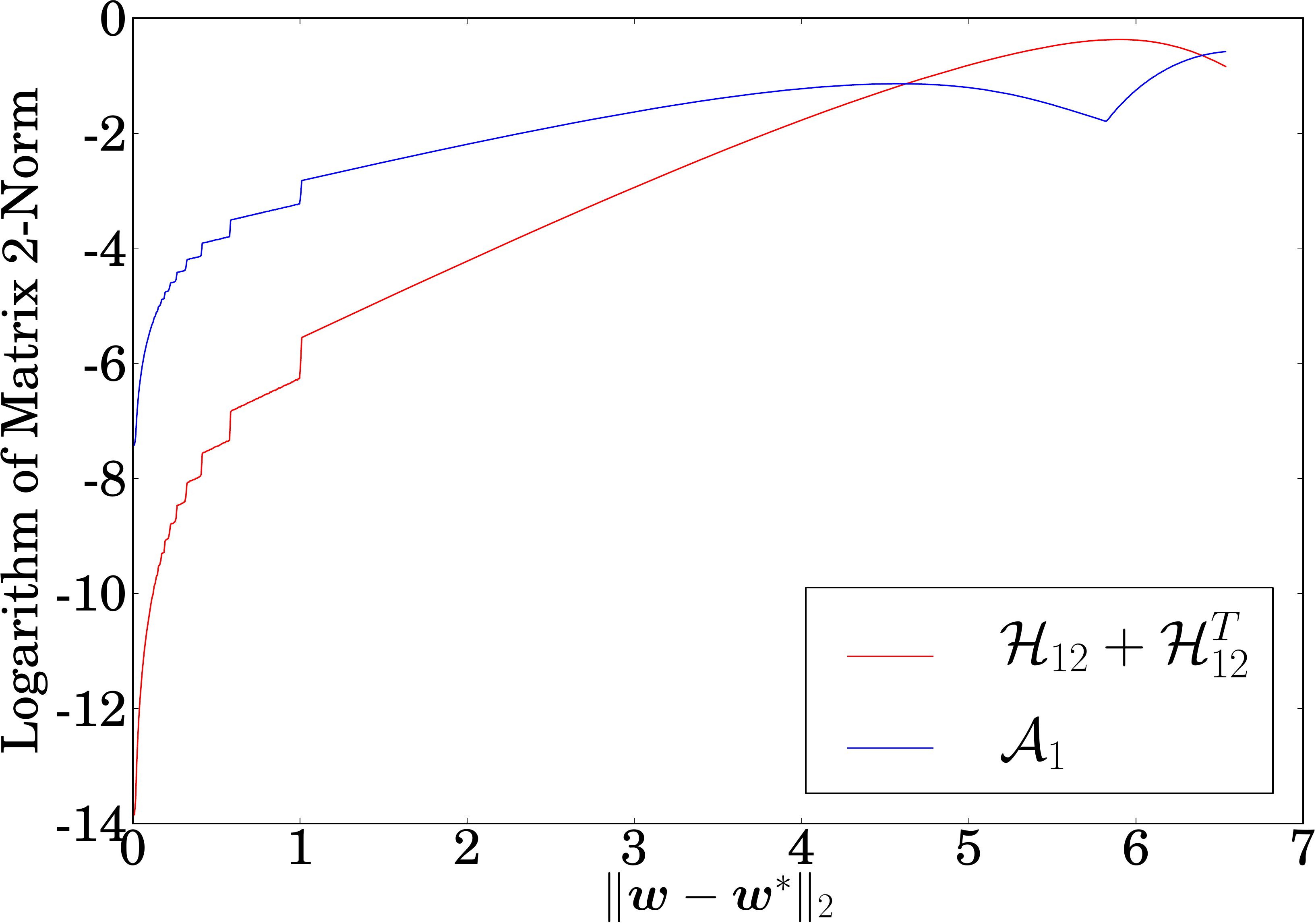}
\subcaption{Hallway Problem}\label{ToyOne_H12_plots}
\end{minipage}%
\begin{minipage}[b]{.5\linewidth}
\centering \includegraphics[height=2.5in,width=2.75in]{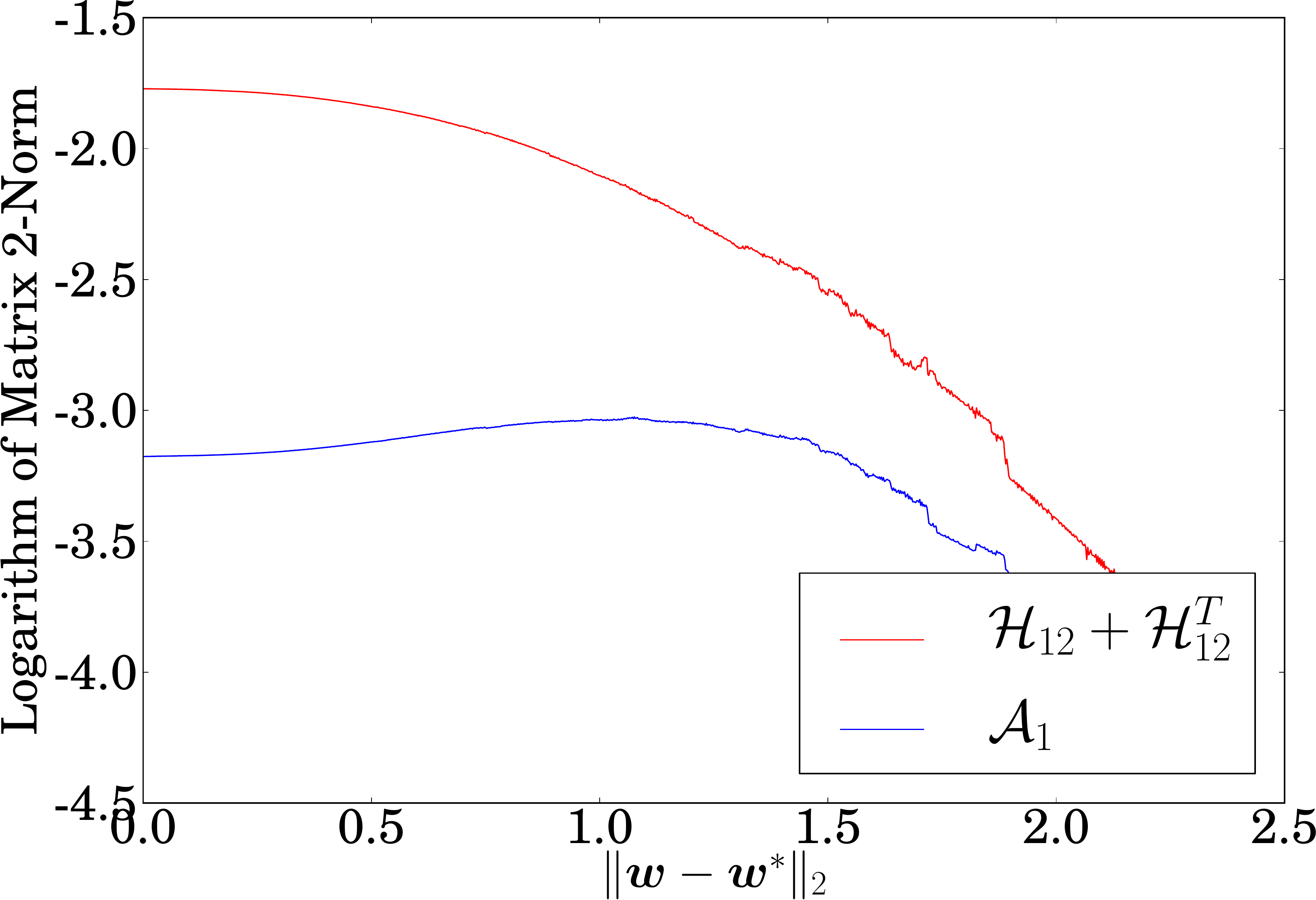}
\subcaption{McCallum's Grid}\label{ToyTwo_H12_plots}
\end{minipage}%
\caption{Graphical illustration of the logarithm of the spectral norm of $\mathcal{H}_{12}(\bm{w}) + \mathcal{H}_{12}^\top(\bm{w})$ and $\mathcal{A}_1(\bm{w})$ in terms of $\| \bm{w} - \bm{w}^* \|_2$ for the hallway problem (a) and McCallum's grid (b). For the given policy parametrization 
$\mathcal{H}(\bm{w}) = \mathcal{A}_1(\bm{w}) + \mathcal{H}_{12}(\bm{w}) + \mathcal{H}_{12}^\top(\bm{w})$, so the plot displays the two components of the Hessian
as the policy converges to a local optimum. As expected, in the hallway problem $\mathcal{H}_{12}(\bm{w}) + \mathcal{H}_{12}^\top(\bm{w}) \to \bm{0}$
as $\bm{w} \to \bm{w}^*$, and $\mathcal{A}_1(\bm{w})$ dominates. In this example the magnitude of $\mathcal{A}_1(\bm{w})$ is roughly six hundred times greater than that of $\mathcal{H}_{12}(\bm{w}) + \mathcal{H}_{12}^\top(\bm{w})$ when $\| \bm{w} - \bm{w}^* \|_2 \approx 0.003$. Conversely, in McCallum's grid $\mathcal{H}_{12}(\bm{w}) + \mathcal{H}_{12}^\top(\bm{w}) \not\to \bm{0}$
as $\bm{w} \to \bm{w}^*$. In fact, $\mathcal{H}_{12}(\bm{w}) + \mathcal{H}_{12}^\top(\bm{w})$ has larger magnitude than $\mathcal{A}_1(\bm{w})$ at $\bm{w}^*$ in this example.} \label{ToyExamples_Hessian_stats_plots}
\end{figure}

\section{Gauss-Newton Methods for Markov Decision Processes}\label{sec_apxn_method}

In this section we propose several Gauss-Newton type methods for MDPs, motivated by the analysis of Section~\ref{sec:analysisOfHessian}. The algorithms are outlined in Section~\ref{subsec_apxn_method}, and key performance analysis is provided in Section~\ref{subsec_apxn_method_analysis}.

\subsection{The Gauss-Newton Methods}\label{subsec_apxn_method}

The first Gauss-Newton method we propose drops the Hessian terms which are difficult to estimate, but are expected to be negligible in the vicinity of local optima. Specifically, it was shown in Section~\ref{hessian_analysis_section_local_optimum} that if the policy parametrization is value consistent with a given MDP, then $\mathcal{H}_{12}(\bm{w}) + \mathcal{H}_{12}^\top (\bm{w}) \to \bm{0}$ as $\bm{w}$ converges towards a local optimum of the objective function. Similarly, if the policy parametrization is sufficiently rich, although not necessarily value consistent, then it is to be expected that $\mathcal{H}_{12}(\bm{w}) + \mathcal{H}_{12}^\top (\bm{w})$ will be negligible in the vicinity of a local optimum. In such cases $\mathcal{A}_1(\bm{w}) + \mathcal{A}_2(\bm{w})$, as defined in Theorem~\ref{corollary_policy_hessian_theorem}, will be a good approximation to the 
Hessian in the vicinity of a local optimum. For this reason, the first Gauss-Newton method that we propose for MDPs is 
to precondition the gradient with $\mathcal{M}(\bm{w}) = - (\mathcal{A}_1(\bm{w}) + \mathcal{A}_2(\bm{w}))^{-1}$ in (\ref{general_gradient_step}), so that the update is of the form:
\begin{framed}
\noindent\underline{Policy search update using the first Gauss-Newton method}
\begin{equation}
\bm{w}^\textnormal{new} = \bm{w} - \alpha (\mathcal{A}_1(\bm{w}) + \mathcal{A}_2(\bm{w}))^{-1} \nabla_{\bm{w}} U(\bm{w}). \label{firstGN_step}
\end{equation}
\end{framed} 
\noindent When the policy parametrization has constant curvature with respect to the action space $ \mathcal{A}_2(\bm{w}) = 0$ and it is sufficient to calculate just 
$\left( \mathcal{A}_1(\bm{w}) \right)^{-1}$.

The second Gauss-Newton method we propose removes further terms from the Hessian which are not guaranteed to be negative definite. As was seen in Section~\ref{sec:policyHessianTheorem}, when the policy parametrization satisfies the properties of Theorem~\ref{NegativeDefiniteLemma} then $\mathcal{H}_2(\bm{w})$ is negative-definite over the entire parameter space. Recall that in (\ref{general_gradient_step}) it is necessary that $\mathcal{M}(\bm{w})$ is positive-definite (in the Newton method this corresponds to requiring the Hessian to be negative-definite) to ensure an increase of the objective function. That $\mathcal{H}_2(\bm{w})$ is negative-definite over the entire parameter space is therefore a highly desirable property of a preconditioning matrix, and for this reason the second Gauss-Newton method that we propose for MDPs is to precondition the gradient with $\mathcal{M}(\bm{w}) = - \mathcal{H}_2(\bm{w})^{-1}$ in (\ref{general_gradient_step}), so that the update is of the form:
\begin{framed}
\noindent\underline{Policy search update using the second Gauss-Newton method}
\begin{equation}
\bm{w}^\textnormal{new} = \bm{w} - \alpha \mathcal{H}_2(\bm{w})^{-1} \nabla_{\bm{w}} U(\bm{w}). \label{secondGN_step}
\end{equation}
\end{framed}
We shall see that the second Gauss-Newton method has important performance guarantees including: a guaranteed ascent direction; linear convergence to a local optimum under a step size which does not depend upon unknown quantities; invariance to affine transformations of the parameter space; and efficient estimation procedures for the preconditioning matrix. We will also show, in Section~\ref{sec:unifying} that the second Gauss-Newton method is closely related to both the EM and natural gradient algorithms.

We shall also consider a diagonal form of the approximation for both forms of Gauss-Newton methods. Denoting the 
diagonal matrix formed from the diagonal elements of $\mathcal{A}_1(\bm{w}) + \mathcal{A}_2(\bm{w})$ and $\mathcal{H}_2(\bm{w})$ by 
$\mathcal{D}_{\mathcal{A}_1 + \mathcal{A}_2}(\bm{w})$ and $\mathcal{D}_{\mathcal{H}_2}(\bm{w})$, respectively, then we shall 
consider the methods that use $\mathcal{M}(\bm{w}) = -\mathcal{D}^{-1}_{\mathcal{A}_1 + \mathcal{A}_2}(\bm{w})$ and 
$\mathcal{M}(\bm{w}) = -\mathcal{D}^{-1}_{\mathcal{H}_2}(\bm{w})$ in (\ref{general_gradient_step}). We call these  
methods the diagonal first and second Gauss-Newton methods, respectively. This diagonalization amounts to performing the approximate Newton methods on each parameter independently, but simultaneously.

\subsubsection{Estimation of the Preconditioners and the Gauss-Newton Update Direction}

%It can be seen that many pre-existing gradient evaluation techniques can be extended with little difficulty to calculate the preconditioning matrices of the proposed Gauss-Newton methods. In particular, gradient evaluation techniques estimate the summation of the derivative of the $\log$-policy, $\nabla_{\bm{w}} \log \pi(a|s;\bm{w})$, weighted by either $p_{\gamma}(s,a;\bm{w})Q(s,a;\bm{w})$ or $p_{\gamma}(s,a;\bm{w})A(s,a;\bm{w})$, over the state-action space. 
It is possible to extend typical techniques used to estimate the policy gradient to estimate the preconditioner for the Gauss-Newton method, by including either the Hessian of the $\log$-policy, the outer product of the derivative of the $\log$-policy, or the respective diagonal terms. As an example, in Section~\ref{sec_supp_recurrent_state_eval} of the Appendix we detail the extension of the recurrent state formulation of gradient evaluation in the average reward framework \citep{williams92} to the second Gauss-Newton method. 
We use this extension in the Tetris experiment that we consider in Section~\ref{sec:experiments}. 
Given $n_{s}$ sampled state-action pairs, the complexity of this extension scales as 
$\mathcal{O}(n_{s} n^2)$ for the second Gauss-Newton method, while it scales as $\mathcal{O}(n_{s} n)$ 
for the diagonal version of the algorithm.

We provide more details of situations in which the inversion of the preconditioning matrices can be performed more efficiently in Section~\ref{HessianInversion} of the Appendix. Finally, for the second Gauss-Newton method the ascent direction can be estimated particularly efficiently, even for large parameter spaces, using a Hessian-free conjugate-gradient approach, which is detailed in Section~\ref{HessianFree} of the Appendix.

\subsection{Performance Guarantees and Analysis}\label{subsec_apxn_method_analysis}

\subsubsection{Ascent Directions}

In general the objective (\ref{ps_objectiveFunction}) is not concave, which means that the Hessian will not be negative-definite over the entire parameter space. In such cases the Newton method can actually lower the objective and this is an undesirable aspect of the Newton method. We now consider ascent directions for the Gauss-Newton methods, and in particular demonstrate that the proposed second Gauss-Newton method guarantees an ascent direction in typical settings.

\paragraph{Ascent directions for the first Gauss-Newton method:} As mentioned previously, the matrix $\mathcal{A}_1(\bm{w}) + \mathcal{A}_2(\bm{w})$ will typically be indefinite, and so a straightforward
application of the first Gauss-Newton method will not necessarily result in an increase in the objective function. There are, however, 
standard correction techniques that one could consider to ensure that an increase in the objective function is obtained, such as 
adding a ridge term to the preconditioning matrix. A survey of such correction techniques can be found in \cite{boyd-vandenberghe-book}.

\paragraph{Ascent directions for the second Gauss-Newton method:} It was seen in Theorem~\ref{NegativeDefiniteLemma} that $\mathcal{H}_2(\bm{w})$ will be negative-definite over the 
entire parameter space if either the policy is $\log$-concave with respect to the policy parameters, or the policy has constant 
curvature with respect to the action space. It follows that in such cases an increase of the objective function will be obtained 
when using the second Gauss-Newton method with a sufficiently small step-size. Additionally, the diagonal terms of a negative-definite matrix are negative, so that 
$\mathcal{D}_{\mathcal{H}_2}(\bm{w})$ is negative-definite whenever $\mathcal{H}_2(\bm{w})$ is negative-definite, and thus similar
performance guarantees exist for the diagonal version of the second Gauss-Newton algorithm. 

% \begin{Lemma} \label{NegativeDefiniteLemma2}
% \textnormal{The matrices $\mathcal{H}_2(\bm{w})$ and $\mathcal{D}_2(\bm{w})$ are negative-definite, for all $\bm{w} \in \mathcal{W}$, if, 
% \begin{itemize}
%  \item The policy is $\log$-concave with respect to the policy parameters,
% \end{itemize}
% or
% \begin{itemize}
%  \item The policy parametrization has constant curvature with respect to the action space.
% \end{itemize}
% Additionally, for any local optimum of (\ref{ps_objectiveFunction}), $\bm{w}^* \in \mathcal{W}$, there exists an open set, $\mathcal{U} \subset \mathcal{W}$, around $\bm{w}^*$ such that for all $\bm{w} \in \mathcal{U}$ the matrices $\mathcal{H}_2(\bm{w})$ and $\mathcal{D}_2(\bm{w})$ are negative-definite, and this is true regardless of the policy parametrization.  
% }
% \begin{proof}
% See \suppsecref{negative_definite_appendix}
% \end{proof}
% \end{Lemma}

To motivate this result we now briefly consider some widely used policies that are either $\log$-concave or blockwise $\log$-concave. 
Firstly, consider the Gibb's policy, $\pi(a|s; \bm{w}) \propto \exp \bm{w}^T \bm{\phi}(a, s) $, in which $\bm{\phi}(a, s) \in \mathbb{R}^n$ 
is a feature vector. This policy is widely used in discrete systems and is $\log$-concave in $\bm{w}$, which can be seen from the fact 
that $\log \pi(a|s; \bm{w})$ is the sum of a linear term and a negative \textit{log-sum-exp} term, both of which are concave 
\citep{boyd-vandenberghe-book}. In systems with a continuous state-action space a common choice of controller is 
$\pi(a|s; K, \Sigma) = \mathcal{N}(a| K \bm{\phi}(s), \Sigma)$, in which $\bm{\phi}(s) \in \mathbb{R}^n$ is a feature vector. This controller is not jointly $\log$-concave in $K$ and $\Sigma$, but it is blockwise $\log$-concave in $K$  and $\Sigma^{-1}$. In terms of 
$K$ the $\log$-policy is quadratic and the coefficient matrix of the quadratic term is negative-definite. In terms of $\Sigma^{-1}$ the 
$\log$-policy consists of a linear term and a $\log$-determinant term, both of which are concave.

\subsubsection{Affine Invariance}

A undesirable aspect of steepest gradient ascent is that its performance is dependent on the choice of basis used to represent the parameter space. An important and desirable property of the Newton method is that it is invariant to non-singular affine transformations of the parameter space \citep{boyd-vandenberghe-book}. This means that given a non-singular affine mapping, $T \in \mathbb{R}^{n \times n}$, the Newton update of the objective $\tilde{U}(\bm{w}) = U(T\bm{w})$ is related to the Newton update of the original objective through the same affine mapping, i.e., $\bm{v} + \Delta \bm{v}_{\textnormal{nt}} = T \big( \bm{w} + \Delta \bm{w}_{\textnormal{nt}} \big)$, in which $\bm{v} = T\bm{w}$ and $\Delta \bm{v}_{\textnormal{nt}}$ and $\Delta \bm{w}_{\textnormal{nt}}$ denote the respective Newton steps. A method is said to be scale invariant if it is invariant to non-singular rescalings of the parameter space. In this case the mapping $T \in \mathbb{R}^{n \times n}$, is given by a non-singular diagonal matrix. The proposed approximate Newton methods have various invariance properties, and these properties are summarized in the following theorem.

\begin{Theorem} \label{AffineInvariance}
The first and second Gauss-Newton methods are invariant to (non-singular) affine transformations of the parameter space. 
The diagonal versions of these algorithms are invariant to (non-singular) rescalings of the parameter space.
\begin{proof}
See Section~\ref{affine_invariance_appendix} in the Appendix.
\end{proof}
\end{Theorem}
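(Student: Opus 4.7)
The plan is to exploit the fact that the Newton-like preconditioners and the gradient transform covariantly under smooth reparametrizations, so that the updates \eqref{firstGN_step} and \eqref{secondGN_step} commute with the map $T$. Let $T \in \mathbb{R}^{n \times n}$ be non-singular and consider the reparametrization $\bm{v} = T\bm{w} + \bm{b}$, so that $\bm{w} = T^{-1}(\bm{v}-\bm{b})$ and $\tilde{U}(\bm{v}) := U(T^{-1}(\bm{v}-\bm{b}))$, with an analogously reparametrized policy $\tilde{\pi}(a|s;\bm{v}) := \pi(a|s;T^{-1}(\bm{v}-\bm{b}))$. First I would observe that because the reparametrization induces the same family of policies, the objects that depend on $\bm{w}$ only through the induced trajectory distribution, namely $p_\gamma(s,a;\bm{w})$, $Q(s,a;\bm{w})$ and $A(s,a;\bm{w})$, agree with their tilded counterparts at corresponding points (i.e., $\tilde{Q}(s,a;\bm{v}) = Q(s,a;\bm{w})$, and similarly for the others).

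Next I would apply the chain rule to derive the transformation rules for the derivative-containing quantities. Constants vanish under differentiation, so only $T$ enters: $\nabla_{\bm{v}} \tilde{U}(\bm{v}) = T^{-\top}\nabla_{\bm{w}} U(\bm{w})$, $\nabla_{\bm{v}}\log\tilde{\pi}(a|s;\bm{v}) = T^{-\top}\nabla_{\bm{w}}\log\pi(a|s;\bm{w})$, and $\nabla_{\bm{v}}\nabla_{\bm{v}}^\top\log\tilde{\pi}(a|s;\bm{v}) = T^{-\top}\bigl(\nabla_{\bm{w}}\nabla_{\bm{w}}^\top\log\pi(a|s;\bm{w})\bigr)T^{-1}$. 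Plugging these into the definitions of $\mathcal{A}_1$, $\mathcal{A}_2$, and $\mathcal{H}_2$ (from Theorems~\ref{theorem_policy_hessian_theorem} and~\ref{corollary_policy_hessian_theorem}), and using the state-action identifications from the previous step, immediately gives
\begin{equation*}
\tilde{\mathcal{A}}_1(\bm{v}) = T^{-\top}\mathcal{A}_1(\bm{w})T^{-1}, \quad \tilde{\mathcal{A}}_2(\bm{v}) = T^{-\top}\mathcal{A}_2(\bm{w})T^{-1}, \quad \tilde{\mathcal{H}}_2(\bm{v}) = T^{-\top}\mathcal{H}_2(\bm{w})T^{-1}.
\end{equation*}

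The affine-invariance claim then follows by direct substitution. For the second Gauss-Newton method, inverting and combining gives $\tilde{\mathcal{H}}_2(\bm{v})^{-1}\nabla_{\bm{v}}\tilde{U}(\bm{v}) = T\mathcal{H}_2(\bm{w})^{-1}T^\top T^{-\top}\nabla_{\bm{w}}U(\bm{w}) = T\mathcal{H}_2(\bm{w})^{-1}\nabla_{\bm{w}}U(\bm{w})$, so that the update in the new coordinates equals $\bm{v} - \alpha T\mathcal{H}_2(\bm{w})^{-1}\nabla_{\bm{w}}U(\bm{w}) = T\bm{w}^{\textnormal{new}} + \bm{b}$, which is exactly the image under the reparametrization of the update in the original coordinates. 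The same calculation, applied with preconditioner $-(\mathcal{A}_1+\mathcal{A}_2)^{-1}$, yields the result for the first Gauss-Newton method.

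For the diagonal versions I would specialize to a diagonal non-singular $T = \textnormal{diag}(t_1,\ldots,t_n)$. For any $M$, the identity $(T^{-\top}MT^{-1})_{ij} = M_{ij}/(t_it_j)$ yields $\textnormal{diag}(T^{-\top}MT^{-1}) = T^{-\top}\,\textnormal{diag}(M)\,T^{-1}$, so $\mathcal{D}$ obeys exactly the same transformation law as the full matrix. Repeating the substitution above with $\mathcal{D}_{\mathcal{A}_1+\mathcal{A}_2}$ or $\mathcal{D}_{\mathcal{H}_2}$ in place of the full preconditioner completes the proof. The only subtlety, and the one place I would be careful, is verifying that the off-diagonal entries of $T^{-\top}MT^{-1}$ really do drop out when $T$ is diagonal, which is why the invariance for the diagonal variant is restricted to rescalings rather than general affine maps.
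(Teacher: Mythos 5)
Your proposal is correct and follows essentially the same route as the paper's proof: derive the covariant transformation laws for the gradient and the preconditioners via the chain rule, then substitute to show the update commutes with the parameter map. You are in fact somewhat more complete than the paper, which only works out the second Gauss-Newton method explicitly and leaves the first method and the diagonal variants as "following similarly," whereas you verify the diagonal identity $\textnormal{diag}(T^{-\top}MT^{-1}) = T^{-\top}\,\textnormal{diag}(M)\,T^{-1}$ for diagonal $T$ explicitly and also allow a translation term.
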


\subsubsection{Convergence Analysis}

We now provide a local convergence analysis of the Gauss-Newton framework. 
We shall focus on the full Gauss-Newton methods, with the analysis of the diagonal Gauss-Newton method following similarly. 
Additionally, we shall focus on the case in which a constant step size is considered throughout, which is denoted by 
$\alpha \in \mathbb{R}^+$.
We say that an algorithm converges linearly to a limit $L$ at a rate $r\in(0,1)$ if 
\begin{align}
\lim_{k\to\infty} \frac{|U({\bm w}_{k+1}) - L|}{|U({\bm w}_{k}) - L|} = r. \nn
\end{align}
If $r=0$ then the algorithm converges super-linearly. We denote the parameter update function of the first and second Gauss-Newton methods by $G_1$ and $G_2$, respectively, 
so that $G_1(\bm{w}) = \bm{w} - \alpha (\mathcal{A}_1(\bm{w}) + \mathcal{A}_2(\bm{w}))^{-1} \nabla U(\bm{w})$
and $G_2(\bm{w}) = \bm{w} - \alpha \mathcal{H}_2(\bm{w})^{-1} \nabla U(\bm{w})$. Given a matrix, $A \in L(\mathbb{R}^n)$ we denote the spectral radius of $A$ by $\rho(A) = \max_i | \lambda_i |$, where $\{\lambda_i\}_{i=1}^n$ are the eigenvalues of $A$. Throughout this section we shall use $\nabla G(\bm{w}^*)$ to denote $\nabla_{\bm{w} | \bm{w} = \bm{w}^*} G(\bm{w})$.

\begin{Theorem}[Convergence analysis for the first Gauss-Newton method]\label{FAN_lemma_convergence_lemma}
Suppose that $\bm{w}^* \in \mathcal{W}$ is such that $\nabla_{\bm{w}|\bm{w} = \bm{w}^*} U(\bm{w}) = \bm{0}$ 
and $\mathcal{A}_1(\bm{w}^*) + \mathcal{A}_2(\bm{w}^*)$ is invertible, then $G_1$ is Fr\'{e}chet differentiable at $\bm{w}^*$ 
and $\nabla G_1(\bm{w}^*)$ takes the form,
\begin{align}
\nabla G_1(\bm{w}^*) &= I - \alpha (\mathcal{A}_1(\bm{w}^*) + \mathcal{A}_2(\bm{w}^*))^{-1} \mathcal{H}(\bm{w}^*). \label{G1_Jacobian}
\end{align}
If $\mathcal{H}(\bm{w}^*)$ and $\mathcal{A}_1(\bm{w}^*) + \mathcal{A}_2(\bm{w}^*)$ are negative-definite, and the step size
is in the range,
% \begin{equation}
% \alpha \in \bigg( 0, 2 \max_{\bm{x} \in \mathbb{R}^n} \frac{\bm{x}^\top \mathcal{H}(\bm{w}^*) \bm{x}}{ \bm{x}^\top \big( \mathcal{A}_1(\bm{w}^*) + \mathcal{A}_2(\bm{w}^*) \big) \bm{x} } \bigg), \label{permissable_step_sizes_1GN}
% \end{equation}
\begin{equation}
\alpha \in \left(0, 2 / \rho\left((\mathcal{A}_1(\bm{w}^*) + \mathcal{A}_2(\bm{w}^*)) ^{-1}  \mathcal{H}(\bm{w}^*)\right) \right) \label{permissable_step_sizes_1GN}
\end{equation}
then $\bm{w}^*$ is a point of attraction of the first Gauss-Newton method, the convergence is at least linear and the rate is given by $\rho(\nabla G_1(\bm{w}^*)) < 1$.
When the policy parametrization is value consistent with respect to the given Markov Decision Process, then (\ref{G1_Jacobian}) simplifies to
\begin{align}
\nabla G_1(\bm{w}^*) &= (1 - \alpha) I, \label{G1_Jacobian_value_consistent}
\end{align}
and whenever $\alpha\in(0,2)$ then $\bm{w}^*$ is a point of attraction of the first Gauss-Newton method, and the convergence to $\bm{w}^*$ is linear if $\alpha \ne 1$ with a rate given by $\rho(\nabla G_1(\bm{w}^*))<1$, and convergence is super-linear when $\alpha=1$.
\begin{proof}
See Section~\ref{convergence_analysis_appendix} in the Appendix.
\end{proof}
\end{Theorem}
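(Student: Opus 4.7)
The plan is to first compute $\nabla G_1(\bm{w}^*)$ by direct differentiation, then apply the standard Ostrowski point-of-attraction theorem: if $G$ is $C^1$ in a neighbourhood of a fixed point $\bm{w}^*$ and $\rho(\nabla G(\bm{w}^*)) < 1$, then $\bm{w}^*$ attracts the iterates $\bm{w}_{k+1} = G(\bm{w}_k)$, with linear convergence at the rate $\rho(\nabla G(\bm{w}^*))$. The fixed-point property $G_1(\bm{w}^*) = \bm{w}^*$ is immediate from $\nabla U(\bm{w}^*) = \bm{0}$, and continuous differentiability of $G_1$ near $\bm{w}^*$ holds under the standard smoothness of $\pi(a|s;\bm{w})$ in $\bm{w}$, since $M(\bm{w}) := \mathcal{A}_1(\bm{w}) + \mathcal{A}_2(\bm{w})$ is then smooth in $\bm{w}$ and invertibility at $\bm{w}^*$ extends to a neighbourhood by continuity of the determinant.

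To compute $\nabla G_1(\bm{w}^*)$, I will write $G_1(\bm{w}) = \bm{w} - \alpha M(\bm{w})^{-1} \nabla U(\bm{w})$ and apply the product rule, using $\nabla(M^{-1})(\bm{w}) = -M(\bm{w})^{-1}(\nabla M(\bm{w}))M(\bm{w})^{-1}$. Exactly one term in the resulting expression acts on $\nabla U(\bm{w})$ without being differentiated, contributing $-\alpha M(\bm{w}^*)^{-1}\mathcal{H}(\bm{w}^*)$; the other term is multiplied by $\nabla U(\bm{w}^*) = \bm{0}$ and so vanishes. This leaves $\nabla G_1(\bm{w}^*) = I - \alpha M(\bm{w}^*)^{-1}\mathcal{H}(\bm{w}^*)$, establishing \eqref{G1_Jacobian}.

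To apply Ostrowski I need a spectral bound on $\nabla G_1(\bm{w}^*)$. When both $\mathcal{H}(\bm{w}^*)$ and $M(\bm{w}^*)$ are negative-definite, I will set $L = (-M(\bm{w}^*))^{1/2}$, the symmetric positive-definite square root, and observe that $M(\bm{w}^*)^{-1}\mathcal{H}(\bm{w}^*) = -L^{-2}\mathcal{H}(\bm{w}^*)$ is similar, via conjugation by $L$, to the symmetric positive-definite matrix $L^{-1}(-\mathcal{H}(\bm{w}^*))L^{-1}$. Hence the eigenvalues $\{\lambda_i\}$ of $M(\bm{w}^*)^{-1}\mathcal{H}(\bm{w}^*)$ are real and strictly positive, so the eigenvalues of $\nabla G_1(\bm{w}^*)$ are $\{1-\alpha\lambda_i\}$. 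The condition $\max_i |1-\alpha\lambda_i| < 1$ is equivalent to $\alpha \in (0,\, 2/\rho(M(\bm{w}^*)^{-1}\mathcal{H}(\bm{w}^*)))$, which is precisely \eqref{permissable_step_sizes_1GN}, and Ostrowski's theorem then yields the stated linear convergence at rate $\rho(\nabla G_1(\bm{w}^*))$. I expect this spectral step to be the main obstacle: the product of two negative-definite matrices is not itself symmetric, so the realness and positivity of its eigenvalues must be extracted through the similarity argument above rather than directly via an inner-product computation.

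For the value-consistent case, Theorem~\ref{lem:nondecreasing} yields $\mathcal{H}_{12}(\bm{w}^*) + \mathcal{H}_{12}^\top(\bm{w}^*) = \bm{0}$, so by \eqref{pg_hessian2a} $\mathcal{H}(\bm{w}^*) = M(\bm{w}^*)$ and \eqref{G1_Jacobian} collapses to $\nabla G_1(\bm{w}^*) = (1-\alpha) I$. For $\alpha \in (0,2)\setminus\{1\}$ the spectral radius equals $|1-\alpha| < 1$, and Ostrowski delivers linear convergence at precisely that rate. At $\alpha = 1$ we have $\nabla G_1(\bm{w}^*) = \bm{0}$, so a second-order Taylor expansion of $G_1$ gives $\|\bm{w}_{k+1} - \bm{w}^*\| = o(\|\bm{w}_k - \bm{w}^*\|)$; combined with the second-order expansion $|U(\bm{w}) - U(\bm{w}^*)| \asymp \|\bm{w} - \bm{w}^*\|^2$ (valid because $\nabla U(\bm{w}^*) = \bm{0}$ and $\mathcal{H}(\bm{w}^*) = M(\bm{w}^*)$ is invertible), this converts into super-linear convergence of $|U(\bm{w}_k) - U(\bm{w}^*)|$ in the sense defined in the theorem statement.
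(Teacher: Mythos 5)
Your proposal is correct and follows essentially the same route as the paper's proof: differentiate $G_1$ via the product rule, use $\nabla U(\bm{w}^*)=\bm{0}$ to kill the term involving the derivative of the inverse preconditioner, establish positivity of the eigenvalues of $(\mathcal{A}_1(\bm{w}^*)+\mathcal{A}_2(\bm{w}^*))^{-1}\mathcal{H}(\bm{w}^*)$, and invoke Ostrowski's theorem. Your similarity-transform argument for the realness and positivity of those eigenvalues, and your Taylor-expansion treatment of the super-linear $\alpha=1$ case, merely supply details the paper asserts without elaboration.
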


Additionally we make the following remarks for the case when the policy parametrization is not value consistent with respect to the given Markov decision process. For simplicity, we shall consider the case in which $\alpha = 1$. In this case $\nabla G_1(\bm{w}^*)$ takes the form,
\begin{equation*}
\nabla G_1(\bm{w}^*) = -(\mathcal{A}_1(\bm{w}^*) + \mathcal{A}_2(\bm{w}^*))^{-1} \bigg( \mathcal{H}_{12}(\bm{w}^*) + \mathcal{H}_{12}^\top(\bm{w}^*) \bigg).
\end{equation*}
From the analysis in Section~\ref{hessian_analysis_section_local_optimum} we expect that when the policy parametrization is rich, but not value consistent with respect to the given Markov decision process, that $\rho ( \big( \mathcal{H}_{12}(\bm{w}^*) + \mathcal{H}_{12}(\bm{w}^*)^\top \big)^{-1} \big( \mathcal{A}_1(\bm{w}^*) + \mathcal{A}_2(\bm{w}^*) \big) )$ will generally be small. In this case the first Gauss-Newton method will converge linearly, and the rate of convergence will be close to zero. 

% Using the theory of relative eigenvalues \citep{hestenes-1975} the eigenvalues of $\nabla G_1(\bm{w}^*)$ are contained in the image of the 
% relative Rayleigh quotient,
% \begin{equation*}
% \Psi (\bm{x}) = - \frac{ \bm{x}^\top \big( \mathcal{H}_{12}(\bm{w}^*) + \mathcal{H}_{12}^\top(\bm{w}^*) \big) \bm{x}}{ \bm{x}^\top \big(\mathcal{A}_1(\bm{w}^*) + \mathcal{A}_2(\bm{w}^*) \big) \bm{x}}, \quad \quad \quad \quad \bm{x} \in \mathbb{R}^n,
% \end{equation*}
% and $\nabla G_1(\bm{w}^*) = \max_{\bm{x} \in \mathbb{R}^n} \Psi (\bm{x})$. 
% From the analysis in Section~\ref{hessian_analysis_section_local_optimum} we expect that when the policy parametrization is rich, but not value consistent with respect to the given Markov decision process, that $\rho ( \big( \mathcal{H}_{12} + \mathcal{H}_{12}^\top \big)^{-1} \big( \mathcal{A}_1 + \mathcal{A}_2 \big) )$ will generally be close to zero. In this case we will have $\max_{\bm{x} \in \mathbb{R}^n} \Psi (\bm{x}) \approx 0$, so that the first Gauss-Newton method
% will converge linearly, but the rate of convergence will be close to zero. 

\begin{Theorem}[Convergence analysis for the second Gauss-Newton method]\label{SAN_lemma_convergence_lemma}
Suppose that $\bm{w}^* \in \mathcal{W}$ is such that $\nabla_{\bm{w}|\bm{w} = \bm{w}^*} U(\bm{w}) = \bm{0}$ 
and $\mathcal{H}_2(\bm{w}^*)$ is invertible, then $G_2$ is Fr\'{e}chet differentiable at $\bm{w}^*$ 
and $\nabla G_2(\bm{w}^*)$ takes the form,
\begin{align}
\nabla G_2(\bm{w}^*) &= I - \alpha \mathcal{H}^{-1}_2(\bm{w}^*) \mathcal{H}(\bm{w}^*). \label{G2_Jacobian}
\end{align}
If $\mathcal{H}(\bm{w}^*)$ is negative-definite and the step size
is in the range,
\begin{equation}
\alpha \in (0, 2 /\rho(\mathcal{H}_2(\bm{w}^* )^{-1}\mathcal{H}(\bm{w}^*)  ) ) \label{permissable_step_sizes_2GN}
\end{equation}
% \begin{equation}
% \alpha \in \bigg( 0, 2 \max_{\bm{x} \in \mathbb{R}^n} \frac{\bm{x}^\top \mathcal{H}_2(\bm{w}^*) \bm{x}}{ \bm{x}^\top \mathcal{H}(\bm{w}^*) \bm{x} } \bigg), 
% \end{equation}
then  $\bm{w}^*$ is a point of attraction of the second Gauss-Newton method, convergence to $\bm{w}^*$ is at least linear and the rate is given by $\rho(\nabla G_2(\bm{w}^*)) < 1$. Furthermore, $\alpha \in (0,2)$ implies  condition (\ref{permissable_step_sizes_2GN}).
When the policy parametrization is value consistent with respect to the given Markov decision process, then (\ref{G2_Jacobian}) simplifies to
\begin{align}
\nabla G_2(\bm{w}^*) &= I - \alpha \mathcal{H}_2^{-1}(\bm{w}^*) \mathcal{A}_1(\bm{w}^*). \label{G2_Jacobian_value_consistent}
\end{align}
\begin{proof}
See Section~\ref{convergence_analysis_appendix} in the Appendix.
\end{proof}
\end{Theorem}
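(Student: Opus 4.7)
The plan is to follow the same four-stage template as the proof for the first Gauss-Newton method in Theorem~\ref{FAN_lemma_convergence_lemma}. First I would establish Fréchet differentiability of $G_2$ at $\bm{w}^*$ and compute the Jacobian. Since $\mathcal{H}_2$ is continuous in $\bm{w}$ and invertible at $\bm{w}^*$, it is invertible on a neighbourhood of $\bm{w}^*$ and $\bm{w}\mapsto\mathcal{H}_2(\bm{w})^{-1}$ is smooth there, so $G_2(\bm{w}) = \bm{w}-\alpha\mathcal{H}_2(\bm{w})^{-1}\nabla U(\bm{w})$ is Fréchet differentiable. The product rule gives two contributions to $\nabla G_2(\bm{w}^*)$, but the term carrying the derivative of $\mathcal{H}_2^{-1}$ is annihilated by the factor $\nabla U(\bm{w}^*)=\bm{0}$, leaving precisely (\ref{G2_Jacobian}).

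Since $\nabla U(\bm{w}^*)=\bm{0}$ makes $\bm{w}^*$ a fixed point of $G_2$, I would next invoke Ostrowski's point-of-attraction theorem, which reduces the local linear convergence claim to the spectral bound $\rho(\nabla G_2(\bm{w}^*)) < 1$. In the regime of interest (where Theorem~\ref{NegativeDefiniteLemma} guarantees $\mathcal{H}_2(\bm{w}^*)$ is negative-definite), the product $\mathcal{H}_2^{-1}\mathcal{H}$ is similar to the symmetric positive-definite matrix $(-\mathcal{H}_2)^{-1/2}(-\mathcal{H})(-\mathcal{H}_2)^{-1/2}$, so it has real positive eigenvalues $\{\lambda_i\}$. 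The eigenvalues of $I-\alpha\mathcal{H}_2^{-1}\mathcal{H}$ are then $\{1-\alpha\lambda_i\}$, each of which lies in $(-1,1)$ precisely when $\alpha<2/\lambda_i$; the uniform condition is exactly (\ref{permissable_step_sizes_2GN}), which delivers the main convergence statement together with the linear rate.

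The subtler remaining step is to show that $\alpha\in(0,2)$ already suffices, equivalently that $\mathcal{H}_2(\bm{w}^*)\preceq\mathcal{H}(\bm{w}^*)$ in the Loewner order. For this I would appeal to the EM lower bound of Theorem~\ref{theorem_em_lower_bound}: choosing $q^*(\xi)\propto p(\xi;\bm{w}^*)R(\xi)$ makes the bound tight at $\bm{w}^*$, so $\bm{w}^*$ is a global minimiser of $\log U(\bm{w}) - \mathcal{L}(\bm{w},q^*)$ and the Hessian of this difference at $\bm{w}^*$ is positive-semidefinite. Computing each piece: $\nabla U(\bm{w}^*)=\bm{0}$ gives $\nabla^2\log U(\bm{w}^*) = \mathcal{H}(\bm{w}^*)/U(\bm{w}^*)$, while the Markovian factorisation of $\log p(\xi;\bm{w})$ combined with the identification of the reward-weighted trajectory expectation with the discounted occupancy $p_\gamma(s,a;\bm{w}^*)Q(s,a;\bm{w}^*)$ (the same identification used to derive the M-step objective (\ref{pg_general_em})) yields $\nabla^2_{\bm{w}}\mathcal{L}(\bm{w},q^*)\big|_{\bm{w}^*} = \mathcal{H}_2(\bm{w}^*)/U(\bm{w}^*)$; multiplying through by $U(\bm{w}^*)>0$ gives the required Loewner inequality. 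Finally, the value-consistent simplification (\ref{G2_Jacobian_value_consistent}) follows by substituting $\mathcal{H}_{12}(\bm{w}^*)=\bm{0}$ (Theorem~\ref{lem:nondecreasing}) into the decomposition of Theorem~\ref{corollary_policy_hessian_theorem} and using the advantage identity $\sum_a \pi(a|s)A(s,a)=0$ to eliminate $\mathcal{A}_2(\bm{w}^*)$ in the action-independent curvature case.

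The principal obstacle is the Hessian identification in the third paragraph: carefully reconciling the reward-weighted trajectory expectation under $q^*$ with the discounted $p_\gamma Q$ weighting that appears in $\mathcal{H}_2$, so that $\nabla^2\mathcal{L}(\bm{w},q^*)\big|_{\bm{w}^*}$ lands exactly on $\mathcal{H}_2(\bm{w}^*)/U(\bm{w}^*)$ rather than on some other reweighting. Once that calculation is in hand, the rest of the proof is a routine combination of Ostrowski's theorem with the spectral bound.
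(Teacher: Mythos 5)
Your proposal is correct, and three of its four stages (Fr\'{e}chet differentiability with the product-rule term killed by $\nabla U(\bm{w}^*)=\bm{0}$, Ostrowski's theorem, and the eigenvalue analysis of $I-\alpha\mathcal{H}_2^{-1}\mathcal{H}$ via the symmetric similarity transform) coincide with the paper's argument. The one place you genuinely diverge is the key step that upgrades condition (\ref{permissable_step_sizes_2GN}) to the practical condition $\alpha\in(0,2)$, i.e.\ the Loewner inequality $\mathcal{H}_2(\bm{w}^*)\preceq\mathcal{H}(\bm{w}^*)$. The paper gets this directly: it observes (Lemma~\ref{H11_is_positive_definite}) that $\mathcal{H}(\bm{w})-\mathcal{H}_2(\bm{w})=\mathcal{H}_1(\bm{w})+\mathcal{H}_{12}(\bm{w})+\mathcal{H}_{12}^\top(\bm{w})$ is, by the second term of (\ref{theorem_policy_hessian_theorem_proof_eq1}), a reward-weighted sum of outer products $\nabla_{\bm{w}}\log p\,\nabla_{\bm{w}}^\top\log p$ and hence positive-(semi)definite at \emph{every} $\bm{w}$, with no appeal to stationarity. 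You instead derive the same inequality at $\bm{w}^*$ from the second-order tightness of the EM lower bound of Theorem~\ref{theorem_em_lower_bound} with $q^*\propto p(\xi;\bm{w}^*)R(\xi)$; your identification of $\nabla^2_{\bm{w}}\mathcal{L}(\bm{w},q^*)|_{\bm{w}^*}$ with $\mathcal{H}_2(\bm{w}^*)/U(\bm{w}^*)$ is exactly the computation used for the first term of (\ref{theorem_policy_hessian_theorem_proof_eq1}), so the "principal obstacle" you flag does go through. Your route is more roundabout and needs $U(\bm{w}^*)>0$, but it makes transparent \emph{why} the eigenvalues land in $(0,1]$ --- it is the same mechanism that makes the EM step of Theorem~\ref{EM_lemma_convergence_lemma} contractive --- whereas the paper's route is shorter and yields the inequality globally. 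Two small points of order: derive the Loewner inequality \emph{before} the spectral analysis, since $\mathcal{H}_2(\bm{w}^*)\preceq\mathcal{H}(\bm{w}^*)\prec 0$ then gives you the negative-definiteness of $\mathcal{H}_2(\bm{w}^*)$ that your similarity transform uses (the theorem only assumes invertibility, so invoking Theorem~\ref{NegativeDefiniteLemma} imports a hypothesis not in the statement); and with non-negative rather than strictly positive rewards the inequality is only semi-definite, giving eigenvalues in $(0,1]$, which still yields $|1-\alpha\lambda_i|<1$ for $\alpha\in(0,2)$. Your treatment of (\ref{G2_Jacobian_value_consistent}), which notes that eliminating $\mathcal{A}_2$ additionally needs the constant-curvature property via Lemma~\ref{Lemma_A2_equal_zero}, is if anything more careful than the paper's.
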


The conditions of Theorem~\ref{SAN_lemma_convergence_lemma} look analogous to those of Theorem~\ref{FAN_lemma_convergence_lemma}, but they differ in important ways: it is not necessary to assume that the 
preconditioning matrix is negative-definite and the sets in (\ref{permissable_step_sizes_1GN}) will 
not be known in practice, whereas the condition $\alpha \in (0,2)$ in Theorem~\ref{SAN_lemma_convergence_lemma} is more practical, i.e., for the second Gauss-Newton method convergence is guaranteed for a constant step size which is easily selected and does not depend upon unknown quantities.

It will be seen in Section~\ref{compareGaussNewton2EM} that the second Gauss-Newton method has a close relationship
to the EM-algorithm. For this reason we postpone additional discussion about the rate of convergence of the second 
Gauss-Newton method until then.

\section{Relation to Existing Policy Search Methods} \label{sec:unifying}

In this section we detail the relationship between the second Gauss-Newton method and existing policy search methods; In Section~\ref{compareGaussNewton2NaturalGradients} we detail the relationship with natural gradient ascent and in Section~\ref{compareGaussNewton2EM} we detail the relationship with the EM-algorithm.

\subsection{Natural Gradient Ascent and the Second Gauss-Newton Method}\label{compareGaussNewton2NaturalGradients}

Comparing the form of the Fisher information matrix given in (\ref{fisher_information}) with $\mathcal{H}_2$ (\ref{pg_hessian_H2}) it can be seen that there is a close relationship between natural gradient ascent and the second Gauss-Newton method: in $\mathcal{H}_2$ there is an additional weighting of the integrand from the state-action value function. Hence, $\mathcal{H}_2$ incorporates information about the reward structure of the objective 
function that is not present in the Fisher information matrix.

We now consider how this additional weighting affects the search direction for natural gradient ascent and the Gauss-Newton approach. Given a norm on the parameter space, $||\cdot||$, the steepest ascent direction at $\bm{w} \in \mathcal{W}$ with respect to that 
norm is given by,
\begin{align}
\hat{\bm{p}} = \argmax{\{\bm{p}:||\bm{p}||=1\}} \lim_{\alpha \to 0}  \frac{U(\bm{w} + \alpha \bm{p}) - U(\bm{w})}{\alpha}.  \nn
\end{align}
Natural gradient ascent is obtained by considering the (local) norm $||\cdot||_{G(\bm{w})}$ given by
\begin{align}
||\bm{w} - \bm{w'}||^2_{G(\bm{w})}:=(\bm{w} - \bm{w'})^\top G(\bm{w}) (\bm{w} - \bm{w'}), \nn
\end{align}
with $G(\bm{w})$ as in (\ref{fisher_information2}).
%\begin{equation}
%G(\bm{w}) = - \sum_{s \in \mathcal{S}} \sum_{a \in \mathcal{A}} p_{\gamma}(s,a; \bm{w}) \nabla_{\bm{w}} \nabla_{\bm{w}}^\top \log \pi(a| s; \bm{w}). \label{NaturalMetric}
%\end{equation}
%Given $\bm{w} \in \mathcal{W}$ we denote the eigenvalues and eigenvectors of $G(\bm{w})$ as $\{\lambda_i\}_{i=1}^N$ and $\{\bm{x}_i\}_{i=1}^N$, respectively,  such that $\lambda_1 \ge \lambda_2 \ge \cdots \lambda_n \ge 0$. Given a search direction, $\bm{p}$, such that $\| \bm{p} \|_{G(\bm{w})} = 1$, denote the co-ordinates of $\bm{p}$ with respect to the basis, $\{\bm{x}_i\}_{i=1}^N$, as $\{\alpha_i\}_{i=1}^N$. It follows that $\| \bm{p} \|_{G(\bm{w})} = \sum_{i=1}^n \lambda_i \alpha_i^2 = 1$.
The natural gradient method allows less movement in the directions that have high norm which, as can be seen from the form of (\ref{fisher_information2}), are those directions that induce large changes to the policy over the parts of the state-action space that are likely to be visited under the current policy parameters. More movement is allowed in directions that either induce a small change in the policy, or induce large changes to the policy, but only in parts of the state-action space that are unlikely to be visited under the current policy parameters. In a similar manner the second Gauss-Newton method can be obtained by considering the (local) norm $||\cdot||_{\mathcal{H}_2(\bm{w})}$,
\begin{align}
||\bm{w} - \bm{w'}||^2_{\mathcal{H}_2(\bm{w})}:= - (\bm{w} - \bm{w'})^\top \mathcal{H}_2(\bm{w}) (\bm{w} - \bm{w'}), \nn
\end{align}
so that each term in (\ref{fisher_information}) is additionally weighted by the state-action value function, $Q(s,a;\bm{w})$. Thus, the directions which have high norm are those in which the policy is rapidly changing 
in state-action pairs that are not only likely to be visited under the current policy, but also have high value. 
Thus the second Gauss-Newton method updates the parameters more carefully if the behaviour in high value states 
is affected. Conversely, directions which induce a change only in state-action pairs of low value have low norm, 
and larger increments can be made in those directions.

\subsection{Expectation Maximization and the Second Gauss-Newton Method}\label{compareGaussNewton2EM}

It has previously been noted \citep{Kober-peters-2011} that the parameter update of steepest gradient ascent and the EM-algorithm can be related through the function $\mathcal{Q}$ defined in (\ref{pg_general_em}). In particular, the gradient (\ref{pg_gradient}) evaluated at $\bm{w}_k$ can be written in terms of $\mathcal{Q}$ as follows, 
\begin{equation*}
\nabla_{\bm{w}|\bm{w} = \bm{w}_k} U(\bm{w}) = \nabla_{\bm{w}|\bm{w} = \bm{w}_k} \mathcal{Q}(\bm{w}, \bm{w}_k), 
\end{equation*}
while the parameter update of the EM-algorithm is given by,
\begin{equation*}
\bm{w}_{k+1} = \argmax{\bm{w}\in\mathcal{W}} \mathcal{Q}(\bm{w}, \bm{w}_k).
\end{equation*}
In other words, steepest gradient ascent moves in the direction that most rapidly increases $\mathcal{Q}$ with respect to the first variable, while the EM-algorithm maximizes $\mathcal{Q}$ with respect to the first variable. While this relationship is true, it is also quite a negative result. It states that in situations in which it is not possible to explicitly maximize $\mathcal{Q}$ with respect to its first variable, then the alternative, in terms of the EM-algorithm, is a generalized EM-algorithm, which is equivalent to steepest gradient ascent. Given that the EM-algorithm is typically used to overcome the negative aspects of steepest gradient ascent, this is an undesirable alternative. It is possible to find the optimum of (\ref{pg_general_em}) numerically, but this is also undesirable as it results in a double-loop algorithm that could be computationally expensive. Finally, this result provides no insight into the behaviour of the EM-algorithm, in terms of the direction of its parameter update, when the maximization over $\bm{w}$ in (\ref{pg_general_em}) can be performed explicitly.

We now demonstrate that the step-direction of the EM-algorithm has an underlying relationship with the second of our proposed Gauss-Newton methods. In particular, we show that under suitable regularity conditions the direction of the EM-update, i.e., $\bm{w}_{k+1} - \bm{w}_k$, is the same, up to first order, as the direction of the second Gauss-Newton method that uses $\mathcal{H}_2(\bm{w})$ in place of $\mathcal{H}(\bm{w})$. 
 
\begin{Theorem}\label{theorem:analysis_em_generalized_gradient}
Suppose we are given a Markov decision process with objective (\ref{ps_objectiveFunction}) and Markovian trajectory distribution (\ref{trajectory_prob_parametric}). Consider the parameter update (M-step) of Expectation Maximization at the $k^{\textnormal{th}}$ iteration of the algorithm, i.e.,
\begin{equation*}
\bm{w}_{k+1} = \argmax{\bm{w}\in\mathcal{W}} \mathcal{Q}(\bm{w}, \bm{w}_k).
\end{equation*}
Provided that $\mathcal{Q}(\bm{w}, \bm{w}_k)$ is twice continuously differentiable in the first parameter we have that 
\begin{equation}
\bm{w}_{k+1} - \bm{w}_k = - \mathcal{H}_2^{-1}(\bm{w}_k) \nabla_{\bm{w}|\bm{w} = \bm{w}_k} U(\bm{w}) + \mathcal{O}(\|\bm{w}_{k+1} - \bm{w}_k\|^2). \label{apxn_to_EM_relation}
\end{equation}
Additionally, in the case where the $\log$-policy is quadratic the relation to the second Gauss-Newton method is exact, i.e., the second term on the \rhs of (\ref{apxn_to_EM_relation}) is zero.
\begin{proof}
See Section~\ref{EMconnectionProof} in the Appendix.
\end{proof}
\end{Theorem}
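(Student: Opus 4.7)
The plan is to exploit two key identities relating the EM auxiliary function $\mathcal{Q}$ to the gradient of $U$ and the matrix $\mathcal{H}_2$, then Taylor expand the first-order optimality condition of the M-step around $\bm{w}_k$.

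First I would establish the two identities. Using the definition of $\mathcal{Q}$ in (\ref{pg_general_em}), differentiation with respect to the first argument and evaluation at $\bm{w} = \bm{w}_k$ gives
\begin{equation*}
\nabla_{\bm{w}|\bm{w}=\bm{w}_k} \mathcal{Q}(\bm{w}, \bm{w}_k) = \sum_{(s,a) \in \mathcal{S} \times \mathcal{A}} p_{\gamma}(s,a;\bm{w}_k) Q(s,a;\bm{w}_k) \nabla_{\bm{w}|\bm{w}=\bm{w}_k} \log \pi(a|s;\bm{w}),
\end{equation*}
which by Theorem~\ref{theorem_policy_gradient_theorem} is exactly $\nabla_{\bm{w}|\bm{w}=\bm{w}_k} U(\bm{w})$. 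Taking a second derivative and evaluating at $\bm{w} = \bm{w}_k$ gives, by the definition (\ref{pg_hessian_H2}), the matrix $\mathcal{H}_2(\bm{w}_k)$. Note that the coefficients $p_\gamma(s,a;\bm{w}_k) Q(s,a;\bm{w}_k)$ do not depend on the differentiation variable $\bm{w}$, so the derivatives only hit $\log \pi$.

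Next I would write down the M-step first-order optimality condition. Since $\bm{w}_{k+1}$ maximizes $\mathcal{Q}(\cdot, \bm{w}_k)$ we have $\nabla_{\bm{w}|\bm{w}=\bm{w}_{k+1}} \mathcal{Q}(\bm{w}, \bm{w}_k) = \bm{0}$. Applying a first-order Taylor expansion of the map $\bm{w}\mapsto \nabla_{\bm{w}}\mathcal{Q}(\bm{w}, \bm{w}_k)$ about $\bm{w}_k$, valid by the twice continuous differentiability hypothesis, yields
\begin{equation*}
\bm{0} = \nabla_{\bm{w}|\bm{w}=\bm{w}_k} \mathcal{Q}(\bm{w}, \bm{w}_k) + \bigl(\nabla_{\bm{w}} \nabla_{\bm{w}}^\top \mathcal{Q}(\bm{w}, \bm{w}_k)\bigr)\big|_{\bm{w}=\bm{w}_k}(\bm{w}_{k+1} - \bm{w}_k) + \mathcal{O}(\|\bm{w}_{k+1} - \bm{w}_k\|^2).
\end{equation*}
Substituting the two identities gives $\bm{0} = \nabla_{\bm{w}|\bm{w}=\bm{w}_k} U(\bm{w}) + \mathcal{H}_2(\bm{w}_k) (\bm{w}_{k+1} - \bm{w}_k) + \mathcal{O}(\|\bm{w}_{k+1} - \bm{w}_k\|^2)$. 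Left-multiplying by $\mathcal{H}_2^{-1}(\bm{w}_k)$, which exists under the implicit assumption that the M-step is a proper (strict) maximum, rearranges to the desired expression (\ref{apxn_to_EM_relation}).

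Finally, for the exactness claim when $\log \pi(a|s;\bm{w})$ is quadratic in $\bm{w}$, observe that $\mathcal{Q}(\cdot, \bm{w}_k)$ is then a quadratic function of $\bm{w}$, so its Taylor expansion truncates after the second-order term and the $\mathcal{O}(\|\bm{w}_{k+1} - \bm{w}_k\|^2)$ remainder vanishes identically. The main (mild) subtlety to be careful about is keeping the roles of the two arguments of $\mathcal{Q}$ separate, ensuring that only the first-argument derivatives are taken and that the second argument is frozen at $\bm{w}_k$ throughout; no work is done on the E-step side because the reweighting $p_\gamma(s,a;\bm{w}_k) Q(s,a;\bm{w}_k)$ is treated as a constant factor.
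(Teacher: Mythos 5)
Your proposal is correct and follows essentially the same route as the paper's proof: a first-order Taylor expansion of the first-argument gradient $\nabla^{10}\mathcal{Q}(\cdot,\bm{w}_k)$ about $\bm{w}_k$, combined with the M-step stationarity condition and the identities $\nabla^{10}\mathcal{Q}(\bm{w}_k,\bm{w}_k)=\nabla_{\bm{w}|\bm{w}=\bm{w}_k}U(\bm{w})$ and $\nabla^{20}\mathcal{Q}(\bm{w}_k,\bm{w}_k)=\mathcal{H}_2(\bm{w}_k)$, with the quadratic-log-policy case handled by noting the expansion truncates. The only difference is cosmetic ordering (you establish the identities before expanding, the paper after), and your explicit remark that the invertibility of $\mathcal{H}_2(\bm{w}_k)$ is an implicit assumption is a point the paper leaves unstated.
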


Given a sequence of parameter vectors, $\{\bm{w}_k\}_{k=1}^\infty$, generated through an application of the EM-algorithm, then $\lim_{k \to \infty} \| \bm{w}_{k+1} - \bm{w}_k \| = 0$. 
This means that the rate of convergence of the EM-algorithm will be the same as that of the second Gauss-Newton method when considering a constant step size of one. 
We formalize this intuition and provide the convergence properties of the EM-algorithm when applied to Markov decision processes in the following theorem. This is, to our knowledge, the first formal derivation of the convergence properties for this application of the EM-algorithm.   

\begin{Theorem}\label{EM_lemma_convergence_lemma}
Suppose that the sequence, $\{\bm{w}_k\}_{k \in \mathbb{N}}$, is generated by an application of the EM-algorithm, where the sequence converges to $\bm{w}^*$. Denoting the update operation of the EM-algorithm by $G_{\textnormal{EM}}$, so that $\bm{w}_{k+1} = G_{\textnormal{EM}}(\bm{w}_k)$, then
\begin{align}
\nabla G_{\textnormal{EM}}(\bm{w}^*) &= I - \mathcal{H}^{-1}_2(\bm{w}^*) \mathcal{H}(\bm{w}^*). \nn
\end{align}
When the policy parametrization is value consistent with respect to the given Markov Decision Process this simplifies to $\nabla G_{\textnormal{EM}}(\bm{w}^*) = I - \mathcal{H}_2(\bm{w}^*)^{-1} \mathcal{A}_1(\bm{w}^*)$. When the Hessian, $\mathcal{H}(\bm{w}^*)$, is negative-definite then $\rho(\nabla G_{\textnormal{EM}}(\bm{w}^*)) < 1$ and $\bm{w}^*$ is a local point of a attraction for the EM-algorithm.
\begin{proof}
See Section~\ref{ConvEM_appendix} in the Appendix.
\end{proof}
\end{Theorem}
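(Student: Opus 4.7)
The plan is to apply the implicit function theorem to the first-order condition characterizing the M-step and then identify the resulting partial derivatives. By the definition of $G_{\textnormal{EM}}$, the identity $\nabla_1 \mathcal{Q}(G_{\textnormal{EM}}(\bm{w}), \bm{w}) = \bm{0}$ holds in a neighborhood of the fixed point $\bm{w}^*$. Differentiating in $\bm{w}$ and evaluating at $\bm{w}^* = G_{\textnormal{EM}}(\bm{w}^*)$ yields
\begin{equation*}
\nabla G_{\textnormal{EM}}(\bm{w}^*) = -[\nabla_1^2 \mathcal{Q}(\bm{w}^*, \bm{w}^*)]^{-1}\, \nabla_2 \nabla_1 \mathcal{Q}(\bm{w}^*, \bm{w}^*).
\end{equation*}

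My next step is to identify each partial in terms of Hessian components from Theorem~\ref{theorem_policy_hessian_theorem}. Differentiating the definition (\ref{pg_general_em}) of $\mathcal{Q}$ twice in its first argument and setting $\bm{w}=\bm{w}^*$ produces exactly $\mathcal{H}_2(\bm{w}^*)$ as given by (\ref{pg_hessian_H2}). For the mixed partial I would use the relation $\nabla_1 \mathcal{Q}(\bm{w},\bm{w}) = \nabla U(\bm{w})$, which is immediate from Theorem~\ref{theorem_policy_gradient_theorem}. Applying the chain rule to $\bm{w}\mapsto \nabla_1 \mathcal{Q}(\bm{w},\bm{w})$ gives $\mathcal{H}(\bm{w}) = \nabla_1^2 \mathcal{Q}(\bm{w},\bm{w}) + \nabla_2 \nabla_1 \mathcal{Q}(\bm{w},\bm{w})$, so $\nabla_2 \nabla_1 \mathcal{Q}(\bm{w}^*,\bm{w}^*) = \mathcal{H}(\bm{w}^*) - \mathcal{H}_2(\bm{w}^*)$. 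Substitution produces the claimed formula $\nabla G_{\textnormal{EM}}(\bm{w}^*) = I - \mathcal{H}_2^{-1}(\bm{w}^*)\mathcal{H}(\bm{w}^*)$. The value-consistent simplification then follows by combining $\mathcal{H}_{12}(\bm{w}^*) = \bm{0}$ from Theorem~\ref{lem:nondecreasing} with the reduced decomposition in Theorem~\ref{corollary_policy_hessian_theorem}, analogously to the simplifications already used in Theorems~\ref{FAN_lemma_convergence_lemma} and~\ref{SAN_lemma_convergence_lemma}.

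For the spectral radius bound I would exploit the EM lower bound (Theorem~\ref{theorem_em_lower_bound}) with the optimal variational distribution $q^*(\xi) \propto p(\xi;\bm{w}^*)R(\xi)$. Substituting $q^*$ into the bound and identifying $\mathbb{E}_{q^*}[\log p(\xi;\bm{w})]$ with $U(\bm{w}^*)^{-1}\mathcal{Q}(\bm{w},\bm{w}^*)$ (up to terms independent of $\bm{w}$) shows that $\bm{w}\mapsto \log U(\bm{w}) - U(\bm{w}^*)^{-1}\mathcal{Q}(\bm{w},\bm{w}^*)$ attains its minimum at $\bm{w}=\bm{w}^*$. Taking the Hessian at $\bm{w}^*$ and using $\nabla U(\bm{w}^*)=\bm{0}$ yields the Loewner ordering $\mathcal{H}(\bm{w}^*) \succeq \mathcal{H}_2(\bm{w}^*)$. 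Since both matrices are negative-definite (by assumption and by Theorem~\ref{NegativeDefiniteLemma} respectively), any generalized eigenvalue $\lambda$ satisfying $\mathcal{H}(\bm{w}^*)v = \lambda \mathcal{H}_2(\bm{w}^*)v$ equals $v^\top \mathcal{H}(\bm{w}^*)v / v^\top \mathcal{H}_2(\bm{w}^*)v$, a ratio of two negative numbers, so $\lambda > 0$, while the Loewner ordering forces $\lambda \le 1$. Hence the eigenvalues $1-\lambda$ of $\nabla G_{\textnormal{EM}}(\bm{w}^*)$ lie in $[0,1)$, giving $\rho(\nabla G_{\textnormal{EM}}(\bm{w}^*)) < 1$; that $\bm{w}^*$ is a local point of attraction then follows from Ostrowski's theorem applied to the $C^1$ map $G_{\textnormal{EM}}$.

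The main technical obstacle is the spectral-radius step: carefully tracking the $U(\bm{w}^*)^{-1}$ normalization that appears when converting between the trajectory-level lower bound and the state-action form of $\mathcal{Q}$, and upgrading the Loewner ordering $\mathcal{H}(\bm{w}^*) \succeq \mathcal{H}_2(\bm{w}^*)$ to a strict inequality along all nontrivial directions so that $\lambda = 1$ is excluded and $\rho < 1$ holds strictly. The negative-definiteness hypothesis on $\mathcal{H}(\bm{w}^*)$, together with the strictness of Jensen's inequality underlying the EM bound in directions that do not leave $U$ stationary, is what closes this gap.
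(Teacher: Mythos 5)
Your derivation of the Jacobian formula is essentially the paper's argument: the paper differentiates the first-order condition $\nabla^{10}\mathcal{Q}(\bm{w}_{k+1},\bm{w}_k)=\bm{0}$ via a Taylor expansion about $(\bm{w}^*,\bm{w}^*)$ rather than invoking the implicit function theorem by name, but the identifications $\nabla^{20}\mathcal{Q}(\bm{w},\bm{w})=\mathcal{H}_2(\bm{w})$ and $\nabla^{11}\mathcal{Q}(\bm{w},\bm{w})=\mathcal{H}(\bm{w})-\nabla^{20}\mathcal{Q}(\bm{w},\bm{w})$ via the chain rule applied to $\nabla^{10}\mathcal{Q}(\bm{w},\bm{w})=\nabla U(\bm{w})$ are exactly the ones used there, as is the value-consistent simplification through Theorem~\ref{lem:nondecreasing}.

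For the spectral radius bound you take a genuinely different route, and it has a gap precisely where you flag it. The Jensen/lower-bound argument only yields the non-strict Loewner ordering $\mathcal{H}(\bm{w}^*)\succeq\mathcal{H}_2(\bm{w}^*)$, hence generalized eigenvalues $\lambda\le 1$ and $\rho(\nabla G_{\textnormal{EM}}(\bm{w}^*))\le 1$; appealing to ``strictness of Jensen's inequality'' does not close this, because the bound in Theorem~\ref{theorem_em_lower_bound} is tight to second order in directions along which the variational distribution's dependence on $\bm{w}$ is degenerate, and nothing in the hypotheses rules such directions out a priori. The paper instead obtains the strict inequality structurally: by Lemma~\ref{H11_is_positive_definite} the difference $\mathcal{H}(\bm{w})-\mathcal{H}_2(\bm{w})=\mathcal{H}_1(\bm{w})+\mathcal{H}_{12}(\bm{w})+\mathcal{H}_{12}^\top(\bm{w})$ equals the reward-weighted sum of outer products of score vectors appearing in the Hessian expansion, which is positive-definite because the reward is non-negative; this gives $\mathcal{H}(\bm{w}^*)\succ\mathcal{H}_2(\bm{w}^*)$ directly, places the eigenvalues of $\mathcal{H}_2^{-1}(\bm{w}^*)\mathcal{H}(\bm{w}^*)$ strictly inside $(0,1)$, and then the conclusion follows from Theorem~\ref{SAN_lemma_convergence_lemma} with $\alpha=1$. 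If you want to keep your EM-bound argument you would need to supplement it with exactly this kind of positive-definiteness statement for the ``missing information'' term; otherwise you only prove $\rho\le 1$, which is not enough for Ostrowski's theorem.
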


\section{Experiments}\label{sec:experiments}

In this section we provide an empirical evaluation of the Gauss-Newton methods on a varied set of challenging domains.

\subsection{Affine Invariance Experiment}

In the first experiment we give an empirical illustration that the full Gauss-Newton methods are invariant to affine transformations of the parameter space. Additionally, we illustrate that the diagonal Gauss-Newton methods are invariant to (non-zero) rescalings of the dimensions of the parameter space. We consider the simple two state example of \cite{Kakade-2002}. In this example problem the policy has only two parameters, so that it is possible to plot the trace of the policy during training. The policy is trained using steepest gradient ascent, the full Gauss-Newton methods and the diagonal Gauss-Newton methods. We train the policy in both the original and linearly transformed parameter space. The policy traces of the various algorithms are given in Figure~\ref{fig:invariance_experiment}. As expected steepest gradient ascent is affected by both forms of transformation, while the diagonal Gauss-Newton methods are invariant to diagonal rescalings of the parameter space, and the full Gauss-Newton methods are invariant to both forms of transformation. 

\begin{figure}[t]
\captionsetup[subfigure]{justification=centering}
\begin{minipage}[b]{.5\linewidth}
\centering \includegraphics[height=2.5in,width=2.75in]{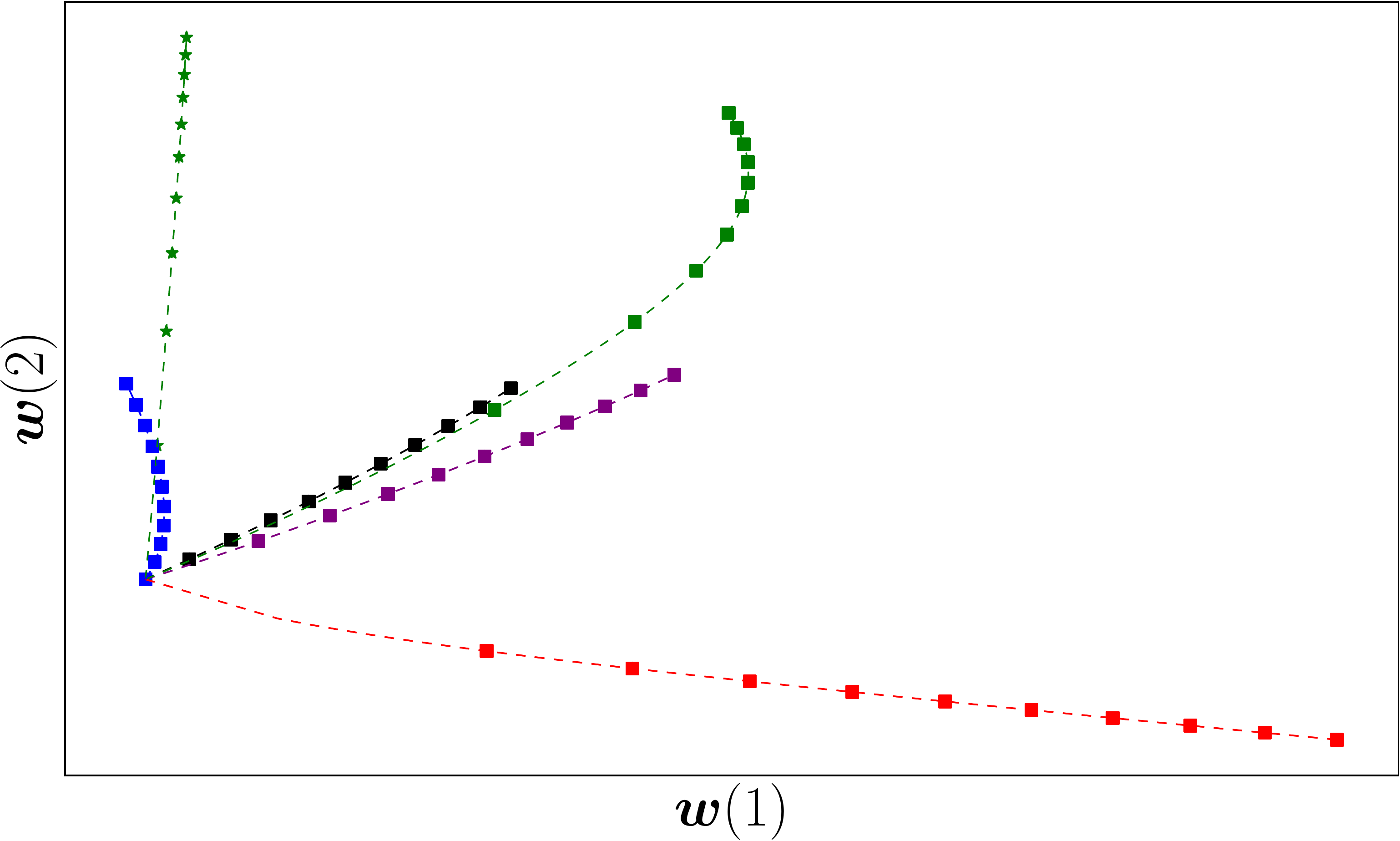}
\subcaption{Scale Invariance Experiment}\label{fig:scale_invariance_experiment}
\end{minipage}%
\begin{minipage}[b]{.5\linewidth}
\centering \includegraphics[height=2.5in,width=2.75in]{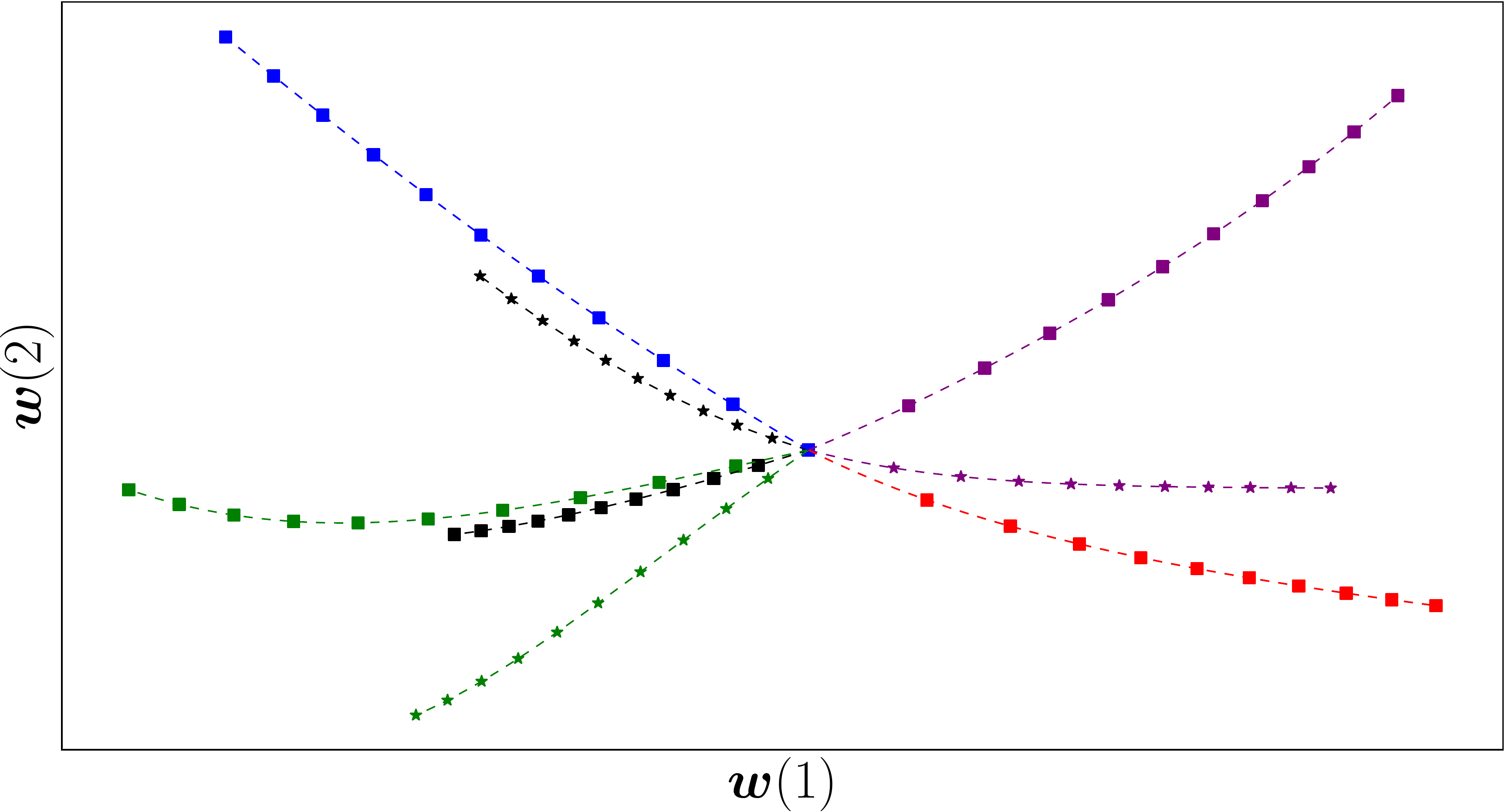}
\subcaption{Affine Invariance Experiment}\label{fig:affine_invariance_experiment}
\end{minipage}%
\newline
\begin{minipage}[b]{1.0\linewidth}
\centering \includegraphics[height=1.0in,width=5.25in]{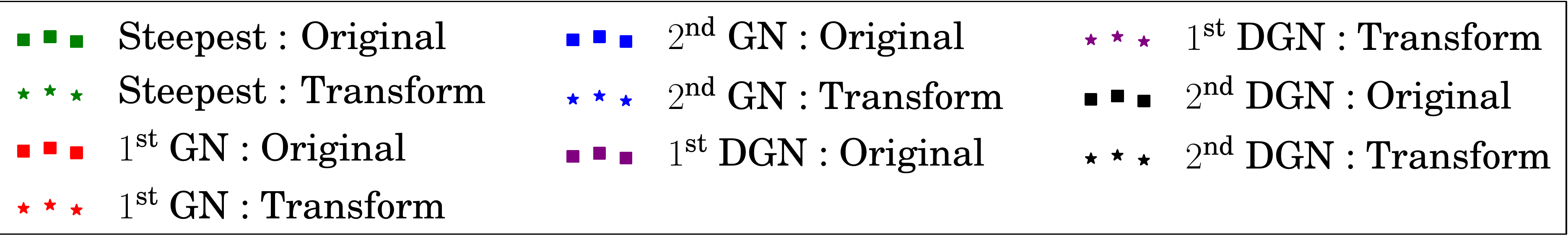}
\end{minipage}%
\caption{Results from (a) the scale invariance experiment and (b) the affine invariance experiment. The plots show the trace of the policy through the parameter space during the course of training. The plots give the trace of the policy when trained in the original parameter space (square markers), and when trained in the transformed parameter space (star markers). For comparison, the policy traces in the transformed parameter space have been mapped back to the original space. The plots show the trace of the policy when the policy is trained with steepest gradient ascent (green), the first Gauss-Newton method (red), the second Gauss-Newton method (blue), the first diagonal Gauss-Newton method (purple) and the second diagonal Gauss-Newton method (black).}\label{fig:invariance_experiment}
\end{figure}

\subsection{Cart-Pole Swing-Up Benchmark Experiment}

We also implemented the Gauss-Newton methods on the standard simulated Cart-pole benchmark problem. This problem involves a pole attached at a pivot to a cart, and by applying force to the cart the pole must be swung to the vertical position and balanced. The problem is under-actuated in the sense that insufficient power is available to drive the pole directly to the vertical position hence the problem captures the notion of trading off immediate reward for long term gain. In this episodic experiment we used an actor-critic architecture \citep{DBLP:conf/nips/KondaT99} using compatible features to fit the Q-function.

We used the same simulator as \citet{DBLP:journals/jmlr/LagoudakisP03}, except here we allow continuous actions and choose a continuous reward signal. The state space is two dimensional, $s = (\theta, \dot{\theta})$ representing the angle ($\theta = 0$ when the pole is pointing vertically upwards) and angular velocity of the pole. The action space is $\myset{A} = [-50,50]$ representing the horizontal force in Newtons applied to the cart (i.e., any actions of greater magnitude returned by the controller are clipped at $\pm 50$). Uniform noise in $[-10,10]$ is added to each action (before clipping). The system dynamics are $\theta_{t+1} = \theta_t + \Delta_t \dot {\theta_t}$, $\dot\theta_{t+1} = \dot\theta_t + \Delta_t \ddot {\theta_t}$ where
$$ \ddot {\theta} = \frac{g \sin(\theta) - \alpha m \ell (\dot{\theta})^2 \sin(2 \theta)/2 - \alpha \cos(\theta) u}{ 4\ell / 3 - \alpha m \ell \cos^2(\theta)}, $$
where $g=9.8m/s^2$ is the acceleration due to gravity, $m=2kg$ is the mass of the pole, $M = 8kg$ is the mass of the cart, $\ell = 0.5m$ is the length of the pole and $\alpha = 1/(m+M)$. We choose $\Delta_t = 0.1s$. Rewards $R(s,a) = \frac{1+ \cos(\theta)}{2}$, the discount factor is $\gamma = 0.99$, the horizon is $H=100$, and the pole begins in the downwards position, $s_0 = (\pi,0)$.

The controller is a Gaussian,
\begin{equation*}
\pi(a|s; \bm{w}) = \mathcal{N}(a|{\bm \phi}(s)^\top {\bm w}, \sigma^2),
\end{equation*}
with radial basis features, $\phi_i(s) = \exp{\frac{1}{2}(c_i - s)^\top \Lambda (c_i - s)}$. For each separate experiment the 100 centers $c_i$ were drawn uniformly at random from $[-\pi,\pi]\times[-4\pi,4\pi]$, the bandwidth was fixed $\Lambda =\left( \begin{array}{cc}
1 & 0 \\
0 & 1/4 \\ \end{array} \right)$ and the policy noise $\sigma$ was fixed at $2$ (these parameters were found by an informal search). Controller weights ${\bm w_0}$ were initialized randomly for each experiment.

The policy was updated after every 10 trajectories, i.e., each iteration corresponds to 10 episodes of experience. Of these, 5 trajectories were used to estimate the policy gradient and the preconditioning matrix, while the remaining 5 trajectories were used to learn an approximation $\hat Q(s,a;{\bm w}) = {\bm\psi}(s,a;{\bm w})^\top {\bm \theta}$ to the Q-function $Q(s,a;{\bm w})$ using the compatible features \citep{Kakade-2002},
\begin{align}
{\bm\psi}(s,a;{\bm w}) = \nabla _{\bm w}\log \pi(a|s;{\bm w}) = \frac{1}{\sigma^2} (a-{\bm \phi}(s)^\top {\bm w}){\bm \phi}(s). \nn
\end{align}
The weight vector $\bm \theta$ was learnt using least-squares linear regression. For each $(s_t,a_t)$ in an experienced trajectory the targets were provided by Monte-Carlo roll-out estimates
$$ Q(s_t,a_t;{\bm w}) \approx \sum_{\tau = 1}^H \gamma^{\tau-1} R(s_{t+\tau-1},a_{t+\tau-1}). $$
Note that each trajectory was therefore simulated for a length $2H$, rather than $H$, in order to gather the target data. A regularization parameter was validated on a held out subset of the data.

We compared 5 algorithms: steepest ascent, {\it{`Steepest'}}, (\ref{steepest_gradient_step}); the natural gradients algorithm, {\it{`Natural'}}, (\ref{natural_gradients_step}) with preconditioner ${\mathcal M}({\bm w}) = G({\bm w})^{-1}$; compatible natural gradients, {\it{`Comp Natural'}}, in which the policy parameter is updated in the direction ${\bm \theta}$ of the Q-function weight vector \citep{Kakade-2002}; the first Gauss-Newton method, {\it{`First G-N'}}, (\ref{firstGN_step}) using ${\mathcal M}({\bm w}) = -{\mathcal A}_1({\bm w})^{-1}$; the second Gauss-Newton method, {\it{`Second G-N'}}, (\ref{secondGN_step}) using ${\mathcal M}({\bm w}) = -{\mathcal H}_2({\bm w})^{-1}$. To precondition the gradients we solved the required linear systems using steepest descent using the gradient as a warm start, for a maximum of 250 iterations, rather than direct inversion. This was found to be more stable in this experiment than inversion of the preconditioning matrices for all methods since the Fisher information matrix and the (approximate) Hessians can be poorly conditioned: for example when the policy trajectories are supported entirely on a region of space in which some features are never active, neither the gradient, Hessian or Fisher information matrix will have any components corresponding to those feature dimensions.

We used a step size of $\alpha_t = \frac{A}{1+t/100}$ i.e.,
$$ {\bm w}_{t+1}  = {\bm w}_{t} + \alpha_t d({\bm w}_{t}) $$
where $d({\bm w}_{t})$ is the search direction at iteration $t$. We ran the experiment 20 times over a range $A\in \{ 1/4 , 1/2 ,1,2,4,..., 512, 1024, 2048 \}$ to choose the best step size for each method. The experiments were then run 50 times for the best step size to get the unbiased estimate of performance for that step size, which we report. After each policy update we estimated the cumulative reward of the policy (this requires no additional data, since the data used to estimate the return is exactly the data used to estimate the Q-function) and if the return was found to have decreased we returned to the previous parameter point. This simple heuristic (a 2-point line search) prevents variance in the gradient estimates from causing policy degradation and instability.

Figure~\ref{CartPoleFig} shows the cumulative reward after each iteration for the 5 methods along with the standard error. Cumulative reward of 50 is a near optimal policy in which the pole is quickly swung up and balanced for the entire episode. Cumulative reward of 40 to 45 indicates that the pole is swung up and balanced, but either not optimally quickly, or that the controller is unable to balance the pole for the entire remainder of the episode. The Gauss-Newton methods significantly outperform all competitor methods both in terms of the speed at which good policies are learned and the average value of the policy at convergence. Furthermore, as predicted by theory, a step-size of 1 for the  Gauss-Newton methods was found to perform well; i.e., good performance could be obtained without step-size tuning.

\subsection{Non-Linear Navigation Experiment}

The next domain that we consider is the synthetic two-dimensional non-linear MDP considered in \cite{vlassis-etal-2009}. 
The state-space of the problem is two-dimensional, $\bm{s} = (s^1, s^2)$, in which $s^1$ is the agent's position and $s^2$ is the agent's velocity. The control is one-dimensional and the dynamics of the system is given as follows,
\begin{align*}
s^1_{t+1} &= s^1_t + \frac{1}{1 + e^{-u_t}} - 0.5 + \kappa, \\
s^2_{t+1} &= s^2_t - 0.1 s^1_{t+1} + \kappa,
\end{align*}
with $\kappa$ a zero-mean Gaussian random variable with standard deviation $\sigma_{\kappa} = 0.02$. The agent starts in the state $\bm{s} = (0,1)$, with the addition of Gaussian noise with standard deviation $0.001$, and the objective is for the agent to reach the target state, $\bm{s}_{\textnormal{target}} = (0,0)$. We use the same policy as in \cite{vlassis-etal-2009}, which is given by $a_t = (\bm{w} + \bm{\epsilon}_t)^\top \bm{s}_t$, with control parameters, $\bm{w}$, and $\bm{\epsilon}_t \sim \mathcal{N}(\bm{\epsilon}_t; \bm{0}, \sigma_{\epsilon}^2 I)$. The objective function is non-trivial for $\bm{w} \in [0, 60] \times [-8, 0]$. In the experiment the initial control parameters were sampled from the region $\bm{w}_0 \in [0, 60] \times [-8, 0]$. In all algorithms $50$ trajectories were sampled during each training iteration and used to estimate the search direction. We consider a finite planning horizon, $H = 80$. The experiment was repeated $100$ times and the results of the experiment are given in Figure~\ref{fig:vlassis_main_experiment}, which gives the mean and standard error of the results. The step size sequences of steepest gradient ascent, natural gradient ascent and the Gauss-Newton method were all tuned for performance and the results shown were obtained from the best step size sequence for each algorithm. 

\begin{figure*}[t!]
    \centering
    \begin{subfigure}[t]{0.5\textwidth}
        \centering
	\centering \includegraphics[height=2.3in]{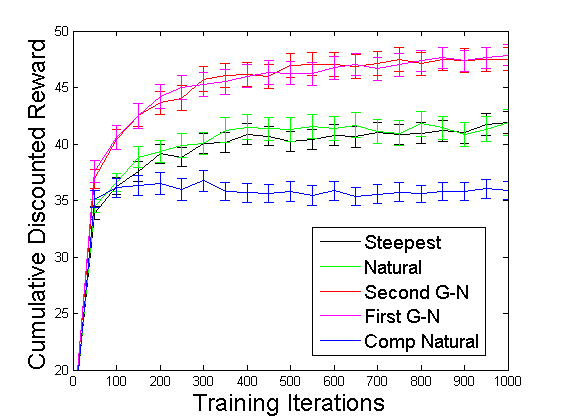}
	\caption{Cart-Pole Experiment: Results}\label{CartPoleFig}
    \end{subfigure}%
    ~ 
    \begin{subfigure}[t]{0.5\textwidth}
        \centering
        \includegraphics[height=2.2in]{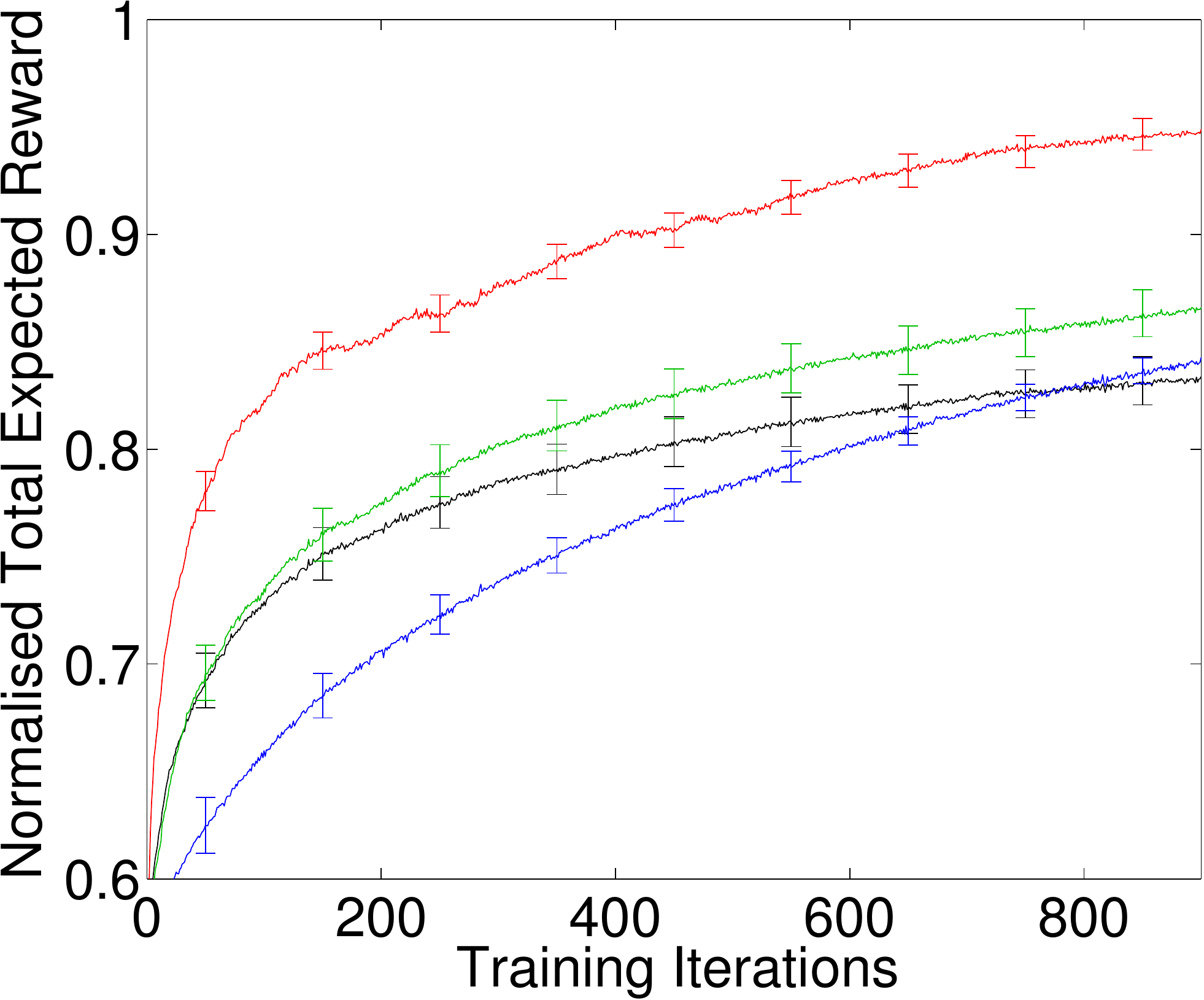}
        \caption{Non-Linear Navigation Task : Results}\label{fig:vlassis_main_experiment}
    \end{subfigure}
    \caption{(a) Results from the cart-pole experiment. (b) Results from the non-linear navigation task, 
 with the results for steepest gradient ascent (black), Expectation Maximization (blue), natural gradient ascent (green) and the Gauss-Newton method (red).}
\end{figure*}

\subsection{$N$-link Rigid Manipulator Experiments}

The $N$-link rigid robot arm manipulator is a standard continuous model, consisting of an end effector connected to an $N$-linked rigid body \citep{khalil-2001}. A graphical depiction of a 3-link rigid manipulator is given in Figure \ref{fig:3_link_rigid_manipulator}. A typical continuous control problem for such systems is to apply appropriate torque forces to the joints of the manipulator so as to move the end effector into a desired position. The state of the system is given by $\bm{q}$, $\dot{\bm{q}}$, $\ddot{\bm{q}} \in \mathbb{R}^N$, where  $\bm{q}$, $\dot{\bm{q}}$ and $\ddot{\bm{q}}$ denote the angles, velocities and accelerations of the joints respectively, while the control variables are the torques applied to the joints $\bm{\tau} \in \mathbb{R}^N$. The nonlinear state equations of the system are given by \citep{spong-etal-2005},
\begin{equation}
M(\bm{q})\ddot{\bm{q}} + C(\dot{\bm{q}}, \bm{q}) \dot{\bm{q}} + g(\bm{q}) = \bm{\tau},
\end{equation}
where $M(\bm{q})$ is the inertia matrix, $C(\dot{\bm{q}}, \bm{q})$ denotes the Coriolis and centripetal forces and $g(\bm{q})$ is the gravitational force. While this system is highly nonlinear it is possible to define an appropriate control function $\hat{\bm{\tau}}(\bm{q}, \dot{\bm{q}})$ that results in linear dynamics in a different state-action space. This technique is known as feedback linearisation \citep{khalil-2001}, and in the case of an $N$-link rigid manipulator recasts the torque action space into the acceleration action space. This means that the state of the system is now given by $\bm{q}$ and $\dot{\bm{q}}$, while the control is $\bm{a} = \ddot{\bm{q}}$. Ordinarily in such problems the reward would be a function of the generalized co-ordinates of the end effector, which results in a non-trivial reward function in terms of $\bm{q}$, $\dot{\bm{q}}$ and $\ddot{\bm{q}}$. This can be 
accounted for by modelling the reward function as a mixture of Gaussians \citep{hoffman-etal-09}, but for simplicity we consider the simpler problem where the reward is a function of $\bm{q}$, $\dot{\bm{q}}$ and $\ddot{\bm{q}}$ directly. 
In all of the experiments in this section we consider a $3$-link rigid manipulator. 

Under certain forms of policy parametrization it is possible to perform exact evaluation of the search direction in these systems. 
As such, these systems allow for the direct comparison of the search direction of various policy search algorithms, but yet are sufficiently difficult optimization problems to provide a challenging platform for these methods. 
In all experiments we consider a policy of the form,
\begin{equation*}
\pi(a|s; \bm{w}) = \mathcal{N}(a|Ks + \bm{m}, \sigma^2 I),
\end{equation*}
with $\bm{w} = (K, \bm{m}, \sigma)$ and $s \in \mathbb{R}^{n_s}$, $a \in \mathbb{R}^{n_a}$, for some $n_s, n_a \in \mathbb{N}$. 
We consider the finite horizon undiscounted problem in this section, so that the gradient of the objective function takes the form
\begin{equation*}
\nabla_{\bm{w}} U(\bm{w}) = \int \int ds da \nabla_{\bm{w}} \log \pi(a|s; \bm{w}) \sum_{t=1}^H p_t(s, a; \bm{w}) Q(s,a, t; \bm{w}),
\end{equation*}
with the preconditioning matrices of natural gradient ascent and the Gauss-Newton methods taking analogous forms.
For any $(s, a) \in \mathcal{S} \times \mathcal{A}$, it can be shown that the derivative of $\pi(a|s; \bm{w})$ is a quadratic in $(s, a)$. This means that to calculate the search directions of steepest gradient ascent, natural gradient ascent, Expectation Maximization and the Gauss-Newton methods it is necessary to calculate the first two moments of 
$p_t(s, a; \bm{w}) Q(s, a, t; \bm{w})$ w.r.t. $(s, a)$, for each $t \in \mathbb{N}_H$.
These calculations can be done using the methods presented in \cite{furmston-thesis}. In these experiments the maximal value of the objective function varied dramatically depending on the random initialization of the system. To account for the variation in the maximal value of the objective function the results of each experiment are normalized by the maximal value achieved between the algorithms for that experiment so that the result displayed is the percentage of reward received in comparison to the best results among the algorithms considered in the experiment.

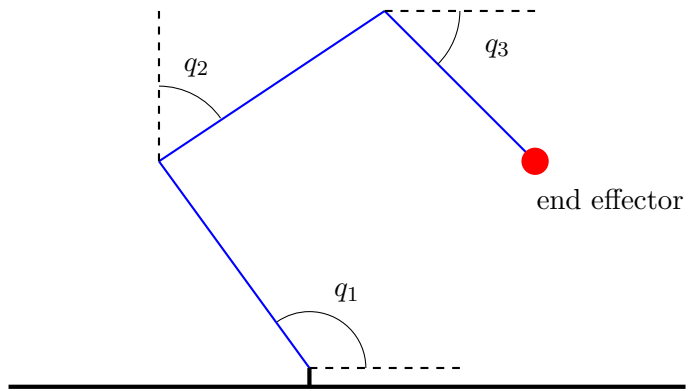
\begin{figure}[t]
\centering
\begin{tikzpicture}
%\draw[help lines] (0,0) grid (15,15);
\draw[ultra thick] (-1,1) -- (8,1);
\draw[ultra thick] (3,1) -- (3,1.25);
\draw[blue,thick] (3,1.25) -- (1,4);
\draw[blue,thick] (1,4) -- (4,6);
\draw[blue,thick] (4,6) -- (6,4);
\draw[dashed, thick] (3, 1.25) -- (5,1.25);
\draw[dashed, thick] (1,4) -- (1,6);
\draw[dashed, thick] (4,6) -- (6,6);
\draw[thin] (3.75, 1.25) arc (0:125:0.75cm);
\draw[thin] (1,5) arc (90:35:1cm);
\draw[thin] (5,6) arc (0:-45:1cm);
\node(q_1) at (3.5,2.25) {$q_1$};
\node(q_2) at (1.5,5.25) {$q_2$};
\node(q_3) at (5.5,5.5) {$q_3$};
\node [circle,double,fill=red!100] (label) at (6,4) {};
\node(end_effector) at (7,3.5) {end effector};
\end{tikzpicture}
\caption[A Graphical Depiction of the 3-link Rigid Manipulator.]{A graphical depiction of a 3-link rigid manipulator, with the angles of the joints given by $q_1$, $q_2$ and $q_3$ respectively.} \label{fig:3_link_rigid_manipulator}
\end{figure}

\subsubsection{Experiment Using Line Search}

In the first experiment we compare the search direction of steepest gradient ascent, natural gradient ascent, Expectation Maximization and the second Gauss-Newton method.
For all algorithms that required the specification of a step size we use the \texttt{minFunc}\footnote{This software library is freely available at \url{http://www.di.ens.fr/~mschmidt/Software/minFunc.html}.} optimization library to perform a line search. We also use the \texttt{minFunc} library to provide a stopping criterion for all algorithms. We found that both the line search algorithm and the step size initialization had a significant effect on the performance of all algorithms. We therefore tried various combinations of these settings for each algorithm and selected the one that gave the best performance. We tried bracketing line search algorithms with: step size halving; quadratic/cubic interpolation from new function values; cubic interpolation from new function and gradient values; step size doubling and bisection; cubic interpolation/extrapolation with function and gradient values. We tried the following step size initializations: quadratic initialization using previous function value, and new function value and gradient; twice the previous step size. To handle situations where the initial policy parametrization was in a `flat' area of the parameter space far from any optima we set the function and point toleration of \texttt{minFunc} to zero for all algorithms. We repeated each experiment $100$ times and the results are shown in Figure \ref{fig:3_link_line_search_results}. The second Gauss-Newton method significantly outperforms all of the comparison algorithms. The step direction of Expectation Maximization is very similar to the search direction of the second Gauss-Newton method in this problem. In fact, given that the $\log$-policy is quadratic in the mean parameters, they are the same for the mean parameters. The difference in performance between the Gauss-Newton method and Expectation Maximization is largely explained by the tuning of the step size in the Gauss-Newton method, compared to the constant step size of one in Expectation Maximization. To observe the effect of poor scaling on the performance of the various algorithms we observe the number of iterations that each algorithm requires. These counts are given in table \ref{tab:linear_system_iteration_counts}. Steepest gradient ascent required far more iterations than either natural gradient ascent or the Gauss-Newton method, both of which require roughly the same amount of iterations. This validates that both natural gradient ascent and the Gauss-Newton method are more robust to poor scaling than steepest gradient ascent. 

\begin{figure*}[t!]
    \centering
    \begin{subfigure}[t]{0.5\textwidth}
        \centering
        \includegraphics[height=2.2in]{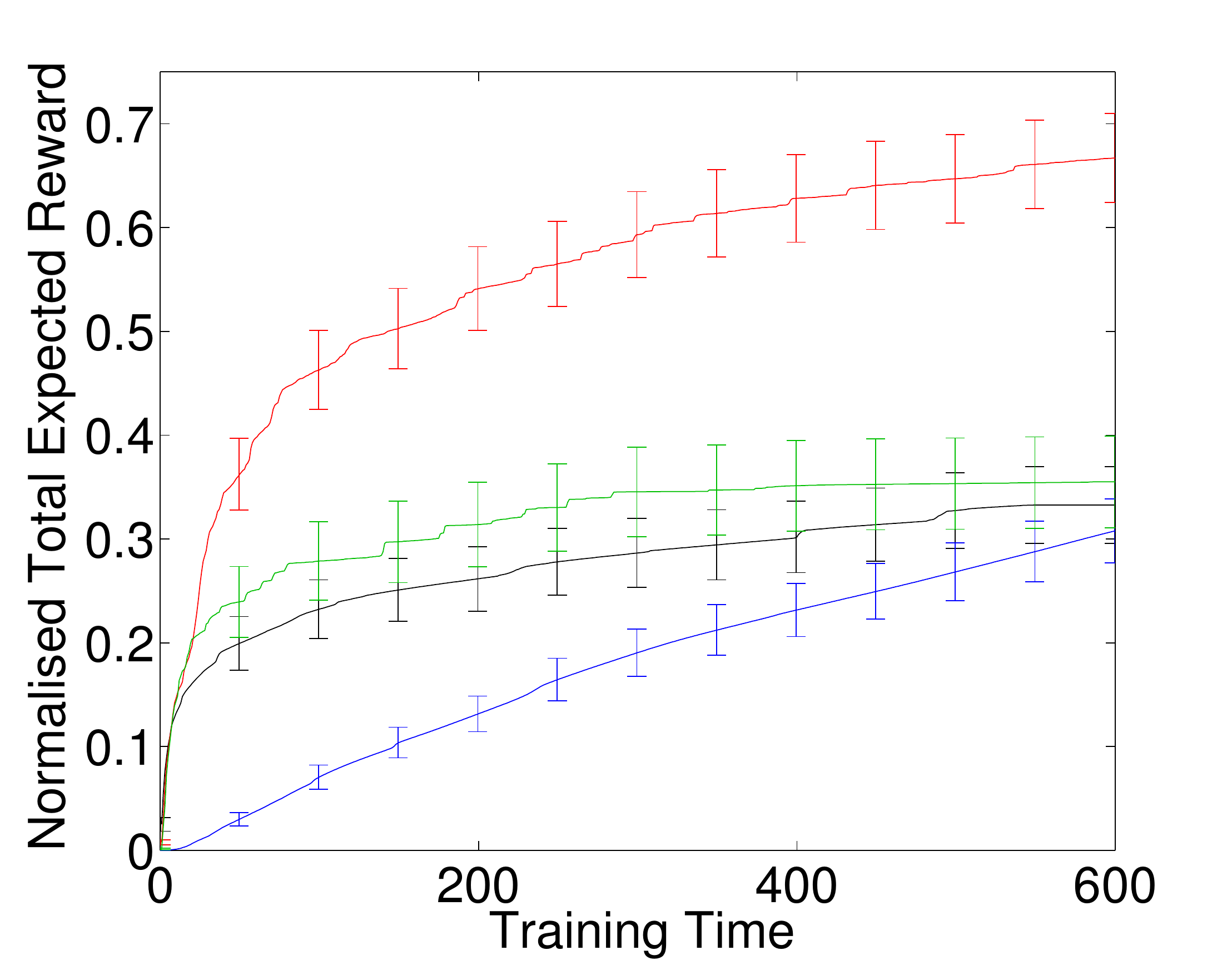}
        \caption{3-Link Manipulator : Line Search Results}\label{fig:3_link_line_search_results}
    \end{subfigure}%
    ~ 
    \begin{subfigure}[t]{0.5\textwidth}
        \centering
        \includegraphics[height=2.2in]{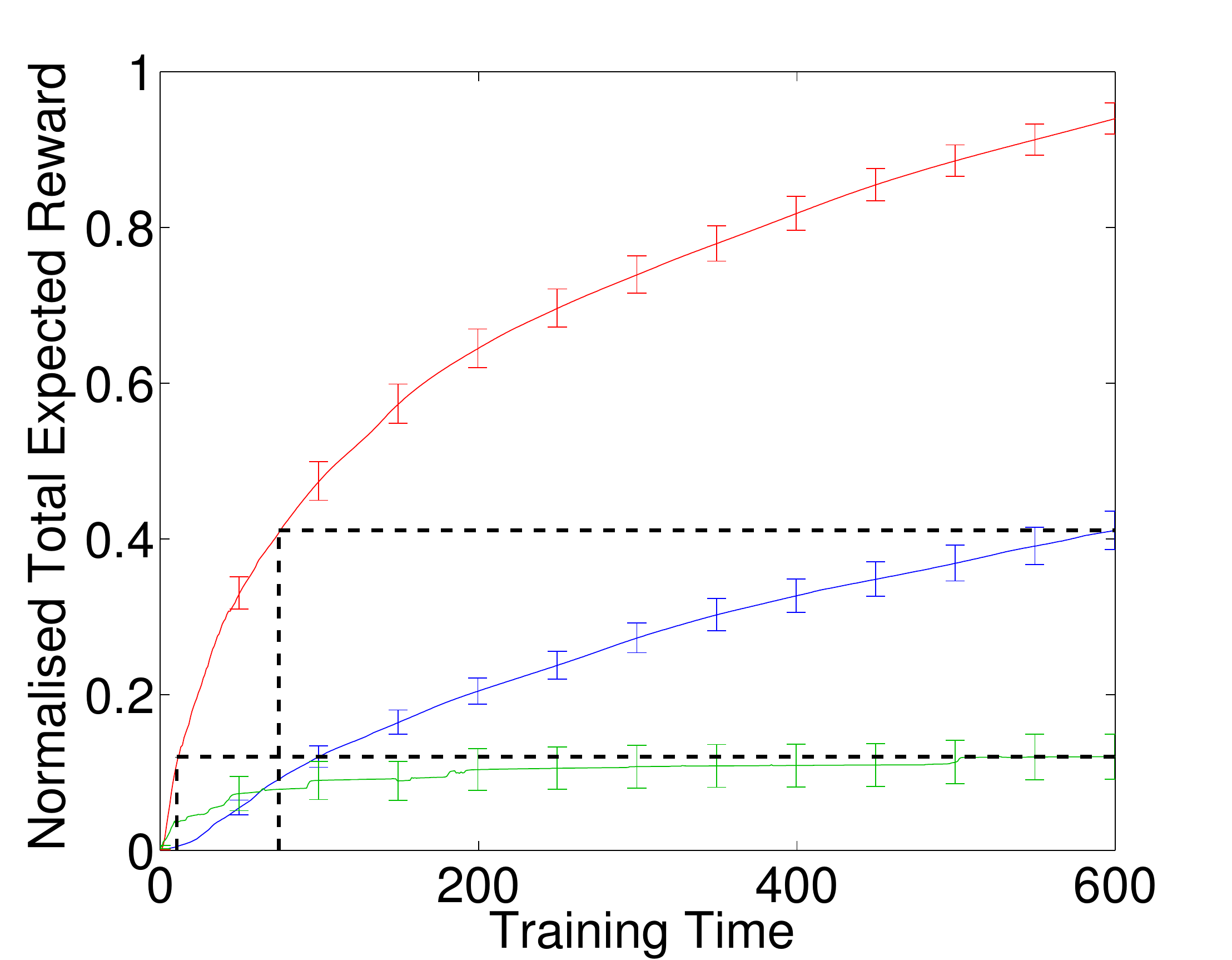}
        \caption{3-Link Manipulator : Fixed step size Results}\label{fig:3_link_fixed_step_size_results}
    \end{subfigure}
    \caption{Normalized total expected reward plotted against training time (in seconds) for the $3$-link rigid manipulator. (a) The results from the line search experiment, with the plot showing the results for steepest gradient ascent (black), Expectation Maximization (blue), the second Gauss-Newton method (red) and natural gradient ascent (green). (b) The results from the fixed step size experiment, with the plot showing the results for steepest gradient ascent (black), Expectation Maximization (blue), the second Gauss-Newton method (red), natural gradient ascent (green).}
\end{figure*}

\subsubsection{Experiment Using Fixed Step Size}

Line search as performed in the previous experiment is expensive to perform in practice, particularly in stochastic environments where many function evaluations may be required to obtain accurate function estimates. To obtain a gauge on the difficulty of selecting a step size sequence for the various policy search methods we again consider the 
3-link manipulator, but now consider a fixed step size throughout training. This is a difficult problem for algorithms such as steepest gradient ascent because the parameter space has a non-trivial number of dimensions and the objective is poorly-scaled. In both steepest gradient ascent and natural gradient ascent we considered the following fixed step sizes: $0.001$, $0.01$, $0.1$, $1$, $10$, $20$, $30$, $100$ and $250$. We were unable to obtain any reasonable results with steepest gradient ascent with any of these fixed step sizes, for which reason the results are omitted. In natural gradient ascent we found $30$ to be the best step size of those considered. In the Gauss-Newton method we considered the following fixed step sizes: $10$, $20$, $30$, $100$ and $250$ and found that the fixed step size of $30$ gave consistently good results without overstepping in the parameter space. The smaller step sizes obtained better results than Expectation Maximization, but less than the fixed step size of $30$. The larger step sizes often found superior results, but would sometimes overstep in the parameter space. For these reasons we used the fixed step size of $30$ in the final experiment. We repeated the experiment $100$ times and the results of the experiment are plotted in Figure \ref{fig:3_link_fixed_step_size_results}. The results show that even though this step size tuning is crude it is still possible to obtain strong results in comparison to Expectation Maximization, which doesn't require the selection of a step size sequence. In the experiment the Gauss-Newton method only took around $50$ seconds to obtain the same performance as $300$ seconds of training with Expectation Maximization. Furthermore Expectation Maximization was only able to obtain $40\%$ of the performance of the Gauss-Newton method, while natural gradient ascent was only able to obtain around $15\%$ of the performance. The reason that natural gradient ascent performed so poorly in this problem was because the initial control parameters were typically in a plateau region of the parameter space where the objective was close to zero. To get out of this plateau region on a regular basis and in the given amount of training time would require on overly large step size. However, once in a high reward part of the parameter space we found that, using natural gradient ascent, these large step sizes would result in overshooting in the parameter space and poor performance. The step size of $30$ was able to locate areas of high reward in a subset of the problems considered in the experiment, while not suffering from overshooting as much as the larger step sizes. The experiment highlights the robustness of the Gauss-Newton method to poor scaling, as well as the relative ease (in comparison to algorithms such as natural gradient ascent) of selecting a good step size sequence.

\begin{figure}
\begin{minipage}[b]{.5\linewidth}
\centering 
\scalebox{0.65}{
\begin{tikzpicture}
  [red_square/.style={rectangle,draw=black!100,fill=red!80,thin,inner sep=0pt,minimum size=4mm},
  green_square/.style={rectangle,draw=black!100,fill=green!80,thin,inner sep=0pt,minimum size=4mm},
  blue_square/.style={rectangle,draw=black!100,fill=blue!80,thin,inner sep=0pt,minimum size=4mm},
  orange_square/.style={rectangle,draw=black!100,fill=orange!80,thin,inner sep=0pt,minimum size=4mm},
  yellow_square/.style={rectangle,draw=black!100,fill=yellow!80,thin,inner sep=0pt,minimum size=4mm},
  purple_square/.style={rectangle,draw=black!100,fill=purple!80,thin,inner sep=0pt,minimum size=4mm},
  white_square/.style={rectangle,draw=black!100,fill=white!80,thin,inner sep=0pt,minimum size=4mm},
  empty_square/.style={rectangle,draw=black!0,fill=white!100,thin,inner sep=0pt,minimum size=1mm}]

% \draw[help lines] (0,0) grid (10,6);
    \node at (1,0.0) [empty_square] (p11){};

    \node at (1,1.5) [red_square] (p11){};
    \node at (1.4,1.5) [red_square] (p12){};
    \node at (1.8,1.5) [red_square] (p13){};
    \node at (2.2,1.5) [red_square] (p14){};

    \node at (0,3) [green_square] (p21){};
    \node at (0.4,3) [green_square] (p22){};
    \node at (0.8,3) [green_square] (p23){};
    \node at (0.8,3.4) [green_square] (p24){};

    \node at (2.4,3) [blue_square] (p31){};
    \node at (2.8,3) [blue_square] (p32){};
    \node at (3.2,3) [blue_square] (p33){};
    \node at (2.4,3.4) [blue_square] (p34){};

    \node at (2.4,5) [orange_square] (p41){};
    \node at (2.8,5) [orange_square] (p42){};
    \node at (2.8,5.4) [orange_square] (p43){};
    \node at (3.2,5.4) [orange_square] (p44){};

    \node at (0,5.4) [yellow_square] (p51){};
    \node at (0.4,5.4) [yellow_square] (p52){};
    \node at (0.4,5) [yellow_square] (p53){};
    \node at (0.8,5) [yellow_square] (p54){};

    \node at (0,7) [purple_square] (p61){};
    \node at (0.4,7.4) [purple_square] (p62){};
    \node at (0.4,7) [purple_square] (p63){};
    \node at (0.8,7) [purple_square] (p64){};

    \node at (2.8,7) [white_square] (p71){};
    \node at (2.4,7.4) [white_square] (p72){};
    \node at (2.8,7.4) [white_square] (p73){};
    \node at (2.4,7) [white_square] (p74){};
\end{tikzpicture}
}
\subcaption{Tetris : Tetrzoids}\label{fig:tetris_tetrzoids}
\end{minipage}%
\begin{minipage}[b]{.5\linewidth}
\centering \includegraphics[height=2.5in,width=2.25in]{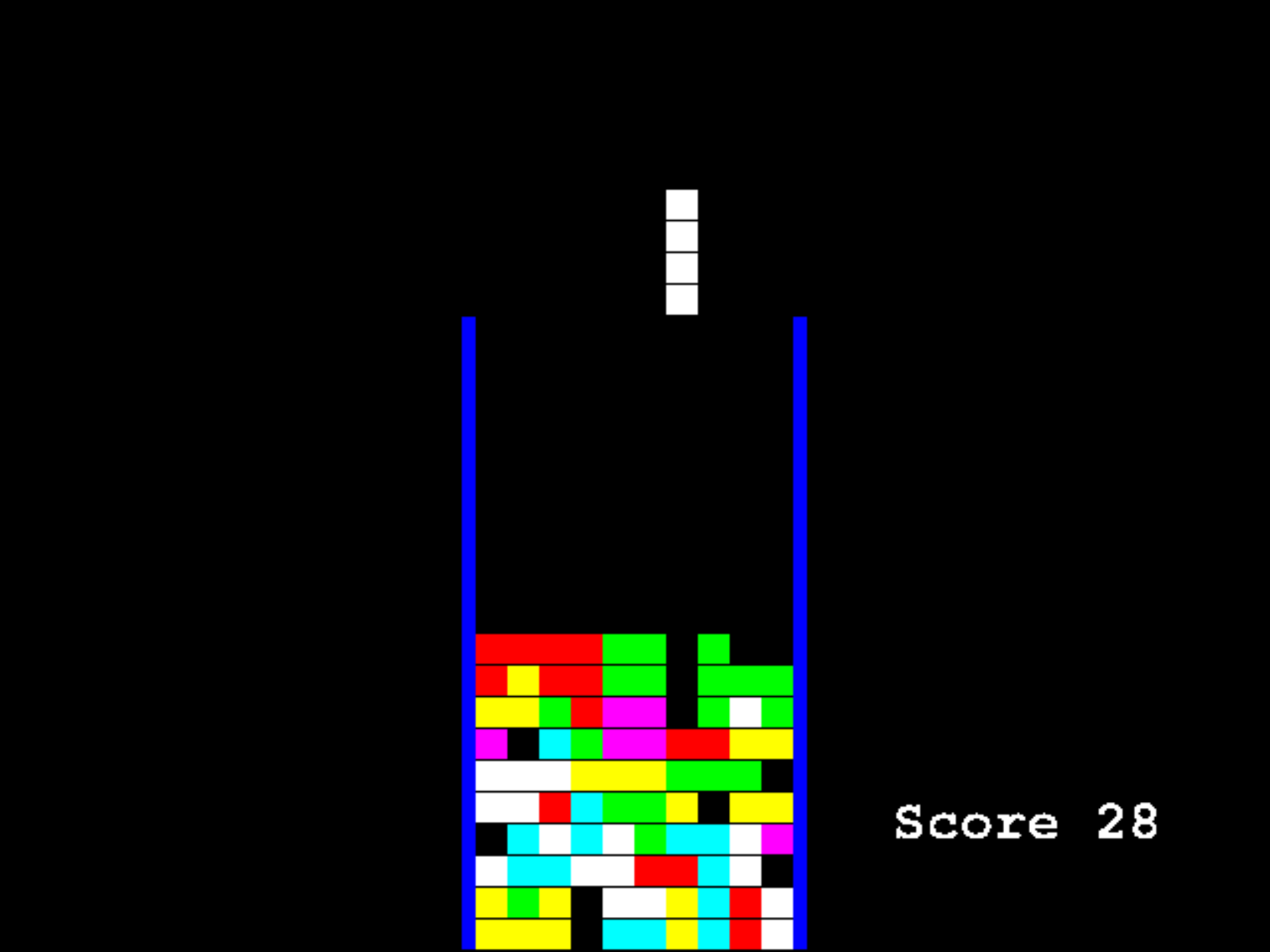}
\subcaption{Tetris : Game Board}\label{fig:tetris_game_board}
\end{minipage}
\caption{A graphical illustration of the game of tetris with (a) the collection of possible pieces, or tetrozoids, of which there are seven (b) a possible configuration of the board, which in this example is of height $20$ and width $10$.} \label{fig:tetris_illusatration}
\end{figure}

\begin{table}[b]
\centering
\scalebox{0.8}{
\begin{tabular}{| l | c | c | c |}
\hline
  & Steepest Gradient Ascent & Natural Gradient Ascent & Gauss-Newton Method \\ \hline 
\hline
Iterations & $3684 \pm 314$ & $203 \pm 34$ & $310 \pm 40$ \\ \hline 
\end{tabular}}
\caption[Iteration Counts of the 3-Link Manipulator Experiment.]{Iteration counts of the $3$-link manipulator experiment for steepest gradient ascent, natural gradient ascent and the Gauss-Newton method when using the \texttt{MinFunc} optimization library. \label{tab:linear_system_iteration_counts}}
\end{table}

\subsection{Tetris Experiment}

In this experiment we consider the Tetris domain, which is a popular computer game designed by Alexey Pajitnov in $1985$. 
In Tetris there exists a board, which is typically a $20\times10$ grid, which is empty at the beginning of a game. 
During each stage of the game a four block piece, called a tetrzoid, appears at the top of the board and begins to 
fall down the board. Whilst the tetrzoid is moving the player is allowed to rotate the tetrzoid and to move it left or right. The tetrzoid stops moving once it reaches either the bottom of the board or a previously positioned tetrzoid. In this manner the board 
begins to fill up with tetrzoid pieces. There are seven different variations of tetrzoid, as shown in Figure~\ref{fig:tetris_tetrzoids}. When a horizontal line of the board is completely filled with (pieces of) tetrzoids the 
line is removed from the board and the player receives a score of one. The game terminates when the player is not able to 
fully place a tetrzoid on the board due to insufficient space remaining on the board. 
An example configuration of the board during a game of Tetris is given in Figure~\ref{fig:tetris_game_board}.
More details on the game of Tetris can be found in \cite{Fahey_tetris_site}. 
As in other applications of Tetris in the reinforcement learning literature \citep{Kakade-2002,Bertsekas-1996} we consider a simplified version of the game in which the current tetrzoid remains above the board
until the player decides upon a desired rotation and column position for the tetrzoid.

\begin{figure*}[t!]
    \centering
    \begin{subfigure}[t]{0.5\textwidth}
        \centering
        \includegraphics[height=2.2in,width=2.7in]{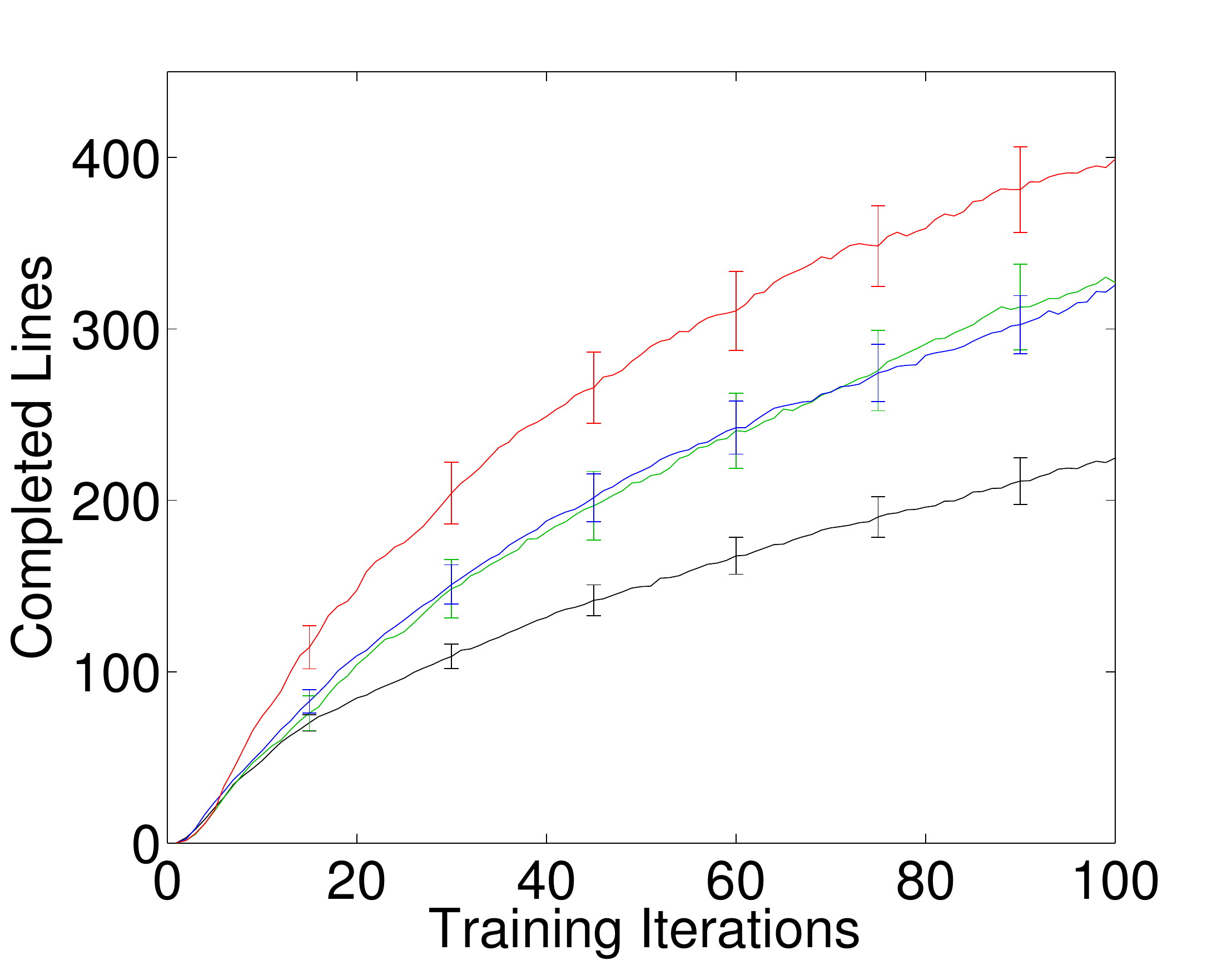}
        \caption{Tetris Results}\label{fig:tetris_results}
    \end{subfigure}%
    ~ 
    \begin{subfigure}[t]{0.5\textwidth}
        \centering
        \includegraphics[height=2.2in,width=2.7in]{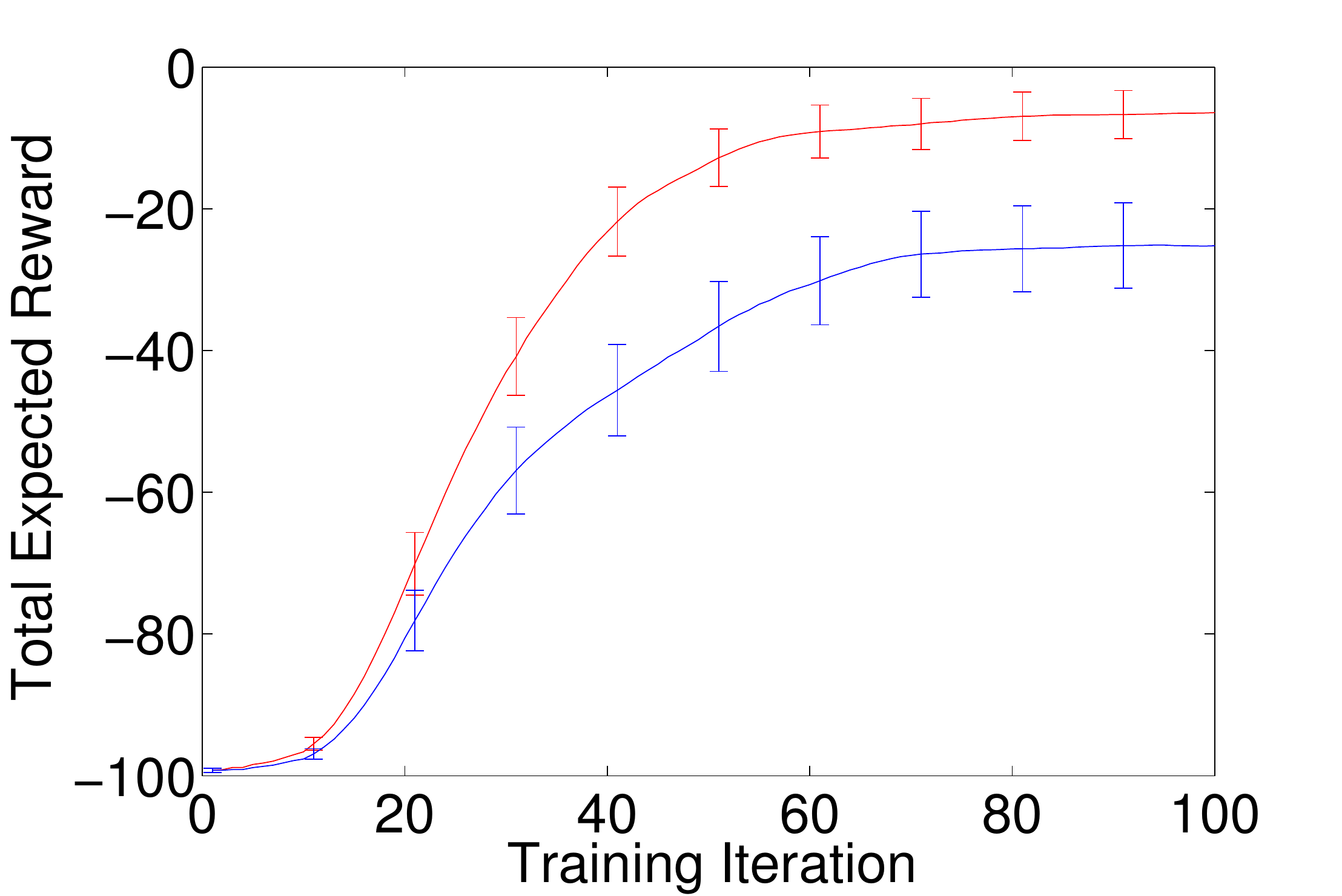}
        \caption{Robot Arm Results}\label{fig:robot_arm_results}
    \end{subfigure}
    \caption{(a) Results from the Tetris experiment, with results for steepest gradient ascent (black), natural gradient ascent (green), the diagonal Gauss-Newton method (blue) and the Gauss-Newton method (red). (b) Results from the robot arm experiment, with results for the second Gauss-Newton method (red) and the EM-algorithm (blue).}
\end{figure*}

Firstly, we compare the performance of the full and diagonal second Gauss-Newton methods to other policy search methods. Due to computational costs we consider a $10 \times 10$ board in this experiment, which results in a state space with roughly $7 \times 2^{100}$ states \citep{Bertsekas-1996}. We model the policy using a Gibb's distribution, and consider a feature vector with the following features: the heights of each column, the difference in heights between adjacent columns, the maximum height and the number of `holes'. This is the same set of features as used in \cite{Bertsekas-1996} \& \cite{Kakade-2002}. Under this policy it is not possible to obtain the explicit maximum over $\bm{w}$ in (\ref{pg_general_em}), so a straightforward application of the EM-algorithm is not possible in this problem. We therefore compare the diagonal and full Gauss-Newton methods with steepest and natural gradient ascent. We use the same procedure to evaluate the search direction for all the algorithms in the experiment. Irrespective of the policy, a game of Tetris is guaranteed to terminate after a finite number of turns \citep{Bertsekas-1996}. We therefore model each game as an absorbing state MDP. The reward at each time-point is equal to the number of lines deleted. We use a recurrent state approach \citep{williams92} to estimate the gradient, using the empty board as a recurrent state. (Since a new game starts with an empty board this state is recurrent.\footnote{This is actually an approximation because it doesn't take into account that the state is given by the configuration of the board and the current piece, so this particular `recurrent state' ignores the current piece. Empirically we found that this approximation gave better results, presumably due to reduced variance in the estimands, and there is no reason to believe that it is unfairly biasing the comparison between the various parametric policy search methods.}) We use analogous versions of this recurrent state approach for natural gradient ascent, the diagonal Gauss-Newton method and the full Gauss-Newton method. As in \cite{Kakade-2002}, we use the sample trajectories obtained during the gradient evaluation to estimate the Fisher information matrix. During each training iteration an approximation of the search direction is obtained by sampling $1000$ games, using the current policy to sample the games. Given the current approximate search direction we use the following basic line search method to obtain a step size: For every step size in a given finite set of step sizes sample a set number of games and then return the step size with the maximal score over these games. In practice, in order to reduce the susceptibility to random noise, we used the same simulator seed for each possible step size in the set. In this line search procedure we sampled $1000$ games for each  of the possible step sizes. We use the same set of step sizes
\begin{equation*}
\big\{0.1, 0.5, 1.0, 2.0, 4.0, 8.0, 16.0, 32.0, 64.0, 128.0 \big\}.
\end{equation*}
in all of the different training algorithms in the experiment. To reduce the amount of noise in the results we use the same set of simulator seeds in the search direction evaluation for each of the algorithms considered in the experiment. In particular, we generate a $n_{\textnormal{experiments}} \times n_{\textnormal{iterations}}$ matrix of simulator seeds, with $n_{\textnormal{experiments}}$ the number of repetitions of the experiment and $n_{\textnormal{iterations}}$ the number of training iterations in each experiment. We use this one matrix of simulator seeds in all of the different training algorithms, with the element in the $j^{\textnormal{th}}$ column and $i^{\textnormal{th}}$ row corresponding to the simulator seed of the $j^{\textnormal{th}}$ training iteration of the $i^{\textnormal{th}}$ experiment. In a similar manner, the set of simulator seeds we use for the line search procedure is the same for all of the different training algorithms. Finally, to make the line search consistent among all of the different training algorithms we normalize the search direction and use the resulting unit vector in the line search procedure. We ran $100$ repetitions of the experiment, each consisting of $100$ training iterations, and the mean and standard error of the results are given in Figure~\ref{fig:tetris_results}. It can be seen that the full Gauss-Newton method outperforms all of the other methods, while the performance of the diagonal Gauss-Newton method is comparable to natural gradient ascent. 

We also ran several training runs of the full approximate Newton method on the full-sized $20\times10$ board and were able to obtain a score in the region of $14,000$ completed lines, which was obtained after roughly $40$ training iterations. An approximate dynamic programming based method has previously been applied to the Tetris domain in \cite{Bertsekas-1996}. The same set of features were used and a score of roughly $4,500$ completed lines was obtained after around $6$ training iterations, after which the solution then deteriorated. More recently a modified policy iteration approach \citep{gabillon-etal-2013-nips} was able to obtain significantly better performance in the game of Tetris, completing approximately 51 million lines in a $20\times10$ board. However, these results were obtained through an entirely different set of features, and analysis of the results in \citep{gabillon-etal-2013-nips} indicate that this difference in features makes a substantial difference in performance. On a $10\times10$ board using the same features as \cite{Bertsekas-1996} the approach of \citep{gabillon-etal-2013-nips} was able to complete approximately $500$ lines on average.

\subsection{Robot Arm Experiment}

In the final experiment we consider a robotic arm application. We use the Simulation Lab \citep{schaal-2006} environment, which provides a physically realistic engine of a Barrett WAM\textsuperscript{TM} robot arm. We consider the ball-in-a-cup domain \citep{Kober-Peters-08}, which is a challenging motor skill problem that is based on the traditional children's game. In this domain a small cup is attached to the end effector of the robot arm. A ball is attached to the cup through a piece of string. At the beginning of the task the robot arm is stationary and the ball is hanging below the cup in a stationary position. The aim of the task is for the robot arm to learn an appropriate set of joint movements to first swing the ball above the cup and then to catch the ball in the cup when the ball is in its downward trajectory. The domain is episodic, with each episode $20$ seconds in length. The state of the domain is given by the angles and velocities of the seven joints in the robot arm, along with the Cartesian coordinates of the ball. The action is given by the joint accelerations of the robot arm. We denote the position of the cup and the ball by $(x_c, y_c, z_c) \in \mathbb{R}^3$ and $(x_b, y_b, z_b) \in \mathbb{R}^3$ respectively. The reward function is given by,
\begin{equation*}
r(x_c, y_c, x_b, y_b, t) = \left\{
\begin{array}{rl}
- 20 \big( (x_c - x_b)^2 + (y_c - y_b)^2 \big) & \text{if } t = t_c,\\
0 & \text{if } t \ne t_c,
\end{array} \right.
\end{equation*}
in which $t_c$ is the moment the ball crosses the $z$-plane (level with the cup) in a downward direction. If no such $t_c$ exists then the reward of the episode is given by $-100$.

We use the motor primitive framework \citep{ijspeert-2002,ijspeert-2003,schaal-2007,Kober-peters-2011} in this domain, applying a separate motor primitive to each dimension of the action space. Each motor primitive consists of a parametrized curve that models the desired action sequence (for the respective dimension of the action space) through the course of the episode. Given this collection of motor primitives the control engine within the simulator tries to follow the desired action sequence as closely as possible whilst also satisfying the constraints on the system, such as the physical constraints on the torques that can safely be applied without damaging the robot arm. As in \citet{Kober-peters-2011} we use dynamic motor primitives, using $10$ shape parameters for each of the individual motor primitives. The robot arm has 7 joints, so that there are $70$ motor primitive parameters in total. We optimize the parameters of the motor primitives by considering the MDP induced by this motor primitive framework. The action space corresponds to the space of possible motor primitives, so that $\mathcal{A} = \mathbb{R}^{70}$. There is no state space in this MDP and the planning horizon is $1$, so that this MDP is effectively a bandit problem. The reward of an action is equal to the total reward of the episode induced by the motor primitive. We consider a policy of the form,
\begin{equation*}
\pi(a; \bm{w}) = \mathcal{N} \big( a| \bm{\mu}, \big( L L^{*} \big)^{-1} \big),
\end{equation*}
with $\bm{w} = (\bm{\mu}, L)$, $\bm{\mu}$ the mean of the Gaussian and $L L^{*}$ the Cholesky decomposition of the precision matrix. We consider a diagonal precision matrix, which results in a total of $140$ policy parameters.

In this experiment we compare steepest gradient ascent, natural gradient ascent, Expectation Maximization, the first Gauss-Newton method and the second Gauss-Newton method. As the planning horizon is of length 1 it follows that $\mathcal{H}_{12}(\bm{w}) = \bm{0}$, $\forall \bm{w} \in \mathcal{W}$, so that the first Gauss-Newton method coincides with the Newton method for this MDP. The policy is block-wise $\log$-concave in $\bm{\mu}$ and $L$, but not jointly $\log$-concave in $\bm{\mu}$ and $L$. As a result we construct block diagonal forms of the preconditioning matrices for the first and second Gauss-Newton methods, with a separate block for $\bm{\mu}$ and $L$. Additionally, since the planning horizon is of length 1 it is possible to calculate the Fisher information exactly in this domain. For steepest gradient ascent and natural gradient ascent we considered several different step size sequences. Each sequence considered had a constant step size throughout, and the sequences differed in the size of this step size. We considered step sizes of length 1, 0.1, 0.01 and 0.001. For both Gauss-Newton methods we considered a fixed step size of one throughout training (i.e., no tuning of the step size sequence was performed for either the first or the second Gauss-Newton methods). As in \citet{Kober-Peters-08} the initial value of $\bm{\mu}$ is set so that the trajectory of the robot arm mimics that of a given human demonstration. The diagonal elements of the precision matrix are initialized to 0.01. During each training iteration we sampled $15$ actions from the policy and used the episodes generated from these samples to estimate the search direction. To deal with this low number of samples we used the samples from the last 10 training iterations when calculating the search direction, taking the `effective'  sample size up to 150. Finally, we used the reward/fitness shaping approach of \citet{wierstra-etal-2014} in all the algorithms considered, using the same shaping function as in \citet{wierstra-etal-2014}. In each run of the experiment we performed 100 updates of the policy parameters. We repeated the experiment 50 times and the results are given in Figure~\ref{fig:robot_arm_results}. We were unable to successfully learn to catch the ball in the cup using either steepest gradient ascent, natural gradient ascent or the first Gauss-Newton method. For this reason the results for these algorithms are omitted. It can be seen that the second Gauss-Newton method significantly outperforms the EM-algorithm in this domain. Out of the 50 runs of the experiment, the second Gauss-Newton method was successfully able to learn to catch the ball in the cup 45 times. The EM-algorithm successfully learnt the task 36 times. As the $\log$-policy is quadratic in $\bm{\mu}$ and a fixed step size of one was used in the second Gauss-Newton method it follows that the update of $\bm{\mu}$ in the second Gauss-Newton method and the EM-algorithm are the same. The difference in performance can therefore be attributed to the difference in the updates of $L$ between the two algorithms.

\section{Conclusions}

Approximate Newton methods, such as quasi-Newton methods and the Gauss-Newton method, are standard optimization techniques. These methods aim to maintain the benefits of Newton's method, whilst alleviating its shortcomings. In this paper we have considered approximate Newton methods in the context of policy optimization in MDPs. The first contribution of this paper was to provide a novel analysis of the Hessian of the MDP objective function for policy optimization. This included providing a novel form for the Hessian, as well as detailing the positive/negative definiteness properties of certain terms in the Hessian. Furthermore, we have shown that when the policy parametrization is sufficiently rich then the remaining terms in the Hessian vanish in the vicinity of a local optimum. Motivated by this analysis we introduced two Gauss-Newton Methods for MDPs. Like the Gauss-Newton method for non-linear least squares, these methods involve approximating the Hessian by ignoring certain terms in the Hessian which are difficult to estimate. The approximate Hessians possess desirable properties, such as negative definiteness, and we demonstrate several important performance guarantees including guaranteed ascent directions, invariance to affine transformation of the parameter space, and convergence guarantees. We also demonstrated our second Gauss-Newton algorithm is closely related to both the EM-algorithm and natural gradient ascent applied to MDPs, providing novel insights into both of these algorithms. We have compared the proposed Gauss-Newton methods with other techniques in the policy search literature over a range of challenging domains, including Tetris and a robotic arm application. We found that the second Gauss-Newton method performed significantly better than other methods in all of the domains that we considered. 

% Acknowledgements should go at the end, before appendices and references

\acks{We would like to thank Peter Dayan, David Silver, Nicolas Heess and David Barber for helpful discussions on this work. We would also like to thank Gerhard Neumann and Christian Daniel for their assistance in the robot arm experiment. This work was supported by the European Community Seventh Framework Programme (FP7/2007-2013) under grant agreement 270327 (CompLACS), and by the EPSRC under grant agreement EP/M006093/1 (C-PLACID).}

% Manual newpage inserted to improve layout of sample file - not
% needed in general before appendices/bibliography.

\newpage

\appendix

\section{Proofs}

\subsection{Proofs of Theorems~\ref{theorem_policy_gradient_theorem} and \ref{theorem_policy_hessian_theorem}}   \label{app:pol_grad_theorem}

We begin with an auxiliary Lemma.
\begin{Lemma} \label{ValueGradientLemma}
Suppose we are given a Markov decision process with objective (\ref{ps_objectiveFunction}) and Markovian trajectory distribution (\ref{trajectory_prob_parametric}). For any given parameter vector, $\bm{w} \in \mathcal{W}$, the following identities hold,
\begin{align}
\nabla_{\bm{w}} V(s;\bm{w}) &=  \sum_{t=1}^\infty\sum_{s_t \in \myset{S}} \sum_{a_t \in \myset{A}} \gamma^{t-1} p(s_t,a_t|s_1=s; \bm{w}) Q(s_t,a_t;\bm{w}) \nabla_{\bm{w}} \log \pi(a_t|s_t; \bm{w}) \label{value_gradient} \\
\nabla_{\bm{w}} Q(s,a;\bm{w}) &= \sum_{t=2}^\infty\sum_{s_t \in \myset{S}} \sum_{a_t \in \myset{A}}  \gamma^{t-1} p(s_{t},a_{t}|s_1=s,a_1=a; \bm{w}) Q(s_{t},a_{t};\bm{w}) \nabla_{\bm{w}} \log \pi(a_{t}|s_{t}; \bm{w}). \label{Q_gradient}
\end{align}
\end{Lemma}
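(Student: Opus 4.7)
The plan is to prove both identities simultaneously by exploiting the coupled Bellman recursions for $V$ and $Q$, together with the log-derivative trick. First, I would differentiate the Bellman equation (\ref{sValFnFixedPoint}), which, upon using $\nabla_{\bm{w}} \pi(a|s;\bm{w}) = \pi(a|s;\bm{w}) \nabla_{\bm{w}} \log \pi(a|s;\bm{w})$, yields
\begin{equation*}
\nabla_{\bm{w}} V(s;\bm{w}) = \sum_{a} \pi(a|s;\bm{w}) Q(s,a;\bm{w}) \nabla_{\bm{w}} \log \pi(a|s;\bm{w}) + \sum_{a} \pi(a|s;\bm{w}) \nabla_{\bm{w}} Q(s,a;\bm{w}).
\end{equation*}
Since the reward function is deterministic and does not depend on $\bm{w}$, differentiating (\ref{saValFn}) gives the second coupling
\begin{equation*}
\nabla_{\bm{w}} Q(s,a;\bm{w}) = \gamma \sum_{s'} P(s'|s,a) \nabla_{\bm{w}} V(s';\bm{w}),
\end{equation*}
which I can substitute back into the expression for $\nabla_{\bm{w}} V(s;\bm{w})$.

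Next, I would unroll the recursion. After one substitution the right-hand side picks up a $t=1$ contribution (with $s_1=s$ and summing over $a_1$) plus a term of the form $\gamma \sum_{s_2} p(s_2|s_1=s;\bm{w}) \nabla_{\bm{w}} V(s_2;\bm{w})$. Iterating this recursion $T$ times and collecting terms produces the partial sum $\sum_{t=1}^{T}\sum_{s_t,a_t} \gamma^{t-1} p(s_t,a_t|s_1=s;\bm{w}) Q(s_t,a_t;\bm{w}) \nabla_{\bm{w}} \log \pi(a_t|s_t;\bm{w})$ plus a remainder of the form $\gamma^{T} \sum_{s_{T+1}} p(s_{T+1}|s_1=s;\bm{w}) \nabla_{\bm{w}} V(s_{T+1};\bm{w})$. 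Sending $T \to \infty$, the remainder vanishes because $Q$ (and hence $\nabla V$, in an appropriate sense) is bounded by $R_{\max}/(1-\gamma)$ while $\gamma^{T} \to 0$, and the remaining series converges absolutely under the discount factor and reward boundedness assumptions, giving (\ref{value_gradient}).

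For (\ref{Q_gradient}) I would simply substitute the expression (\ref{value_gradient}) into $\nabla_{\bm{w}} Q(s,a;\bm{w}) = \gamma \sum_{s'} P(s'|s,a) \nabla_{\bm{w}} V(s';\bm{w})$, and use the Markov property to combine $P(s'|s,a)$ with $p(s_t,a_t|s_1=s';\bm{w})$ into the joint $p(s_{t+1},a_{t+1}|s_1=s, a_1=a;\bm{w})$ after reindexing $t \mapsto t-1$, producing exactly the claimed form with summation starting at $t=2$.

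The main technical obstacle will be the rigorous justification of (i) the interchange of differentiation with the infinite summation defining $V$ (or, equivalently, the convergence of the unrolling procedure and vanishing of the remainder term), and (ii) the fact that $\nabla_{\bm{w}} V$ and $\nabla_{\bm{w}} Q$ are well-defined, i.e.\ the Bellman fixed-point map is sufficiently regular to commute with $\nabla_{\bm{w}}$. Both points follow from bounded rewards $R \in [0,R_{\max}]$ and $\gamma \in [0,1)$, which ensure uniform absolute convergence of all series involved via dominated convergence; I would state these conditions explicitly and invoke a standard Weierstrass $M$-test / dominated convergence argument rather than grinding through the bookkeeping.
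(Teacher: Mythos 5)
Your proof is correct, but it reaches (\ref{value_gradient}) by a genuinely different route than the paper. The paper starts from the explicit series representation $V(s;\bm{w}) = \sum_{t}\sum_{s_{1:t},a_{1:t}}\gamma^{t-1}p(s_{1:t},a_{1:t}|s_1=s;\bm{w})R(s_t,a_t)$, applies the log-derivative trick to the whole trajectory density, expands $\nabla_{\bm{w}}\log p(s_{1:t},a_{1:t}|s_1=s;\bm{w})$ as a sum of per-step scores, and then swaps the order of the $(t,\tau)$ summation so that the inner sum reassembles into $Q(s_\tau,a_\tau;\bm{w})$. You instead differentiate the Bellman fixed-point equation to obtain a recursion coupling $\nabla V$ and $\nabla Q$, unroll it $T$ times, and kill the $\gamma^T$ remainder. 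The two arguments are essentially dual: the paper's swap of summation order performs in one step the bookkeeping your unrolling does incrementally. Your version is more elementary and makes the role of the discount factor explicit via the vanishing remainder, but note that killing the remainder requires an a priori uniform bound on $\|\nabla_{\bm{w}}V(\cdot;\bm{w})\|$ over states, not merely on $Q$ -- the bound $R_{\max}/(1-\gamma)$ you cite bounds $Q$, and boundedness of $\nabla V$ needs the additional (standard) assumption that $\nabla_{\bm{w}}\log\pi$ is bounded; you gesture at this but should state it as a hypothesis rather than derive it from the reward bound. The paper's route has the advantage of connecting directly to the likelihood-ratio estimators used elsewhere in the text, and its term-by-term differentiation carries an interchange-of-limits burden symmetric to yours, which the paper itself leaves implicit. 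Your treatment of (\ref{Q_gradient}) -- substituting (\ref{value_gradient}) into $\nabla_{\bm{w}}Q(s,a;\bm{w})=\gamma\sum_{s'}P(s'|s,a)\nabla_{\bm{w}}V(s';\bm{w})$ and reindexing -- is exactly the paper's argument.
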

\begin{proof}
We start by writing the value function in the form
\begin{equation}
V(s;\bm{w}) =  \sum_{t=1}^\infty \sum_{s_{1:t}} \sum_{a_{1:t}} \gamma^{t-1} p(s_{1:t}, a_{1:t}|s_1=s; \bm{w}) R(s_t, a_t), \label{theorem_policy_gradient_theorem_proof_eq1}
\end{equation}
so that,
\begin{equation}
\nabla_{\bm{w}} V(s;\bm{w}) = \sum_{t=1}^\infty \sum_{s_{1:t}} \sum_{a_{1:t}} \gamma^{t-1} p(s_{1:t}, a_{1:t}|s_1=s; \bm{w}) \nabla_{\bm{w}} \log p(s_{1:t}, a_{1:t}|s_1=s; \bm{w}) R(s_t, a_t). \nn
\end{equation}
Using the fact that
\begin{equation}
\nabla_{\bm{w}} \log p(s_{1:t}, a_{1:t}|s_1=s; \bm{w}) = \sum_{\tau = 1}^t \nabla_{\bm{w}} \log \pi(a_\tau| s_\tau; \bm{w}), \label{gradlogsums}
\end{equation}
we have that,
\begin{align}
&\nabla_{\bm{w}} V(s;\bm{w}) =  \sum_{t=1}^\infty \sum_{s_t,a_t} \sum_{\tau=1}^t \sum_{s_\tau,a_\tau} \gamma^{t-1} p(s_\tau, a_\tau, s_t, a_t|s_1=s; \bm{w}) \nabla_{\bm{w}} \log \pi(a_\tau| s_\tau; \bm{w})R(s_t, a_t) \nonumber \\
&=  \sum_{\tau=1}^\infty \sum_{s_\tau,a_\tau} \gamma^{\tau-1} p(s_\tau, a_\tau|s_1=s; \bm{w}) \nabla_{\bm{w}} \log \pi(a_\tau| s_\tau; \bm{w}) \sum_{t=\tau}^\infty \sum_{s_t,a_t} \gamma^{t-\tau} p(s_t, a_t | s_\tau, a_\tau; \bm{w}) R(s_t, a_t) \nn\\
&=  \sum_{\tau=1}^\infty \sum_{s_\tau,a_\tau} \gamma^{\tau-1} p(s_\tau, a_\tau|s_1=s; \bm{w}) \nabla_{\bm{w}} \log \pi(a_\tau| s_\tau; \bm{w}) Q(s_\tau, a_\tau; \bm{w}). \label{theorem_policy_gradient_theorem_proof_eq3}
\end{align}
where in the second line we swapped the order of summation and the third line follows from the definition (\ref{saValFn}). Identity (\ref{Q_gradient}) now follows by applying (\ref{saValFn}):
\begin{align}
\nabla_{\bm{w}} &Q(s,a;\bm{w}) = \gamma \sum_{s'} P(s'|s,a) \nabla_{\bm{w}}V(s'; \bm{w}) \nn \\
&= \gamma\sum_{s'} P(s'|s,a)\sum_{t=2}^\infty \sum_{s_t \in \myset{S}} \sum_{a_t \in \myset{A}} \gamma^{t-2}  p(s_t,a_t|s_2=s';\bm{w}) \nabla_{\bm{w}}\log \pi(a_{t}|s_{t}; \bm{w})   Q(s_t,a_t;\bm{w})  \nn \\
&= \sum_{t=2}^\infty\sum_{s_t \in \myset{S}} \sum_{a_t \in \myset{A}}  \gamma^{t-1} p(s_{t},a_{t}|s_1=s,a_1=a; \bm{w}) Q(s_{t},a_{t};\bm{w}) \nabla_{\bm{w}} \log \pi(a_{t}|s_{t}; \bm{w}). \nn
\end{align}
\end{proof}

\begin{reptheorem}{theorem_policy_gradient_theorem}
% Suppose we are given a Markov Decision Process with objective (\ref{ps_objectiveFunction}) and Markovian trajectory distribution (\ref{trajectory_prob_parametric}). For any given parameter vector, $\bm{w} \in \mathcal{W}$, the gradient of (\ref{ps_objectiveFunction}) takes the form
% \begin{equation}
% \nabla_{\bm{w}} U(\bm{w}) = \sum_{s \in \myset{S}} \sum_{a \in \myset{A}} p_{\gamma}(s,a; \bm{w}) Q(s,a;\bm{w}) \nabla_{\bm{w}} \log \pi(a|s; \bm{w}), \nn
% \end{equation}
% where
% \begin{equation}
% p_{\gamma}(s,a; \bm{w}) = \sum_{t=1}^{\infty} \gamma^{t-1} p_t(s,a; \bm{w}). \label{supp_gamma_state_dist}
% \end{equation}
\begin{proof} Theorem~\ref{theorem_policy_gradient_theorem} follows immediately from Lemma~\ref{ValueGradientLemma} by taking the expectation over $s_1$ w.r.t. the start state distribution $p_1$ and using the definition (\ref{gamma_state_dist}) of the discounted trajectory distribution.
\end{proof}
\end{reptheorem}

% The following form of the Hessian is well-known, but we provide a proof for completeness.

\begin{reptheorem}{theorem_policy_hessian_theorem}
% Suppose we are given a Markov Decision Process with objective (\ref{ps_objectiveFunction}) and Markovian trajectory distribution (\ref{trajectory_prob_parametric}). For any given parameter vector, $\bm{w} \in \mathcal{W}$, the Hessian of (\ref{ps_objectiveFunction}) takes the form
% \begin{equation}
% \mathcal{H} (\bm{w}) = \mathcal{H}_1(\bm{w}) + \mathcal{H}_2(\bm{w}) + \mathcal{H}_{12}(\bm{w}) + \mathcal{H}_{12}^\top(\bm{w}), \label{supp_pg_hessian}
% \end{equation}
% in which the matrices $\mathcal{H}_1(\bm{w})$, $\mathcal{H}_2(\bm{w})$ and $\mathcal{H}_{12}(\bm{w})$ can be written in the form
% \begin{align}
% \mathcal{H}_1(\bm{w}) &= \sum_{s \in \myset{S}} \sum_{a \in \myset{A}} p_{\gamma}(s,a; \bm{w}) Q(s,a;\bm{w}) \nabla_{\bm{w}} \nabla_{\bm{w}}^\top \log \pi(a|s; \bm{w}), \nn \\
% \mathcal{H}_2(\bm{w}) &= \sum_{s \in \myset{S}} \sum_{a \in \myset{A}} p_{\gamma}(s,a; \bm{w}) Q(s,a;\bm{w}) \nabla_{\bm{w}} \log \pi(a|s; \bm{w}) \nabla_{\bm{w}}^\top \log \pi(a|s; \bm{w}), \nn \\
% \mathcal{H}_{12}(\bm{w}) &= \sum_{s \in \myset{S}} \sum_{a \in \myset{A}} p_\gamma(s,a;\bm{w}) \nabla_{\bm{w}} \log \pi(a|s; \bm{w}) \nabla_{\bm{w}}^\top Q(s,a;\bm{w}). \nn
% \end{align}
\begin{proof}
Starting from 
\begin{equation}
U(\bm{w}) =  \sum_{t=1}^\infty \sum_{s_{1:t}} \sum_{a_{1:t}} \gamma^{t-1} p(s_{1:t}, a_{1:t}; \bm{w}) R(s_t, a_t),\nn\\
\end{equation}
the Hessian of (\ref{ps_objectiveFunction}) takes the form
\begin{align}
& \nabla_{\bm{w}} \nabla_{\bm{w}}^\top U(\bm{w}) =  \sum_{t=1}^\infty \sum_{s_{1:t}} \sum_{a_{1:t}} \gamma^{t-1} p(s_{1:t}, a_{1:t}; \bm{w}) \nabla_{\bm{w}} \nabla_{\bm{w}}^\top \log p(s_{1:t}, a_{1:t}; \bm{w}) R(s_t, a_t)\nonumber\\
&+ \sum_{t=1}^\infty \sum_{s_{1:t}} \sum_{a_{1:t}} \gamma^{t-1} p(s_{1:t}, a_{1:t}; \bm{w}) \nabla_{\bm{w}} \log p(s_{1:t}, a_{1:t}; \bm{w}) \nabla_{\bm{w}}^\top \log p(s_{1:t}, a_{1:t}; \bm{w}) R(s_t, a_t). \label{theorem_policy_hessian_theorem_proof_eq1} 
\end{align}
Using the fact that $\nabla_{\bm{w}}  \nabla_{\bm{w}}^\top \log p(s_{1:t}, a_{1:t}|s_1=s; \bm{w}) = \sum_{\tau = 1}^t \nabla_{\bm{w}} \nabla_{\bm{w}}^\top \log \pi(a_\tau| s_\tau; \bm{w})$ we will show that the first term in (\ref{theorem_policy_hessian_theorem_proof_eq1}) is equal to $\mathcal{H}_2(\bm{w})$ as defined in (\ref{pg_hessian_H2}):
\begin{align}
&\sum_{t=1}^\infty \sum_{s_{1:t}} \sum_{a_{1:t}} \gamma^{t-1} p(s_{1:t}, a_{1:t}; \bm{w}) \nabla_{\bm{w}} \nabla_{\bm{w}}^\top \log p(s_{1:t}, a_{1:t}; \bm{w}) R(s_t, a_t)\nonumber\\
&= \sum_{t=1}^\infty \sum_{s_{1:t}} \sum_{a_{1:t}} \gamma^{t-1} p(s_{1:t}, a_{1:t}; \bm{w}) \sum_{\tau = 1}^t \nabla_{\bm{w}} \nabla_{\bm{w}}^\top \log \pi(a_\tau| s_\tau; \bm{w}) R(s_t, a_t)\nonumber\\
&= \sum_{\tau = 1}^\infty  \gamma^{\tau-1} \sum_{s_\tau,a_\tau} p(s_\tau, a_\tau ; \bm{w}) \nabla_{\bm{w}} \nabla_{\bm{w}}^\top \log \pi(a_\tau| s_\tau; \bm{w}) \sum_{t=\tau}^\infty \gamma^{t- \tau} \sum_{s_t,a_t} p(s_t, a_t | s_\tau, a_\tau ; \bm{w}) R(s_t, a_t) \nn \\
&= \sum_{\tau = 1}^\infty  \gamma^{\tau-1} \sum_{s_\tau,a_\tau} p(s_\tau, a_\tau ; \bm{w}) \nabla_{\bm{w}} \nabla_{\bm{w}}^\top \log \pi(a_\tau| s_\tau; \bm{w}) Q( s_\tau, a_\tau ; \bm{w}). \nn\\
&= \mathcal{H}_2(\bm{w}) \nn
\end{align}
where in the third line we swapped the order of summation. 

Using (\ref{gradlogsums}) we can write the second term in (\ref{theorem_policy_hessian_theorem_proof_eq1}) as,
\begin{align}
&  \sum_{t=1}^\infty \sum_{s_{1:t}} \sum_{a_{1:t}} \gamma^{t-1} p(s_{1:t}, a_{1:t}; \bm{w}) \nabla_{\bm{w}} \log p(s_{1:t}, a_{1:t}; \bm{w}) \nabla_{\bm{w}}^\top \log p(s_{1:t}, a_{1:t}; \bm{w}) R(s_t, a_t) \nonumber \\
&= \sum_{t=1}^\infty \sum_{\tau=1}^t \sum_{s_{1:t}} \sum_{a_{1:t}} \gamma^{t-1} p(s_{1:t}, a_{1:t}; \bm{w}) \nabla_{\bm{w}} \log \pi(a_\tau | s_\tau; \bm{w}) \nabla_{\bm{w}}^\top \log \pi(a_\tau | s_\tau; \bm{w}) R(s_t, a_t) \nonumber \\
&\quad+  \sum_{t=1}^\infty \sum_{\stackrel{\tau_1, \tau_2=1}{\tau_1 \ne \tau_2}}^t \sum_{s_{1:t}} \sum_{a_{1:t}} \gamma^{t-1} p(s_{1:t}, a_{1:t}; \bm{w}) \nabla_{\bm{w}} \log \pi(a_{\tau_1} | s_{\tau_1}; \bm{w}) \nabla_{\bm{w}}^\top \log \pi(a_{\tau_2} | s_{\tau_2}; \bm{w}) R(s_t, a_t). \label{theorem_policy_hessian_theorem_proof_eq2}
\end{align}
By swapping the order of summation and following analogous calculations to those above, it can be shown that the first term in (\ref{theorem_policy_hessian_theorem_proof_eq2}) is equal to $\mathcal{H}_{1}(\bm{w})$ as defined in (\ref{pg_hessian_H1}). It remains to show that the second term in (\ref{theorem_policy_hessian_theorem_proof_eq2}) is given by $\mathcal{H}_{12}(\bm{w}) + \mathcal{H}_{12}^\top(\bm{w})$, with $\mathcal{H}_{12}(\bm{w})$ as given in (\ref{pg_hessian_H12}). Splitting the second term in (\ref{theorem_policy_hessian_theorem_proof_eq2}) into two terms, 
\begin{align}
& \sum_{t=1}^\infty \sum_{\stackrel{\tau_1, \tau_2=1}{\tau_1 \ne \tau_2}}^t \sum_{s_{1:t}} \sum_{a_{1:t}} \gamma^{t-1} p(s_{1:t}, a_{1:t}; \bm{w}) \nabla_{\bm{w}} \log \pi(a_{\tau_1} | s_{\tau_1}; \bm{w}) \nabla_{\bm{w}}^\top \log \pi(a_{\tau_2} | s_{\tau_2}; \bm{w}) R(s_t, a_t) \nn \\
&= \sum_{t=1}^\infty \sum_{\tau_2=1}^t \sum_{\tau_1=1}^{\tau_2-1} \sum_{s_{1:t}} \sum_{a_{1:t}} \gamma^{t-1} p(s_{1:t}, a_{1:t}; \bm{w}) \nabla_{\bm{w}} \log \pi(a_{\tau_1} | s_{\tau_1}; \bm{w}) \nabla_{\bm{w}}^\top \log \pi(a_{\tau_2} | s_{\tau_2}; \bm{w}) R(s_t, a_t) \nn \\
&+ \sum_{t=1}^\infty \sum_{\tau_1=1}^t \sum_{\tau_2=1}^{\tau_1-1} \sum_{s_{1:t}} \sum_{a_{1:t}} \gamma^{t-1} p(s_{1:t}, a_{1:t}; \bm{w}) \nabla_{\bm{w}} \log \pi(a_{\tau_1} | s_{\tau_1}; \bm{w}) \nabla_{\bm{w}}^\top \log \pi(a_{\tau_2} | s_{\tau_2}; \bm{w}) R(s_t, a_t), \label{TheSplit}
\end{align}
we will show that the first term is equal to $\mathcal{H}_{12}(\bm{w})$. Given this, it immediately follows that the second term is equal to $\mathcal{H}_{12}^\top(\bm{w})$. Using the Markov property of the transition dynamics and the policy it follows that the first term in (\ref{TheSplit}) is given by, 
\begin{align}
& \sum_{t=1}^\infty \sum_{\tau_2=1}^t \sum_{\tau_1=1}^{\tau_2-1} \sum_{s_{\tau_1}, a_{\tau_1}} 
\gamma^{\tau_1-1} p(s_{\tau_1}, a_{\tau_1}; \bm{w}) \nabla_{\bm{w}} \log \pi(a_{\tau_1} | s_{\tau_1}; \bm{w})  \nn \\
& \times \sum_{s_{\tau_2},a_{\tau_2}} \gamma^{\tau_2-\tau_1} p(s_{\tau_2}, a_{\tau_2}| s_{\tau_1}, a_{\tau_1}; \bm{w}) \nabla_{\bm{w}}^\top \log \pi(a_{\tau_2} | s_{\tau_2}; \bm{w}) \sum_{s_t,a_t} \gamma^{t-\tau_2} p(s_t, a_t| s_{\tau_2}, a_{\tau_2}; \bm{w}) R(s_t, a_t). \nonumber
\end{align}
Rearranging the summation over $t$, $\tau_1$ and $\tau_2$ this can be rewritten in the form, 
\begin{align}
&  \sum_{\tau_1=1}^\infty \sum_{s_{\tau_1} ,a_{\tau_1}} 
\gamma^{\tau_1-1} p(s_{\tau_1}, a_{\tau_1}; \bm{w}) \nabla_{\bm{w}} \log \pi(a_{\tau_1} | s_{\tau_1}; \bm{w}) \nonumber  \\
& \quad \times \bigg\{ \sum_{\tau_2=\tau_1+1}^\infty \sum_{s_{\tau_2},a_{\tau_2}} \gamma^{\tau_2-\tau_1} p(s_{\tau_2}, a_{\tau_2}| s_{\tau_1}, a_{\tau_1}; \bm{w}) \nabla_{\bm{w}}^\top \log \pi(a_{\tau_2} | s_{\tau_2}; \bm{w}) \nn \\
& \quad\quad\quad \sum_{t=\tau_2}^\infty \sum_{s_t,a_t} \gamma^{t-\tau_2} p(s_t, a_t| s_{\tau_2}, a_{\tau_2}; \bm{w}) R(s_t, a_t) \bigg\} \nn\\
&=\sum_{\tau_1=1}^\infty \sum_{s_{\tau_1} ,a_{\tau_1}} 
\gamma^{\tau_1-1} p(s_{\tau_1}, a_{\tau_1}; \bm{w}) \nabla_{\bm{w}} \log \pi(a_{\tau_1} | s_{\tau_1}; \bm{w}) \nonumber  \\
&  \quad\times  \sum_{\tau_2=\tau_1+1}^\infty \sum_{s_{\tau_2},a_{\tau_2}} \gamma^{\tau_2-\tau_1} p(s_{\tau_2}, a_{\tau_2}| s_{\tau_1}, a_{\tau_1}; \bm{w}) \nabla_{\bm{w}}^\top \log \pi(a_{\tau_2} | s_{\tau_2}; \bm{w}) Q( s_{\tau_2}, a_{\tau_2}; \bm{w})\nn\\
&=\sum_{\tau_1=1}^\infty \sum_{s_{\tau_1} ,a_{\tau_1}} 
\gamma^{\tau_1-1} p(s_{\tau_1}, a_{\tau_1}; \bm{w}) \nabla_{\bm{w}} \log \pi(a_{\tau_1} | s_{\tau_1}; \bm{w})\nabla_{\bm{w}} Q( s_{\tau_1}, a_{\tau_1}; \bm{w}) \nn\\
&= \mathcal{H}_{12}(\bm{w}) \nn
\end{align}
Where the penultimate line follows from (\ref{Q_gradient}). This completes the proof.
\end{proof}
\end{reptheorem}

\subsection{Proof of Theorem~\ref{corollary_policy_hessian_theorem}} \label{NovelHessianProof}

Recalling that the state-action value function takes the form, $Q(s,a;\bm{w}) = V(s;\bm{w}) + A(s,a;\bm{w})$, the matrices
$\mathcal{H}_1(\bm{w})$ and $\mathcal{H}_2(\bm{w})$ can be written in the following forms,
\begin{equation}
\mathcal{H}_1(\bm{w}) = \mathcal{A}_1(\bm{w}) + \mathcal{V}_1(\bm{w}), \quad \quad \quad \quad \quad \quad \mathcal{H}_2(\bm{w}) = \mathcal{A}_2(\bm{w}) + \mathcal{V}_2(\bm{w}), \label{h11_h2_alternative_forms}
\end{equation}
where,
\begin{align}
\mathcal{A}_1(\bm{w}) &= \sum_{(s,a) \in \mathcal{S} \times \mathcal{A}} p_\gamma(s,a;\bm{w}) A(s,a;\bm{w}) \nabla_{\bm{w}} \log \pi(a|s;\bm{w}) \nabla^\top_{\bm{w}} \log \pi(a|s;\bm{w}) \nn \\
\mathcal{A}_2(\bm{w}) &= \sum_{(s,a) \in \mathcal{S} \times \mathcal{A}} p_\gamma(s,a;\bm{w}) A(s,a;\bm{w}) \nabla_{\bm{w}}  \nabla^\top_{\bm{w}} \log \pi(a|s;\bm{w}) \nn\\
\mathcal{V}_1(\bm{w}) &= \sum_{(s,a) \in \mathcal{S} \times \mathcal{A}} p_\gamma(s,a;\bm{w}) V(s,a;\bm{w}) \nabla_{\bm{w}} \log \pi(a|s;\bm{w}) \nabla^\top_{\bm{w}} \log \pi(a|s;\bm{w}) \nn \\
\mathcal{V}_2(\bm{w}) &= \sum_{(s,a) \in \mathcal{S} \times \mathcal{A}} p_\gamma(s,a;\bm{w}) V(s,a;\bm{w}) \nabla_{\bm{w}}  \nabla^\top_{\bm{w}} \log \pi(a|s;\bm{w}). \nn
\end{align}

We begin with the following auxiliary lemmas.

\begin{Lemma}\label{theorem_value_fisher_information_equiv_lemma}
Suppose we are given a Markov decision process with objective (\ref{ps_objectiveFunction}) and Markovian trajectory distribution (\ref{trajectory_prob_parametric}). Provided that the policy satisfies the Fisher regularity conditions, then for any given parameter vector, $\bm{w} \in \mathcal{W}$, the matrices $\mathcal{V}_1(\bm{w})$ and $\mathcal{V}_2(\bm{w})$ satisfy the following relation
\begin{equation}
\mathcal{V}_1(\bm{w}) = - \mathcal{V}_2(\bm{w}).
\end{equation}
\end{Lemma}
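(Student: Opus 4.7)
The plan is to exploit the fact that the value function $V(s;\bm{w})$ depends only on the state $s$ and not on the action $a$, together with the standard Fisher information matrix identity applied to the policy $\pi(\cdot|s;\bm{w})$ at each fixed state.

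First I would factorize the discounted state-action occupancy measure as $p_\gamma(s,a;\bm{w}) = p_\gamma(s;\bm{w})\pi(a|s;\bm{w})$, where $p_\gamma(s;\bm{w}) := \sum_{t=1}^\infty \gamma^{t-1} p_t(s;\bm{w})$. Substituting this into the definitions and pulling $V(s;\bm{w})$ outside the inner sum over actions (which is legal because $V$ does not depend on $a$) yields
\begin{align}
\mathcal{V}_1(\bm{w}) &= \sum_{s \in \mathcal{S}} p_\gamma(s;\bm{w}) V(s;\bm{w}) \sum_{a \in \mathcal{A}} \pi(a|s;\bm{w}) \nabla_{\bm{w}} \log \pi(a|s;\bm{w}) \nabla^\top_{\bm{w}} \log \pi(a|s;\bm{w}), \nn \\
\mathcal{V}_2(\bm{w}) &= \sum_{s \in \mathcal{S}} p_\gamma(s;\bm{w}) V(s;\bm{w}) \sum_{a \in \mathcal{A}} \pi(a|s;\bm{w}) \nabla_{\bm{w}} \nabla^\top_{\bm{w}} \log \pi(a|s;\bm{w}). \nn
\end{align}

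Next I would apply the standard information matrix identity to the conditional distribution $\pi(\cdot|s;\bm{w})$ for each fixed $s$. Under the stated Fisher regularity assumption one can interchange differentiation and summation over $\mathcal{A}$ in $\sum_a \pi(a|s;\bm{w}) = 1$, giving $\sum_a \nabla_{\bm{w}}\nabla^\top_{\bm{w}}\pi(a|s;\bm{w}) = \bm{0}$. Expanding $\nabla_{\bm{w}}\nabla^\top_{\bm{w}}\pi = \pi\bigl(\nabla_{\bm{w}}\nabla^\top_{\bm{w}}\log\pi + \nabla_{\bm{w}}\log\pi \nabla^\top_{\bm{w}}\log\pi\bigr)$ and rearranging yields
\begin{equation*}
\sum_{a \in \mathcal{A}} \pi(a|s;\bm{w}) \nabla_{\bm{w}} \log \pi(a|s;\bm{w}) \nabla^\top_{\bm{w}} \log \pi(a|s;\bm{w}) = -\sum_{a \in \mathcal{A}} \pi(a|s;\bm{w}) \nabla_{\bm{w}}\nabla^\top_{\bm{w}} \log \pi(a|s;\bm{w}).
\end{equation*}

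Plugging this identity back into the expression for $\mathcal{V}_1(\bm{w})$ immediately produces $-\mathcal{V}_2(\bm{w})$, completing the proof. There is no real obstacle here; the only subtlety is that the identity is applied pointwise in $s$ to the conditional policy (not to the full trajectory distribution), which is why the weight $V(s;\bm{w})p_\gamma(s;\bm{w})$ can be treated as a constant during the manipulation and carried through unchanged.
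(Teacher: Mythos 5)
Your proposal is correct and follows essentially the same route as the paper's proof: factor $p_\gamma(s,a;\bm{w})$ into the state marginal times the policy, pull $V(s;\bm{w})$ out of the action sum, and apply the per-state Fisher information identity $\sum_a \pi \nabla\log\pi\,\nabla^\top\log\pi = -\sum_a \pi\,\nabla\nabla^\top\log\pi$ to the conditional policy. The only difference is that you spell out the derivation of that identity from the regularity conditions, which the paper simply asserts.
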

\begin{proof}
As the policy satisfies the Fisher regularity conditions, then for any state, $s \in \mathcal{S}$, the following relation holds
\begin{equation*}
\sum_{a \in \mathcal{A}} \pi(a|s;\bm{w}) \nabla_{\bm{w}} \log \pi(a|s;\bm{w}) \nabla^\top_{\bm{w}} \log \pi(a|s;\bm{w}) = - \sum_{a \in \mathcal{A}} \pi(a|s;\bm{w}) \nabla_{\bm{w}} \nabla^\top_{\bm{w}} \log \pi(a|s;\bm{w}).
\end{equation*}
This means that $\mathcal{V}_{1}(\bm{w})$ can be written in the form
\begin{align*}
\mathcal{V}_{1}(\bm{w}) &= \sum_{s \in \mathcal{S}} p_\gamma(s;\bm{w}) V(s;\bm{w}) \sum_{a \in \mathcal{A}} \pi(a|s;\bm{w}) \nabla_{\bm{w}} \log \pi(a|s;\bm{w}) \nabla^\top_{\bm{w}} \log \pi(a|s;\bm{w}), \\
&= - \sum_{s \in \mathcal{S}} p_\gamma(s;\bm{w}) V(s;\bm{w}) \sum_{a \in \mathcal{A}} \pi(a|s;\bm{w}) \nabla_{\bm{w}} \nabla^\top_{\bm{w}} \log \pi(a|s;\bm{w}) = - \mathcal{V}_2(\bm{w}),
\end{align*}
which completes the proof.
\end{proof}

\begin{Lemma}\label{Lemma_A2_equal_zero}
Suppose we are given a Markov decision process with objective (\ref{ps_objectiveFunction}) and Markovian trajectory distribution (\ref{trajectory_prob_parametric}). 
If the policy parametrization has constant curvature with respect to the action space, then
\begin{equation}
\mathcal{A}_2(\bm{w}) = \bm{0}.
\end{equation}
\end{Lemma}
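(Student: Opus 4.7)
The plan is to exploit two facts: the Hessian of the log-policy factors out of the sum over actions, and the advantage function averages to zero under the policy.

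First I would expand $p_\gamma(s,a;\bm{w})$ as $p_\gamma(s;\bm{w})\,\pi(a|s;\bm{w})$, which follows because $p_t(s,a;\bm{w}) = p_t(s;\bm{w})\,\pi(a|s;\bm{w})$ by the Markov structure of the trajectory distribution (\ref{trajectory_prob_parametric}), so summing on $t$ with weights $\gamma^{t-1}$ gives the factorization with $p_\gamma(s;\bm{w}):=\sum_{t=1}^\infty \gamma^{t-1} p_t(s;\bm{w})$.

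Next, using Definition~\ref{defn_const_curv}, since $\nabla_{\bm{w}}\nabla^\top_{\bm{w}} \log \pi(a|s;\bm{w})$ does not depend on $a$, I would pull it out of the inner sum over actions and write
\begin{equation*}
\mathcal{A}_2(\bm{w}) = \sum_{s\in\mathcal{S}} p_\gamma(s;\bm{w})\, \nabla_{\bm{w}}\nabla^\top_{\bm{w}} \log \pi(s;\bm{w}) \sum_{a\in\mathcal{A}} \pi(a|s;\bm{w})\, A(s,a;\bm{w}).
\end{equation*}

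Finally I would invoke the identity $\sum_{a\in\mathcal{A}} \pi(a|s;\bm{w})\, A(s,a;\bm{w}) = 0$ for every $s\in\mathcal{S}$ (noted just after the definition of the advantage function in Section~\ref{sec:mdps}). Each term in the outer sum therefore vanishes, giving $\mathcal{A}_2(\bm{w}) = \bm{0}$. There is no real obstacle here — the result is essentially a one-line consequence of factoring the action-independent Hessian out of the sum and noting the zero-mean property of the advantage.
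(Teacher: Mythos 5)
Your proposal is correct and follows exactly the same route as the paper: factor $p_\gamma(s,a;\bm{w}) = p_\gamma(s;\bm{w})\pi(a|s;\bm{w})$, pull the action-independent Hessian of the log-policy out of the inner sum, and apply $\sum_{a\in\mathcal{A}} \pi(a|s;\bm{w})\,A(s,a;\bm{w}) = 0$. No gaps.
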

\begin{proof}
Recalling Definition~\ref{ValueConsistent}, the matrix $\mathcal{A}_2(\bm{w})$ takes the form, 
\begin{align*}
\mathcal{A}_2(\bm{w}) &= \sum_{(s,a) \in \mathcal{S} \times \mathcal{A}} p_\gamma(s,a;\bm{w}) A(s,a;\bm{w}) \nabla_{\bm{w}} \nabla^\top_{\bm{w}} \log \pi(s;\bm{w}), \\
&= \sum_{s \in \mathcal{S}} p_\gamma(s;\bm{w}) \nabla_{\bm{w}} \nabla^\top_{\bm{w}} \log \pi(s;\bm{w}) \sum_{a \in \mathcal{A}} \pi(a|s;\bm{w}) A(s,a;\bm{w}).
\end{align*}
The relation $\mathcal{A}_2(\bm{w})= \bm{0}$ follows because $\sum_{a \in \mathcal{A}} \pi(a|s;\bm{w}) A(s,a;\bm{w}) = 0$, for all $s \in \mathcal{S}$.
\end{proof}

Lemmas~\ref{theorem_value_fisher_information_equiv_lemma} \& \ref{Lemma_A2_equal_zero}, along with the relation (\ref{h11_h2_alternative_forms}), directly imply the result of Theorem~\ref{corollary_policy_hessian_theorem}.

\subsection{Proof of Theorem~\ref{NegativeDefiniteLemma} and Definiteness Results}\label{negative_definite_appendix}

\begin{reptheorem}{NegativeDefiniteLemma}
% The matrix, $\mathcal{H}_2(\bm{w})$, is negative-definite for all $\bm{w} \in \mathcal{W}$ if: 1) the policy is $\log$-concave with respect to the policy parameters; or 2) the policy parametrization has constant curvature with respect to the action space. Additionally, given a local optimum of (\ref{ps_objectiveFunction}), $\bm{w}^* \in \mathcal{W}$, such that $\mathcal{H}(\bm{w}^*)$ is negative-definite, 
% there exists an open set, $\mathcal{U} \subset \mathcal{W}$, around $\bm{w}^*$ such that for all $\bm{w} \in \mathcal{U}$ the matrices $\mathcal{H}_2(\bm{w})$ and $\mathcal{D}_2(\bm{w})$ are negative-definite, and this is true regardless of the policy parametrization.
\begin{proof}
The first result follows from the fact that when the policy is $\log$-concave with respect to the policy parameters, then $\mathcal{H}_2(\bm{w})$ is a non-negative mixture of negative-definite matrices, which again is negative-definite \citep{boyd-vandenberghe-book}.

The second result follows because when the policy parametrization has constant curvature with respect to the action space, then by Lemma~\ref{Lemma_A2_equal_zero} in Section~\ref{NovelHessianProof} $\mathcal{A}_2(\bm{w}) = \bm{0}$, so that 
\begin{equation*}
\mathcal{H}_2(\bm{w}) = \mathcal{A}_2(\bm{w}) + \mathcal{V}_2(\bm{w}) = \mathcal{V}_2(\bm{w}) = - \mathcal{V}_1(\bm{w}),
\end{equation*}
with
\begin{align}
\mathcal{V}_1(\bm{w}) &= \sum_{(s,a) \in \mathcal{S} \times \mathcal{A}} p_\gamma(s,a;\bm{w}) V(s,a;\bm{w}) \nabla_{\bm{w}} \log \pi(a|s;\bm{w}) \nabla^\top_{\bm{w}} \log \pi(a|s;\bm{w}) \nn \\
\mathcal{V}_2(\bm{w}) &= \sum_{(s,a) \in \mathcal{S} \times \mathcal{A}} p_\gamma(s,a;\bm{w}) V(s,a;\bm{w}) \nabla_{\bm{w}}  \nabla^\top_{\bm{w}} \log \pi(a|s;\bm{w}). \nn
\end{align}
The result now follows because $- \mathcal{V}_1(\bm{w})$ is negative-definite for all $\bm{w} \in \mathcal{W}$.
\end{proof}
\end{reptheorem}

\begin{Lemma}\label{H11_is_positive_definite}
For any $\bm{w} \in \mathcal{W}$ the matrix,
\begin{equation*}
\mathcal{H}_{11}(\bm{w}) = \mathcal{H}_1(\bm{w}) + \mathcal{H}_{12}(\bm{w}) + \mathcal{H}_{12}^\top(\bm{w})
\end{equation*}
is positive-definite.
\begin{proof}
This follows immediately from the form of $\mathcal{H}_1(\bm{w}) + \mathcal{H}_{12}(\bm{w}) + \mathcal{H}_{12}^\top(\bm{w})$ given by (\ref{theorem_policy_hessian_theorem_proof_eq2}) in Theorem~\ref{theorem_policy_hessian_theorem}, which is positive-definite since the reward function is assumed positive.
\end{proof}
\end{Lemma}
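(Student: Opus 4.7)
The plan is to leverage the intermediate identity already established in the proof of Theorem~\ref{theorem_policy_hessian_theorem}. Recall that when expanding $\nabla_{\bm w}\nabla_{\bm w}^\top U(\bm w)$ in equation (\ref{theorem_policy_hessian_theorem_proof_eq1}) the Hessian splits into two pieces: one involving $\nabla_{\bm w}\nabla_{\bm w}^\top \log p$ (which was identified with $\mathcal{H}_2(\bm w)$), and one involving $\nabla_{\bm w}\log p \,\nabla_{\bm w}^\top \log p$. It was shown in the course of that proof (via the split (\ref{theorem_policy_hessian_theorem_proof_eq2})) that this second piece is exactly $\mathcal{H}_1(\bm w) + \mathcal{H}_{12}(\bm w) + \mathcal{H}_{12}^\top(\bm w)$, i.e.,
\begin{equation*}
\mathcal{H}_{11}(\bm w) = \sum_{t=1}^{\infty} \sum_{s_{1:t}}\sum_{a_{1:t}} \gamma^{t-1}\, p(s_{1:t},a_{1:t};\bm w)\, R(s_t,a_t)\,\nabla_{\bm w}\log p(s_{1:t},a_{1:t};\bm w)\,\nabla_{\bm w}^\top \log p(s_{1:t},a_{1:t};\bm w).
\end{equation*}

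Given this representation the result is immediate. For any $\bm v \in \mathbb{R}^n$ we would write
\begin{equation*}
\bm v^\top \mathcal{H}_{11}(\bm w)\, \bm v = \sum_{t=1}^{\infty} \sum_{s_{1:t}}\sum_{a_{1:t}} \gamma^{t-1}\, p(s_{1:t},a_{1:t};\bm w)\, R(s_t,a_t)\,\bigl(\bm v^\top \nabla_{\bm w}\log p(s_{1:t},a_{1:t};\bm w)\bigr)^2,
\end{equation*}
which is a non-negative weighted sum of squares, since $\gamma \in [0,1)$, $p(\cdot;\bm w) \ge 0$, and $R(s,a) \in [0,R_{\max}]$. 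Hence $\mathcal{H}_{11}(\bm w)$ is at least positive semi-definite.

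The only subtle point is upgrading positive semi-definiteness to strict positive definiteness. Strict positivity of $\bm v^\top \mathcal{H}_{11}(\bm w) \bm v$ for every $\bm v \ne \bm 0$ requires that, for each such $\bm v$, the directional derivative $\bm v^\top \nabla_{\bm w}\log p(s_{1:t},a_{1:t};\bm w)$ fails to vanish on some trajectory that is both reachable under the current policy and contributes strictly positive reward at some time $t$. This is where genuine assumptions on the MDP and policy parametrization would have to enter: for instance, any state reachable with positive probability carrying strictly positive reward, together with linear independence of the log-policy gradients $\{\nabla_{\bm w}\log \pi(a|s;\bm w)\}$ across sufficiently many $(s,a)$, suffices. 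I expect this to be the only mildly delicate step; the algebraic identification with a quadratic form is the effortless part, and it is what the authors seem to invoke when they write "positive-definite since the reward function is assumed positive".
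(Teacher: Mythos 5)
Your proposal follows exactly the paper's route: identify $\mathcal{H}_{11}(\bm{w})$ with the second term of (\ref{theorem_policy_hessian_theorem_proof_eq1}) --- a sum of outer products $\nabla_{\bm{w}}\log p\,\nabla_{\bm{w}}^\top\log p$ weighted by the non-negative quantities $\gamma^{t-1}p(s_{1:t},a_{1:t};\bm{w})R(s_t,a_t)$ --- and read off definiteness from the resulting quadratic form; the paper's own proof is precisely this one-line observation and goes no further. Your caveat is well taken: since $R$ is only assumed to take values in $[0,R_{\max}]$, the argument as stated yields positive \emph{semi}-definiteness, and strict definiteness requires exactly the kind of non-degeneracy condition you describe (for every direction $\bm{v}\ne\bm{0}$, some positive-probability, positively-rewarded trajectory whose score vector is not orthogonal to $\bm{v}$); the paper elides this distinction.
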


\subsection{Proof of Theorem~\ref{TabNonDec}} \label{ProofOfTabNonDec}

We first prove an auxiliary lemma about the gradient of the value function in the case of a tabular policy. As we are considering a tabular policy we have a separate parameter vector $\bm{w}_{s}$ for each state $s \in \mathcal{S}$. We denote the parameter vector of the entire policy by $\bm{w}$, in which this is given by the concatenation of the parameter vectors of the different states. The dimension of $\bm{w}$ is given by $n = \sum_{s \in \mathcal{S}} n_{s}$. In order to show that tabular policies are value consistent we start by relating the gradient of $V(\hat{s};\bm{w})$ to the gradient of $V(\bar{s};\bm{w})$, where the gradient is taken with respect to the policy parameters of state $\bar{s}$, while the policy parameters of the remaining states are held fixed. 
\begin{Lemma}\label{lemma1}
Suppose we are given a Markov decision process with a tabular policy such that $V(s; \bm{w})$ is differentiable for each $s \in \mathcal{S}$. Given $\bar{s}, \hat{s} \in \mathcal{S}$, such that $\bar{s} \ne \hat{s}$, then we have that
\begin{equation}
\nabla_{\bm{w}_{\bar{s}}} V(\hat{s}; \bm{w}) = p_{\textnormal{hit}}(\hat{s} \to \bar{s}) \nabla_{\bm{w}_{\bar{s}}} V(\bar{s}; \bm{w}), \label{value_fn_grad_lemma_eq1}
\end{equation}
where the notation $\nabla_{\bm{w}_{\bar{s}}} V(\hat{s}; \bm{w})$ is used to denote the gradient of the value function w.r.t. the policy parameter of state $\bar{s}$, with the policy parameters of all other states considered fixed. The term $p_{\textnormal{hit}}(\hat{s} \to \bar{s})$ in (\ref{value_fn_grad_lemma_eq1}) is given by
\begin{equation*}
p_{\textnormal{hit}}(\hat{s} \to \bar{s}) = \sum_{t=2}^\infty \gamma^{t-1} p(s_t = \bar{s} | s_1 = \hat{s}, s_\tau \ne \bar{s}, \tau = 1, ..., t-1; \bm{w}).
\end{equation*}
Furthermore, when Markov chain induced by the policy parameters is ergodic then $p_{\textnormal{hit}} > 0$.
\begin{proof}
Given the equality
\begin{equation*}
V(s; \bm{w}) = \sum_{a \in \mathcal{A}} \pi(a|s;\bm{w}) Q(s, a;\bm{w}),
\end{equation*}
we have that 
\begin{equation*}
\nabla_{\bm{w}_{\bar{s}}} V(\hat{s}; \bm{w}) = \sum_{a \in \mathcal{A}} \bigg( \nabla_{\bm{w}_{\bar{s}}} \pi(a|\hat{s};\bm{w}) Q(\hat{s}, a;\bm{w}) + \pi(a|\hat{s};\bm{w}) \nabla_{\bm{w}_{\bar{s}}} Q(\hat{s}, a;\bm{w}) \bigg). 
\end{equation*}
As the policy is tabular and $\hat{s} \ne \bar{s}$ we have that $\nabla_{\bm{w}_{\bar{s}}} \pi(a|\hat{s};\bm{w}) = \bm{0}$, so that this simplifies to 
\begin{equation*}
\nabla_{\bm{w}_{\bar{s}}} V(\hat{s}; \bm{w}) = \sum_{a \in \mathcal{A}} \pi(a|\hat{s};\bm{w}) \nabla_{\bm{w}_{\bar{s}}} Q(\hat{s}, a;\bm{w}). 
\end{equation*}
Using the fact that $Q(s, a; \bm{w}) = R(s, a) + \gamma \sum_{s' \in \mathcal{S}} p(s'|s, a) V(s';\bm{w})$, we have 
\begin{align}
\nabla_{\bm{w}_{\bar{s}}} V(\hat{s}; \bm{w}) &= \gamma \sum_{s' \in \mathcal{S}} p(s'|\hat{s}; \bm{w}) \nabla_{\bm{w}_{\bar{s}}} V(s';\bm{w})\nn\\
&= \gamma p(\bar{s}|\hat{s}; \bm{w}) \nabla_{\bm{w}_{\bar{s}}} V(\bar{s};\bm{w}) + \gamma \sum_{\stackrel{s' \in \mathcal{S}}{s' \ne \bar{s}} } p(s'|\hat{s}; \bm{w}) \nabla_{\bm{w}_{\bar{s}}} V(s';\bm{w}). \label{lemma1_proof_eq2}
\end{align}
Applying equation (\ref{lemma1_proof_eq2}) recursively gives
\begin{align}
\nabla_{\bm{w}_{\bar{s}}} V(\hat{s}; \bm{w}) &= \sum_{t=2}^\infty \gamma^{t-1} p(s_t = \bar{s} | s_1 = \hat{s}, s_\tau \ne \bar{s}, \tau = 1, ..., t-1; \bm{w}) \nabla_{\bm{w}_{\bar{s}}} V(\bar{s};\bm{w}) \nn\\
&= p_{\textnormal{hit}} (\hat{s} \to \bar{s}) \ \nabla_{\bm{w}_{\bar{s}}} V(\bar{s};\bm{w}), \label{lemma1_proof_eq4}
\end{align}
which completes the proof. The probability, $p(s_t = \bar{s} | s_1 = \hat{s}, s_\tau \ne \bar{s}, \tau = 1, ... , t-1; \bm{w})$, is equivalent to the probability that the first hitting time (of hitting state $\bar{s}$ when starting in state $\hat{s}$) is equal to $t$. The strict inequality, $p_{\textnormal{hit}}(\hat{s} \to \bar{s}) > 0$, follows from the ergodicity of the Markov chain induced by $\bm{w}$.
\end{proof}
\end{Lemma}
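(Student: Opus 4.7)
My plan is to prove the identity by exploiting the Bellman fixed-point equation together with the key observation that, because the policy is tabular, differentiating $\pi(a|\hat{s};\bm{w})$ with respect to $\bm{w}_{\bar{s}}$ yields zero whenever $\hat{s}\ne\bar{s}$. The only dependence of $V(\hat{s};\bm{w})$ on $\bm{w}_{\bar{s}}$ is therefore routed through the values at successor states, which leads naturally to a recursion that unrolls into a first-hitting-time expansion.

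First I would start from $V(\hat{s};\bm{w})=\sum_{a}\pi(a|\hat{s};\bm{w})Q(\hat{s},a;\bm{w})$ and differentiate with respect to $\bm{w}_{\bar{s}}$. The tabular assumption combined with $\hat{s}\ne\bar{s}$ kills the $\nabla_{\bm{w}_{\bar{s}}}\pi(a|\hat{s};\bm{w})$ term, leaving $\nabla_{\bm{w}_{\bar{s}}}V(\hat{s};\bm{w})=\sum_{a}\pi(a|\hat{s};\bm{w})\nabla_{\bm{w}_{\bar{s}}}Q(\hat{s},a;\bm{w})$. Then I would substitute $Q(\hat{s},a;\bm{w})=R(\hat{s},a)+\gamma\sum_{s'}p(s'|\hat{s},a)V(s';\bm{w})$; the reward term is independent of $\bm{w}$, so differentiation yields $\nabla_{\bm{w}_{\bar{s}}}V(\hat{s};\bm{w})=\gamma\sum_{s'}p(s'|\hat{s};\bm{w})\nabla_{\bm{w}_{\bar{s}}}V(s';\bm{w})$, where I have absorbed $\sum_{a}\pi(a|\hat{s};\bm{w})p(s'|\hat{s},a)$ into the one-step transition $p(s'|\hat{s};\bm{w})$.

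Next I would split off the $s'=\bar{s}$ contribution and apply the identity recursively to each surviving term $\nabla_{\bm{w}_{\bar{s}}}V(s';\bm{w})$ with $s'\ne\bar{s}$. Unrolling $t-1$ steps gives a prefactor that is precisely the probability of being in state $\bar{s}$ at time $t$, having started in $\hat{s}$ at time $1$ and avoided $\bar{s}$ at all intermediate times; this is exactly the first-hitting-time distribution of state $\bar{s}$ starting from $\hat{s}$. Collecting the geometric discount factor $\gamma^{t-1}$ and summing over $t\ge 2$ produces $p_{\textnormal{hit}}(\hat{s}\to\bar{s})\,\nabla_{\bm{w}_{\bar{s}}}V(\bar{s};\bm{w})$. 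To justify that the infinite unrolling is legitimate (rather than merely a formal manipulation), I would use the geometric tail bound $\gamma^{t-1}$ together with boundedness of the rewards and the resulting Lipschitz control on $\nabla_{\bm{w}_{\bar{s}}}V$; the residual after $T$ unrolling steps is $O(\gamma^{T})$ and vanishes. I expect this convergence argument to be the only technically delicate step; the algebraic manipulations themselves are routine.

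Finally, for the strict positivity claim, ergodicity of the induced Markov chain means $\bar{s}$ is reached from $\hat{s}$ with probability one in finite time, so the first-hitting probability at some $t\ge 2$ is strictly positive; combined with $\gamma^{t-1}>0$ (since $\gamma\in[0,1)$ but at least one strictly positive term suffices, and the case $\gamma=0$ would need the definition to be interpreted consistently with the standing assumption $\gamma\in[0,1)$ discussed before), we conclude $p_{\textnormal{hit}}(\hat{s}\to\bar{s})>0$.
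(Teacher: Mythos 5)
Your proposal is correct and follows essentially the same route as the paper's proof: differentiating the Bellman fixed-point form $V(\hat{s};\bm{w})=\sum_{a}\pi(a|\hat{s};\bm{w})Q(\hat{s},a;\bm{w})$, using the tabular structure to eliminate the $\nabla_{\bm{w}_{\bar{s}}}\pi(a|\hat{s};\bm{w})$ term, and recursively unrolling the resulting one-step recursion into the first-hitting-time expansion, with ergodicity giving strict positivity. Your explicit justification of the infinite unrolling via the $O(\gamma^{T})$ tail bound is a small technical refinement the paper leaves implicit, but the argument is otherwise the same.
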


We are now ready to prove Theorem~\ref{TabNonDec}.

\begin{reptheorem}{TabNonDec}
% Suppose that a given a Markov decision process has a tabular policy parametrization, then the policy parametrization is value consistent.
\begin{proof}
Suppose that there exists $i \in \mathbb{N}_n$, $\bm{w} \in \mathcal{W}$ and $\hat{s} \in \mathcal{S}$  such that 
\begin{equation*}
\bm{e}_i^\top \nabla_{\bm{w}} V(\hat{s}; \bm{w}) \ne 0,
\end{equation*}
for some $\hat{s} \in \mathcal{S}$. As the policy parametrization is tabular, then the $i^\textnormal{th}$ component of $\bm{w}$ 
corresponds to a policy parameter for a particular state, $\bar{s} \in \mathcal{S}$. From Lemma~\ref{lemma1} it follows that
\begin{equation*}
\frac{d}{d w_i} V(s; \bm{w}) = p_{\textnormal{hit}}(s \to \bar{s}) \ \frac{d}{d w_i} V(\bar{s};\bm{w}),
\end{equation*}
for all $s \in \mathcal{S}$. It follows that for states, $s \in \mathcal{S}$, for which $p_{\textnormal{hit}}(s \to \bar{s}) > 0$ 
that we have  
\begin{equation*}
\textnormal{sign} \big( \bm{e}_i^\top \nabla_{\bm{w}} V(s; \bm{w}) \big) = \textnormal{sign}(\bm{e}_i^\top \nabla_{\bm{w}} V(\hat{s}; \bm{w})),
\end{equation*}
while in states for which $p_{\textnormal{hit}}(s \to \bar{s}) = 0$ we have  
\begin{equation*}
\textnormal{sign} \big( \bm{e}_i^\top \nabla_{\bm{w}} V(s; \bm{w}) \big) = 0.
\end{equation*}
It remains to show that for states in which $p_{\textnormal{hit}}(s \to \bar{s}) = 0$ that 
\begin{equation*}
\textnormal{sign} \big( \bm{e}_i^\top \nabla_{\bm{w}} \pi(a|s; \bm{w}) \big) = 0, \quad \quad \quad  \forall a \in \mathcal{A}.
\end{equation*}
This property follows immediately from the fact that the policy parametrization is tabular and $p_{\textnormal{hit}}(\bar{s} \to \bar{s}) \ne 0$. 
\end{proof}
\end{reptheorem}

\subsection{Proof of Theorem~\ref{lem:nondecreasing}}\label{supp_hessian_analysis_section_local_optimum}

\begin{reptheorem}{lem:nondecreasing}
% Suppose that $\bm{w}^* \in \mathcal{W}$ is a local optimum of the differentiable objective function, $U(\bm{w}) = \mathbb{E}_{p_1(s)} \big[ V(s;\bm{w}) \big]$. Suppose that the Markov chain induced by $\bm{w}^*$ is ergodic. Suppose that the policy parametrization is value consistent w.r.t. the given Markov decision process. Then $\bm{w}^*$ is a stationary point of $V(s;\bm{w})$ for all $s \in \mathcal{S}$.
\begin{proof}
In order to obtain a contradiction suppose that $\bm{w}^*$ is not a stationary point of $V(s;\bm{w})$, for each $s \in \mathcal{S}$. This means that there exists $i \in \mathbb{N}_n$ and $\hat{s} \in \mathcal{S}$ such that $\bm{e}_i^\top \nabla_{\bm{w} | \bm{w} = \bm{w}^*} V(\hat{s}; \bm{w}) \ne 0$. We suppose that $\bm{e}_i^\top \nabla_{\bm{w} | \bm{w} = \bm{w}^*} V(\hat{s}; \bm{w}) > 0$ (an identical argument can be used for the case $\bm{e}_i^\top \nabla_{\bm{w} | \bm{w} = \bm{w}^*} V(\hat{s}; \bm{w}) < 0$). As the policy parametrization is value consistent it follows that, for each $s \in \mathcal{S}$,
\begin{equation}
\bm{e}_i^\top \nabla_{\bm{w} | \bm{w} = \bm{w}^*} V(s; \bm{w}) \ge 0. \label{non_decreasing_lemma_proof_eq1}
\end{equation}

In order to obtain a contradiction we will show that there is no $s \in \mathcal{S}$ for which (\ref{non_decreasing_lemma_proof_eq1}) holds with equality. Given this property a contradiction is obtained because it follows that 
\begin{align*}
\bm{e}_i^\top \nabla_{\bm{w} | \bm{w} = \bm{w}^*} U(\bm{w}) &= \mathbb{E}_{p_1(\bm{s})} \bigg[ \bm{e}_i^\top \nabla_{\bm{w} | \bm{w} = \bm{w}^*}V(\bm{s}; \bm{w}) \bigg] > 0, 
\end{align*}
contradicting the fact that $\bm{w}^*$ is a local optimum of the objective function. Introducing the notation 
\begin{align*}
\mathcal{S}_= &= \big\{s \in \mathcal{S} \, \big| \, \bm{e}_i^\top \nabla_{\bm{w} | \bm{w} = \bm{w}^*} V(s; \bm{w}) = 0 \big\}, \\
\mathcal{S}_> &= \{s \in \mathcal{S} \, | \, \bm{e}_i^\top \nabla_{\bm{w} | \bm{w} = \bm{w}^*} V(s; \bm{w}) > 0 \},
\end{align*}
we wish to show that $\mathcal{S}_= = \emptyset$. In particular, for a contradiction, suppose that $\mathcal{S}_= \ne \emptyset$. This means, given the ergodicity of the Markov chain induced by $\bm{w}^*$ and the fact that $\mathcal{S}_> \ne \emptyset$, that there exists $s \in \mathcal{S}_=$ and  $s' \in \mathcal{S}_>$ such that
\begin{equation*}
p(s'|s; \bm{w}^*) = \sum_{a \in \mathcal{A}} p(s'|s, a) \pi(a|s; \bm{w}^*) > 0.
\end{equation*}
We now consider the form of $\nabla_{\bm{w} | \bm{w} = \bm{w}^*} V(s; \bm{w})$. In particular, we have
\begin{align*}
\nabla_{\bm{w}} V(s; \bm{w}) &= \sum_{a \in \mathcal{A}} \nabla_{\bm{w}} \pi(a|s; \bm{w}) \bigg( R(a, s) + \gamma \sum_{s_{\textnormal{next}} \in \mathcal{S}} p(s_{\textnormal{next}}|s, a) V(s_{\textnormal{next}}; \bm{w}) \bigg), \\
&\quad + \gamma \sum_{a \in \mathcal{A}} \pi(a|s; \bm{w}) \sum_{s_{\textnormal{next}} \in \mathcal{S}} p(s_{\textnormal{next}}|s, a) \nabla_{\bm{w}} V(s_{\textnormal{next}}; \bm{w}).
\end{align*}
As $s \in \mathcal{S}_=$, we have by value consistency that 
\begin{equation*}
\bm{e}_i^\top \nabla_{\bm{w} | \bm{w} = \bm{w}^*} \pi (a|s; \bm{w}) = 0.
\end{equation*}
This means that 
\begin{align*}
\bm{e}_i^\top \nabla_{\bm{w}| \bm{w} = \bm{w}^*} V(s; \bm{w}) &= \gamma \sum_{a \in \mathcal{A}} \pi(a|s; \bm{w}) \sum_{s_{\textnormal{next}} \in \mathcal{S}} p(s_{\textnormal{next}}|s, a) \bm{e}_i^\top \nabla_{\bm{w}| \bm{w} = \bm{w}^*} V(s_{\textnormal{next}}; \bm{w}) > 0.
\end{align*}
The inequality follows from the fact that $p(s'|s; \bm{w}^*) > 0$, for some $s' \in \mathcal{S}_>$. This is a contradiction of the fact that $s_= \in \mathcal{S}_=$, so it follows that $\mathcal{S}_= = \emptyset$ and for all $s \in \mathcal{S}$ we have 
\begin{equation*}
\bm{e}_i^\top \nabla_{\bm{w}| \bm{w} = \bm{w}^*} V(s; \bm{w}) > 0,
\end{equation*}
which completes the proof. 
\end{proof}
\end{reptheorem}

\subsection{Proof of Theorem~\ref{AffineInvariance}}\label{affine_invariance_appendix}

\begin{reptheorem}{AffineInvariance}
% The second Gauss-Newton method is invariant to (non-singular) affine transformations of the parameter space. 
% The diagonal approximate Newton method is invariant to (non-singular) rescalings of the parameter space.
\begin{proof}
We shall consider the second Gauss-Newton method, with the 
result for the diagonal Gauss-Newton method following similarly.
Given a non-singular affine transformation, $T \in \mathbb{R}^{n \times n}$, define the objective, 
$\widehat{U}(\bm{w}) = U(T \bm{w}) = U(\bm{v})$, with $\bm{v} = T \bm{w}$, and denote the approximate Hessian of $\widehat{U}(\bm{w})$ 
by $\widehat{\mathcal{H}}_2(\bm{w})$. Given $\bm{w} \in \mathcal{W}$, then it is
sufficient to show that,
\begin{equation*}
T \bm{w}_{\textnormal{new}} = T \bigg( \bm{w} - \alpha \widehat{\mathcal{H}}_2^{-1}(\bm{w}) \nabla_{\bm{w}} \widehat{U}(\bm{w}) \bigg) = \bm{v} - \alpha \mathcal{H}_2^{-1}(\bm{v}) \nabla_{\bm{v}} U(\bm{v}) = \bm{v}_{\textnormal{new}}, \qquad \qquad \forall \alpha \in \mathbb{R}^+.
\end{equation*}
Following calculations analogous to those in Section~\ref{app:pol_grad_theorem} it can be shown that,
\begin{align*}
\nabla_{\bm{w}} \widehat{U}(\bm{w}) &= \sum_{s,a }p_{\gamma}(s,a; T \bm{w}) Q(s,a;T \bm{w})  \nabla_{\bm{w}} \log p(\bm{a}|\bm{s}; T \bm{w}) , \\
\widehat{\cH}_2(\bm{w}) &=  \sum_{s,a }p_{\gamma}(s,a; T \bm{w}) Q(s,a;T \bm{w}) \nabla^\top_{\bm{w}} \log p(\bm{a}|\bm{s}; T \bm{w}) .
\end{align*}
Using the relations
\begin{align*}
\nabla_{\bm{w}} \log \pi(a|s; T \bm{w}) &= T^\top \nabla_{\bm{v}} \log \pi(a|s; \bm{v}), \\
\nabla_{\bm{w}} \nabla^T_{\bm{w}} \log \pi(a|s; T \bm{w}) &= T^\top \nabla_{\bm{v}} \nabla^\top_{\bm{v}} \log \pi(a|s; \bm{v}) T,
\end{align*}
it follows that 
\begin{align*}
\nabla_{\bm{w}} \widehat{U}(\bm{w}) & = T^\top \nabla_{\bm{v}} U(\bm{v}), \\
\widehat{\mathcal{H}}_2(\bm{w}) & = T^\top \mathcal{H}_2(\bm{v}) T.
\end{align*}
From this we have, for any $\alpha \in \mathbb{R}^+$, that
\begin{equation*}
T \bm{w}_{\textnormal{new}} = T \bigg( \bm{w} - \alpha \widehat{\mathcal{H}}_2^{-1}(\bm{w}) \nabla_{\bm{w}} \widehat{U}(\bm{w}) \bigg) = \bm{v} - \alpha \mathcal{H}_2^{-1}(\bm{v}) \nabla_{\bm{v}} U(\bm{v}) = \bm{v}_{\textnormal{new}}, \qquad \qquad \forall \alpha \in \mathbb{R}^+.
\end{equation*}
which completes the proof.
\end{proof}
\end{reptheorem}

\subsection{Proofs of Theorems~\ref{FAN_lemma_convergence_lemma} and \ref{SAN_lemma_convergence_lemma}}\label{convergence_analysis_appendix}

We begin by stating a well-known tool for analysis of convergence of iterative optimization methods. Given an iterative optimization method, defined through a mapping $G : \mathcal{W} \to \mathbb{R}^n$, where $ \mathcal{W} \subseteq \mathbb{R}^n$, the local convergence at a point $\bm{w}^* \in \mathcal{W}$ is determined by the spectral radius of 
the Jacobian of $G$ at $\bm{w}^*$,
$\nabla_{\bm{w} | \bm{w} = \bm{w}^*} G(\bm{w})$. 
This is formalized through the well-known Ostrowski's Theorem, a formal proof of which can be found in \cite{ortega-rheinboldt}. 
\begin{Lemma}[Ostrowski's Theorem]
Suppose that we have a mapping $G : \mathcal{W}  \to \mathbb{R}^n$, where $ \mathcal{W} \subset \mathbb{R}^n$, such that $\bm{w}^* \in \textnormal{int}(\mathcal{W})$ is a fixed-point of $G$ and, furthermore, $G$ is Fr\'{e}chet differentiable at $\bm{w}^*$. If the spectral radius of $\nabla G(\bm{w}^*)$ satisfies $\rho(\nabla G(\bm{w}^*)) < 1$, then $\bm{w}^*$ is a point of attraction of $G$. Furthermore, if $\rho(\nabla G(\bm{w}^*)) > 0$, then the convergence towards $\bm{w}^*$ is linear and the rate is given by $\rho(\nabla G(\bm{w}^*))$.
\end{Lemma}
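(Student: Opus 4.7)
The plan is to combine Fréchet differentiability of $G$ at $\bm{w}^*$ with a classical spectral-radius norm construction: for any linear map $J \in L(\mathbb{R}^n)$ and any $\epsilon > 0$, there exists a norm $\|\cdot\|_\epsilon$ on $\mathbb{R}^n$ whose induced operator norm satisfies $\|J\|_\epsilon \leq \rho(J) + \epsilon$. I would establish this auxiliary fact via the Jordan canonical form of $J$: conjugating a Jordan basis by a diagonal rescaling $\mathrm{diag}(1, \eta, \eta^2, \ldots)$ scales the super-diagonal nilpotent entries by $\eta$, and taking $\eta$ sufficiently small yields an $\ell^2$-induced operator norm (in the rescaled basis) within $\epsilon$ of $\rho(J)$.

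With this tool, set $J = \nabla G(\bm{w}^*)$, fix $\epsilon > 0$ so small that $\rho(J) + 2\epsilon < 1$, and let $\|\cdot\|_\epsilon$ be the corresponding norm. Fréchet differentiability together with $G(\bm{w}^*) = \bm{w}^*$ gives
\[
G(\bm{w}) - \bm{w}^* = J(\bm{w} - \bm{w}^*) + r(\bm{w} - \bm{w}^*), \qquad \|r(\bm{h})\|_\epsilon = o(\|\bm{h}\|_\epsilon).
\]
Hence there is a $\delta > 0$ with $\|r(\bm{h})\|_\epsilon \leq \epsilon \|\bm{h}\|_\epsilon$ on the ball $\{\bm{h} : \|\bm{h}\|_\epsilon < \delta\}$, and a triangle-inequality step then yields
\[
\|G(\bm{w}) - \bm{w}^*\|_\epsilon \leq (\rho(J) + 2\epsilon)\, \|\bm{w} - \bm{w}^*\|_\epsilon
\]
whenever $\|\bm{w} - \bm{w}^*\|_\epsilon < \delta$. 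Since $\rho(J) + 2\epsilon < 1$, this is a strict contraction on the $\delta$-ball, so iterates starting in this ball remain there and decay geometrically: $\|\bm{w}_k - \bm{w}^*\|_\epsilon \leq (\rho(J) + 2\epsilon)^k \|\bm{w}_0 - \bm{w}^*\|_\epsilon$. By equivalence of norms on $\mathbb{R}^n$ this establishes $\bm{w}^*$ as a point of attraction in any norm, and letting $\epsilon \to 0$ recovers the upper bound $\limsup_k \|\bm{w}_k - \bm{w}^*\|^{1/k} \leq \rho(J)$.

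To upgrade the upper bound to an equality when $\rho(J) > 0$, I would pick an initial perturbation along a (generalized) eigenvector of $J$ for an eigenvalue $\lambda$ with $|\lambda| = \rho(J)$. On such a direction $J^k$ acts with magnitude growing like $\rho(J)^k$ (possibly modified by a polynomial prefactor from Jordan-block structure), so its $k$-th root tends to $\rho(J)$. The Fréchet remainder is sublinear in its argument, so for initial conditions close enough to $\bm{w}^*$ the accumulated nonlinear error remains dominated by this linear leading-order behaviour, giving a matching lower bound $\liminf_k \|\bm{w}_k - \bm{w}^*\|^{1/k} \geq \rho(J)$.

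The main obstacle is precisely this lower bound. The upper half falls out cleanly from the spectral-radius norm trick combined with Fréchet differentiability. For the lower half one must carefully track how the nonlinear residual perturbs $J^k$ applied to the initial perturbation across many iterations, and in the non-diagonalizable case one must absorb polynomial Jordan-block prefactors into the arbitrariness of $\epsilon$ so they do not disturb the $k$-th root. With the standard estimates of Ortega and Rheinboldt this is routine but delicate, and is the reason the result is quoted from that reference rather than reproduced here.
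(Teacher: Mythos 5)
The paper does not actually prove this lemma: it is quoted verbatim from \cite{ortega-rheinboldt} (Ostrowski's theorem, Section 10.1 there), so there is no in-paper argument to compare against. Your contraction argument for the first assertion is correct and is essentially the standard proof: the Jordan-form rescaling gives a norm with $\|J\|_\epsilon \le \rho(J)+\epsilon$, Fr\'{e}chet differentiability at the fixed point gives $\|G(\bm{w})-\bm{w}^*\|_\epsilon \le (\rho(J)+2\epsilon)\|\bm{w}-\bm{w}^*\|_\epsilon$ locally, and geometric decay plus norm equivalence yields attraction and the upper bound $\limsup_k \|\bm{w}_k-\bm{w}^*\|^{1/k} \le \rho(J)$. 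This part is complete and sound.

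The rate-equality claim is where your sketch stops short, and you are right to flag it as the obstacle. Two points are worth making concrete. First, the statement ``the rate is given by $\rho(\nabla G(\bm{w}^*))$'' cannot hold for every individual sequence: an initial error lying in an invariant subspace associated with eigenvalues of modulus strictly less than $\rho(J)$ converges faster, so the correct formulation (and the one in Ortega--Rheinboldt, Theorem 10.1.4) is that the \emph{worst-case} R-factor over sequences generated in a neighbourhood of $\bm{w}^*$ equals $\rho(J)$; a single well-chosen initial perturbation along a dominant (generalized) eigendirection establishes attainment, not universality. Second, the step ``the accumulated nonlinear error remains dominated by the linear leading-order behaviour'' hides the real work: the iterate error satisfies $\bm{w}_{k}-\bm{w}^* = J^k(\bm{w}_0-\bm{w}^*) + \sum_{j<k} J^{k-1-j} r(\bm{w}_j-\bm{w}^*)$, and one must show the sum cannot cancel the $\rho(J)^k$-sized component of the first term; this requires choosing the neighbourhood so that the $o(\cdot)$ constants are small relative to a gap $\rho(J)-\epsilon$, and handling complex dominant eigenvalues via the real invariant subspace. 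None of this is wrong in your outline, but as written it is a plan rather than a proof, and deferring it to \cite{ortega-rheinboldt} leaves you in exactly the same position as the paper itself.
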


We now prove Theorems \ref{FAN_lemma_convergence_lemma} and \ref{SAN_lemma_convergence_lemma}.

\begin{reptheorem}{FAN_lemma_convergence_lemma}[Convergence analysis for the first Gauss-Newton method]
% Suppose that $\bm{w}^* \in \mathcal{W}$ is such that $\nabla_{\bm{w}|\bm{w} = \bm{w}^*} U(\bm{w}) = \bm{0}$ 
% and $\mathcal{A}_1(\bm{w}^*) + \mathcal{A}_2(\bm{w}^*)$ is invertible, then $G_1$ is Fr\'{e}chet differentiable at $\bm{w}^*$ 
% and $\nabla G_1(\bm{w}^*)$ takes the form,
% \begin{align}
% \nabla G_1(\bm{w}^*) &= I - \alpha (\mathcal{A}_1(\bm{w}^*) + \mathcal{A}_2(\bm{w}^*))^{-1} \mathcal{H}(\bm{w}^*). \label{G1_Jacobian_appendix}
% \end{align}
% If $\mathcal{H}(\bm{w}^*)$ and $\mathcal{A}_1(\bm{w}^*) + \mathcal{A}_2(\bm{w}^*)$ are negative-definite, and the step size
% is in the range,
% \begin{equation}
% \alpha \in \bigg( 0, 2 \max_{\bm{x} \in \mathbb{R}^n} \frac{\bm{x}^\top \big( \mathcal{A}_1(\bm{w}^*) + \mathcal{A}_2(\bm{w}^*) \big) \bm{x}}{ \bm{x}^\top \mathcal{H}(\bm{w}^*) \bm{x} } \bigg), \label{permissable_step_sizes_1GN_appendix}
% \end{equation}
% then $\bm{w}^*$ is a point of attraction of the first Gauss-Newton method, the convergence is at least linear and the rate is given by $\rho(\nabla G_1(\bm{w}^*)) < 1$.
% When the policy parametrization is value consistent with respect to the given Markov Decision Process, then (\ref{G1_Jacobian_appendix}) simplifies to 
% \begin{align}
% \nabla G_1(\bm{w}^*) &= (1 - \alpha) I, \label{G1_Jacobian_value_consistent_appendix}
% \end{align}
% and whenever $\alpha\in(0,2)$ then $\bm{w}^*$ is a point of attraction of the first Gauss-Newton method, and the convergence to $\bm{w}^*$ is linear if $\alpha \ne 1$ with a rate given by $\rho(\nabla G_1(\bm{w}^*))<1$, and convergence is super-linear when $\alpha=1$.
\begin{proof}
A formal proof that $G_1$ is Fr\'{e}chet differentiable can be found in Section~10.2.1 of \cite{ortega-rheinboldt}.
We now demonstrate the form of $\nabla G_1(\bm{w}^*)$. For simplicity we shall assume that $(\mathcal{A}_1(\bm{w}^*) + \mathcal{A}_2(\bm{w}^*))^{-1}$ is differentiable. This is not a necessary condition, and a proof that does not make this assumption can be found in Section~10.2.1 of \cite{ortega-rheinboldt}. We have that,
\begin{equation*}
G_1(\bm{w}) = \bm{w} - \alpha (\mathcal{A}_1(\bm{w}) + \mathcal{A}_2(\bm{w}))^{-1} \nabla_{\bm{w}}^\top U(\bm{w}),
\end{equation*}
so that $\nabla_{\bm{w}} G_1(\bm{w})$ is given by 
\begin{equation*}
\nabla_{\bm{w}} G_1(\bm{w}) = I - \alpha \nabla_{\bm{w}} (\mathcal{A}_1(\bm{w}) + \mathcal{A}_2(\bm{w}))^{-1} \nabla^\top U(\bm{w}) - \alpha (\mathcal{A}_1(\bm{w}) + \mathcal{A}_2(\bm{w}))^{-1} \nabla_{\bm{w}} \nabla_{\bm{w}}^\top U(\bm{w}).
\end{equation*}
The fact that $\nabla_{\bm{w}|\bm{w} = \bm{w}^*} U(\bm{w}) = \bm{0}$ means that   
\begin{equation*}
\nabla G_1(\bm{w}^*) = I - \alpha (\mathcal{A}_1(\bm{w}^*) + \mathcal{A}_2(\bm{w}^*))^{-1} \mathcal{H}(\bm{w}^*).
\end{equation*}
As $\mathcal{H}(\bm{w}^*)$ and $\mathcal{A}_1(\bm{w}^*) + \mathcal{A}_2(\bm{w}^*)$ are negative-definite, it follows that 
the eigenvalues of $(\mathcal{A}_1(\bm{w}^*) + \mathcal{A}_2(\bm{w}^*))^{-1} \mathcal{H}(\bm{w}^*)$ are positive. Hence, 
\begin{equation}
\rho(\nabla G_1(\bm{w}^*)) = \max \big\{ |1 - \alpha \lambda_{\textnormal{min}}|, |1 - \alpha \lambda_{\textnormal{max}}| \big\},
\end{equation}
with $\lambda_{\textnormal{min}}$ and $\lambda_{\textnormal{max}}$ respectively denoting the minimal and maximal eigenvalues of $(\mathcal{A}_1(\bm{w}^*) + \mathcal{A}_2(\bm{w}^*))^{-1} \mathcal{H}(\bm{w}^*)$. Hence, $\rho(\nabla G_1(\bm{w}^*)) < 1$ provided that $\alpha \in (0, 2 \lambda_{\textnormal{max}}^{-1})$, or, written in terms of the spectral radius, $\alpha \in (0, 2 / \rho( (\mathcal{A}_1(\bm{w}^*) + \mathcal{A}_2(\bm{w}^*) )^{-1} \mathcal{H}(\bm{w}^*) ) )$.

% Using the notion of relative eigenvalues \citep{hestenes-1975} it can be shown that the eigenvalues of $\nabla G_1(\bm{w}^*)$
% are contained in the image of the function,
% \begin{equation*}
% \Psi(\bm{x})  = 1 - \alpha \frac{\bm{x}^\top \mathcal{H}(\bm{w}^*) \bm{x}}{ \bm{x}^\top \big( \mathcal{A}_1(\bm{w}^*) + \mathcal{A}_2(\bm{w}^*) \big) \bm{x}}, \quad \quad \bm{x} \in \mathbb{R}^n. \label{rayleighQuotient_firstGN} 
% \end{equation*}
% Additionally, the maximal/minimal eigenvalue of $\nabla G_1(\bm{w}^*)$ is given by the maximal/minimal value of $\Psi$,
% so that 
% \begin{equation*}
% \rho(\nabla G_1(\bm{w}^*)) = \max_{\bm{x} \in \mathbb{R}^n} \big| \Psi(\bm{x}) \big|.
% \end{equation*}
% As $\mathcal{H}(\bm{w}^*)$ and $\mathcal{A}_1(\bm{w}^*) + \mathcal{A}_2(\bm{w}^*)$ are negative-definite, it follows that 
% $\rho ( \nabla G_1(\bm{w}^*) ) < 1$ when the step size satisfies the condition, 
% \begin{equation*}
% \alpha \in \bigg( 0, 2 \max_{\bm{x} \in \mathbb{R}^n} \frac{\bm{x}^\top \mathcal{H}(\bm{w}^*) \bm{x}}{ \bm{x}^\top \big( \mathcal{A}_1(\bm{w}^*) + \mathcal{A}_2(\bm{w}^*) \big) \bm{x} } \bigg),
% \end{equation*}
% By Ostrowski's theorem $\bm{w}^*$ is a point of attraction of the first Gauss-Newton method.

When the policy parametrization is value consistent with respect to the given MDP, then from Theorem~\ref{lem:nondecreasing}
$\mathcal{H}_{12}(\bm{w}^*) + \mathcal{H}_{12}^\top(\bm{w}^*) = \bm{0}$, so 
that $\mathcal{H}(\bm{w}^*) = \mathcal{A}_1(\bm{w}^*) + \mathcal{A}_2(\bm{w}^*)$. It then follows that 
$\nabla G_1(\bm{w}^*)= (1 - \alpha) I$. Convergence for this case follows in the same manner.
\end{proof}
\end{reptheorem}

\begin{reptheorem}{SAN_lemma_convergence_lemma}[Convergence analysis for the second Gauss-Newton method]
% Suppose that $\bm{w}^* \in \mathcal{W}$ is such that $\nabla_{\bm{w}|\bm{w} = \bm{w}^*} U(\bm{w}) = \bm{0}$ 
% and $\mathcal{H}_2(\bm{w}^*)$ is invertible, then $G_2$ is Fr\'{e}chet differentiable at $\bm{w}^*$ 
% and $\nabla G_2(\bm{w}^*)$ takes the form,
% \begin{align}
% \nabla G_2(\bm{w}^*) &= I - \alpha \mathcal{H}^{-1}_2(\bm{w}^*) \mathcal{H}(\bm{w}^*). \label{G2_Jacobian_appendix}
% \end{align}
% If $\mathcal{H}(\bm{w}^*)$ is negative-definite and the step size
% is in the range,
% \begin{equation}
% \alpha \in \bigg( 0, 2 \max_{\bm{x} \in \mathbb{R}^n} \frac{\bm{x}^\top \mathcal{H}_2(\bm{w}^*) \bm{x}}{ \bm{x}^\top \mathcal{H}(\bm{w}^*) \bm{x} } \bigg), \label{permissable_step_sizes_2GN_appendix}
% \end{equation}
% then  $\bm{w}^*$ is a point of attraction of the second Gauss-Newton method, convergence to $\bm{w}^*$ is at least linear and the rate is given by $\rho(\nabla G_2(\bm{w}^*)) < 1$. Furthermore, $\alpha \in (0,2)$ implies  condition (\ref{permissable_step_sizes_2GN_appendix}).
% When the policy parametrization is value consistent with respect to the given Markov Decision Process, then (\ref{G2_Jacobian_appendix}) simplifies to
% \begin{align}
% \nabla G_2(\bm{w}^*) &= I - \alpha \mathcal{H}_2^{-1}(\bm{w}^*) \mathcal{A}_1(\bm{w}^*). \label{G2_Jacobian_value_consistent_appendix}
% \end{align}
\begin{proof}
The formulas (\ref{G2_Jacobian}) and (\ref{G2_Jacobian_value_consistent}) follow as in the proof of Theorem~\ref{FAN_lemma_convergence_lemma}. Using the same approach as in Theorem~\ref{FAN_lemma_convergence_lemma}, it can be shown that $\rho( \nabla G_2(\bm{w}^*) ) < 1$ provided that, $\alpha \in (0, 2/ \rho(\mathcal{H}_2(\bm{w}^*)^{-1}  \mathcal{H}(\bm{w}^*)) )$.

As $\mathcal{H}(\bm{w}^*)$ and $\mathcal{H}_2(\bm{w}^*)$ are negative-definite the eigenvalues of $\mathcal{H}_2(\bm{w}^*)^{-1} \mathcal{H}(\bm{w}^*)$ are positive. Furthermore, as $\mathcal{H}(\bm{w}^*) = \mathcal{H}_{11}(\bm{w}^*) + \mathcal{H}_2(\bm{w}^*)$, and, by Lemma~\ref{H11_is_positive_definite}, $\mathcal{H}_{11}(\bm{w}^*)$ is positive-definite, it follows that the eigenvalues of $\mathcal{H}_2(\bm{w}^*)^{-1} \mathcal{H}(\bm{w}^*)$ all lie in the range $(0,1)$. This means that $\alpha \in (0,2)$ is sufficient to ensure that $\rho( \nabla G_2(\bm{w}^*) ) < 1$.

% Using the same approach as in Theorem~\ref{FAN_lemma_convergence_lemma}, it can be shown that the eigenvalues of $\nabla G_2(\bm{w}^*)$ are contained in the image of 
% \begin{equation*}
% \Psi(\bm{x}) = \frac{\bm{x}^\top \big( \alpha \mathcal{H}(\bm{w}^*) - \mathcal{H}_2(\bm{w}^*) \big) \bm{x}}{- \bm{x}^\top \mathcal{H}_2(\bm{w}^*) \bm{x}} = 1 - \alpha \frac{\bm{x}^\top \mathcal{H}(\bm{w}^*) \bm{x}}{ \bm{x}^\top \mathcal{H}_2(\bm{w}^*) \bm{x}}, \label{rayleighQuotient_secondGN} 
% \end{equation*}
% with the minimal/maximal eigenvalue of $\nabla G_2(\bm{w}^*)$ given by the minimal/maximal value of $\Psi$.
% As $\mathcal{H}(\bm{w}^*)$ and $\mathcal{H}_2(\bm{w}^*)$ are negative-definite, it follows that  
% $\rho ( \nabla G_1(\bm{w}^*) ) < 1$ when the step size satisfies the condition, 
% \begin{equation}
% \alpha \in \bigg( 0, 2 \max_{\bm{x} \in \mathbb{R}^n} \frac{\bm{x}^\top \mathcal{H}(\bm{w}^*) \bm{x}}{ \bm{x}^\top \mathcal{H}_2(\bm{w}^*) \bm{x} } \bigg), \label{ConvCond}
% \end{equation}
% Given that $\mathcal{H}(\bm{w}^*)$ and $\mathcal{H}_2(\bm{w}^*)$ are negative-definite, and, by lemma~\ref{H11_is_positive_definite}, $\mathcal{H}_{11}(\bm{w}^*)$ is positive-definite, it follows 
\end{proof}
\end{reptheorem}

\subsection{Proof of Theorem~\ref{theorem:analysis_em_generalized_gradient}} \label{EMconnectionProof}

\begin{reptheorem}{theorem:analysis_em_generalized_gradient}
% Suppose we are given a Markov decision process with objective (\ref{ps_objectiveFunction}) and Markovian trajectory distribution (\ref{trajectory_prob_parametric}). Consider the parameter update (M-step) of Expectation Maximization at the $k^{\textnormal{th}}$ iteration of the algorithm, i.e.,
% \begin{equation*}
% \bm{w}_{k+1} = \argmax{\bm{w}\in\mathcal{W}} \mathcal{Q}(\bm{w}, \bm{w}_k).
% \end{equation*}
% Provided that $\mathcal{Q}(\bm{w}, \bm{w}_k)$ is twice continuously differentiable in the first parameter we have that
% \begin{equation}
% \bm{w}_{k+1} - \bm{w}_k = - \mathcal{H}_2^{-1}(\bm{w}_k) \nabla_{\bm{w}|\bm{w} = \bm{w}_k} U(\bm{w}) + \mathcal{O}(\|\bm{w}_{k+1} - \bm{w}_k\|^2). \label{apxn_to_EM_relation_appendix}
% \end{equation}
% Additionally, in the case where the $\log$-policy is quadratic the relation to the approximate Newton method is exact, i.e., the second term on the \rhs (\ref{apxn_to_EM_relation_appendix}) is zero.
\begin{proof} 
We use the notation $\nabla^{10}_{\bm{w}} \mathcal{Q}(\bm{w}_{j}, \bm{w}_k)$ to denote the derivative with respect to the first variable of $Q$, evaluated at $(\bm{w}_{j}, \bm{w}_k)$, and similarly $\nabla^{20}_{\bm{w}} \mathcal{Q}(\bm{w}_{j}, \bm{w}_k)$ for the second derivative and $\nabla^{01}_{\bm{w}} \mathcal{Q}(\bm{w}_{j}, \bm{w}_k)$ for the derivative with respect to the second variable etc. The idea of the proof is simple and consists of performing a Taylor expansion of $\nabla_{\bm{w}}^{10} \mathcal{Q}(\bm{w}, \bm{w}_k)$. As $\mathcal{Q}$ is assumed to be twice continuously differentiable in the first component this Taylor expansion is possible and gives 
\begin{equation}
\nabla^{10}_{\bm{w}} \mathcal{Q}(\bm{w}_{k+1}, \bm{w}_k) = \nabla^{10}_{\bm{w}} \mathcal{Q}(\bm{w}_k, \bm{w}_k) + \nabla^{20}_{\bm{w}} \mathcal{Q}(\bm{w}_k, \bm{w}_k) (\bm{w}_{k+1} - \bm{w}_k) + \mathcal{O}(\|\bm{w}_{k+1} - \bm{w}_k\|^2). \label{analysis_em_generalized_gradient_appendix}
\end{equation}
As $\bm{w}_{k+1} = \argmax{\bm{w} \in \mathcal{W}} \mathcal{Q}(\bm{w}, \bm{w}_k)$ it follows that $\nabla^{10}_{\bm{w}}\mathcal{Q}(\bm{w}_{k+1}, \bm{w}_k)  = 0$.
This means that, upon ignoring higher order terms in $\bm{w}_{k+1} - \bm{w}_k$, the Taylor expansion (\ref{analysis_em_generalized_gradient_appendix}) can be rewritten into the form
\begin{equation}
\bm{w}_{k+1} - \bm{w}_k = - \nabla^{20}_{\bm{w}} \mathcal{Q}(\bm{w}_k, \bm{w}_k)^{-1} \nabla^{10}_{\bm{w}} \mathcal{Q}(\bm{w}_k, \bm{w}_k).
\end{equation}
The proof is completed by observing that 
\begin{align*}
\nabla^{10}_{\bm{w}} \mathcal{Q}(\bm{w}_k, \bm{w}_k) &= \nabla_{\bm{w}|\bm{w} = \bm{w}_k} U(\bm{w}), \qquad \qquad \nabla^{20}_{\bm{w}} \mathcal{Q}(\bm{w}_k, \bm{w}_k) = \mathcal{H}_2(\bm{w}_k). 
\end{align*}
The second statement follows because in the case where the $\log$-policy is quadratic the higher order terms in the Taylor expansion vanish.
\end{proof}
\end{reptheorem}

\subsection{Proof of Theorem~\ref{EM_lemma_convergence_lemma}}\label{ConvEM_appendix}

\begin{reptheorem}{EM_lemma_convergence_lemma}
% Suppose that the sequence, $\{\bm{w}_k\}_{k \in \mathbb{N}}$, is generated by an application of the EM-algorithm, where the sequence converges to $\bm{w}^*$. Denoting the update operation of the EM-algorithm by $G_{\textnormal{EM}}$, so that $\bm{w}_{k+1} = G_{\textnormal{EM}}(\bm{w}_k)$, then
% \begin{align}
% \nabla G_2(\bm{w}^*) &= I - \mathcal{H}^{-1}_2(\bm{w}^*) \mathcal{H}(\bm{w}^*). \nn
% \end{align}
% When the policy parametrization is value consistent with respect to the given Markov Decision Process this simplifies to $\nabla G_{\textnormal{EM}}(\bm{w}^*) = I - \mathcal{H}_2(\bm{w}^*)^{-1} \mathcal{A}_1(\bm{w}^*)$. When the Hessian, $\mathcal{H}(\bm{w}^*)$, is negative-definite then $\rho(\nabla G_{\textnormal{EM}}(\bm{w}^*)) < 1$ and $\bm{w}^*$ is a local point of a attraction for the EM-algorithm.
\begin{proof}
In the EM-algorithm the update of the policy parameters takes the form
\begin{equation*}
G_{\textnormal{EM}}(\bm{w}_{k}) = \argmax{\bm{w} \in \mathcal{W}} \mathcal{Q}(\bm{w}, \bm{w}_k),
\end{equation*}
where the function $\mathcal{Q}(\bm{w}, \bm{w}')$ is given by
\begin{equation*}
\mathcal{Q}(\bm{w}, \bm{w}') = \sum_{(s,a) \in \mathcal{S} \times \mathcal{A}} p_{\gamma}(s,a;\bm{w}') Q(s,a;\bm{w}') \bigg[ \log \pi (a|s; \bm{w}) \bigg].
\end{equation*}
Note that $\mathcal{Q}$ is a two parameter function, where the first parameter occurs inside the bracket, while the second parameter occurs outside the bracket. Also note that $\mathcal{Q}(\bm{w}, \bm{w}')$ satisfies the following identities  
\begin{align*}
\nabla^{10} \mathcal{Q}(\bm{w}, \bm{w}') &= \sum_{(s,a) \in \mathcal{S} \times \mathcal{A}} p_{\gamma}(s,a;\bm{w}') Q(s,a;\bm{w}') \bigg[ \nabla_{\bm{w}} \log \pi (a|s; \bm{w}) \bigg], \\
\nabla^{20} \mathcal{Q}(\bm{w}, \bm{w}') &= \sum_{(s,a) \in \mathcal{S} \times \mathcal{A}} p_{\gamma}(s,a;\bm{w}') Q(s,a;\bm{w}') \bigg[ \nabla_{\bm{w}} \nabla^\top_{\bm{w}} \log \pi (a|s; \bm{w}) \bigg], \\
\nabla^{11} \mathcal{Q}(\bm{w}, \bm{w}') &= \sum_{(s,a) \in \mathcal{S} \times \mathcal{A}} \nabla_{\bm{w}'} \bigg( p_{\gamma}(s,a;\bm{w}') Q(s,a;\bm{w}') \bigg) \nabla^\top_{\bm{w}} \log \pi (a|s\bm{s}; \bm{w}) .
\end{align*}
Here we have used the notation $\nabla^{ij}$ to denote the $i^{\textnormal{th}}$ derivative with respect to the first parameter and the 
$j^{\textnormal{th}}$ derivative with respect to the second parameter. Note that when we set $\bm{w} = \bm{w}'$ in the first two of these terms 
we have $\nabla^{10} \mathcal{Q}(\bm{w}, \bm{w}) = \nabla_{\bm{w}} U(\bm{w})$,  $\nabla^{20} \mathcal{Q}(\bm{w}, \bm{w}) = \mathcal{H}_2(\bm{w})$. 
A key identity that we need for the proof is that $\nabla^{11} \mathcal{Q}(\bm{w}, \bm{w}) = \mathcal{H}_1(\bm{w}) + \mathcal{H}_{12}(\bm{w}) + \mathcal{H}_{12}^\top(\bm{w})$. This follows from the observation that $\nabla_{\bm{w}} U(\bm{w}) = \nabla^{10} \mathcal{Q}(\bm{w}, \bm{w})$, so that 
\begin{align*}
\nabla_{\bm{w}} \nabla_{\bm{w}}^\top U(\bm{w}) &= \nabla_{\bm{w}} \bigg( \nabla^{10} \mathcal{Q}(\bm{w}, \bm{w}) \bigg) = \nabla^{20} \mathcal{Q}(\bm{w}, \bm{w}) + \nabla^{11} \mathcal{Q}(\bm{w}, \bm{w}),
\end{align*}
so that 
\begin{align*}
\mathcal{H}_1(\bm{w}) + \mathcal{H}_{12}(\bm{w}) + \mathcal{H}_{12}^\top(\bm{w}) &= \mathcal{H}(\bm{w}) - \mathcal{H}_2(\bm{w}) = \nabla^{20} \mathcal{Q}(\bm{w}, \bm{w}) + \nabla^{11} \mathcal{Q}(\bm{w}, \bm{w}) - \nabla^{20} \mathcal{Q}(\bm{w}, \bm{w}), \\
&= \nabla^{11} \mathcal{Q}(\bm{w}, \bm{w}),
\end{align*}
as claimed.

Now, to calculate the matrix $\nabla G_{\textnormal{EM}} (\bm{w}^*)$ we perform a Taylor series expansion of $\nabla_{\bm{w}}^{10} \mathcal{Q}(\bm{w}, \bm{w}')$ in both parameters around the point $(\bm{w}^*, \bm{w}^*)$, 
and evaluated at $(\bm{w}_{k+1}, \bm{w}_k)$, which gives
\begin{align*}
\nabla_{\bm{w}}^{10} \mathcal{Q}(\bm{w}_{k+1}, \bm{w}_{k}) = \nabla_{\bm{w}}^{10} \mathcal{Q}(\bm{w}^*, \bm{w}^*) + \nabla_{\bm{w}}^{20} \mathcal{Q}(\bm{w}^*, \bm{w}^*) & \big( \bm{w}_{k+1} - \bm{w}^* \big) \\
& + \nabla_{\bm{w}}^{11} \mathcal{Q}(\bm{w}^*, \bm{w}^*) \big( \bm{w}_{k} - \bm{w}^* \big) + \dots
\end{align*}
As $\bm{w}^*$ is a local optimum of $U(\bm{w})$ we have that $\nabla_{\bm{w}}^{10} \mathcal{Q}(\bm{w}^*, \bm{w}^*) = \bm{0}$. Furthermore, as the sequence $\{\bm{w}_{k}\}_{k \in \mathbb{N}}$ was generated by the EM-algorithm, we have, for each $k \in \mathbb{N}$, that $\bm{w}_{k+1} = \argmax{\bm{w} \in \mathcal{W}} \mathcal{Q}(\bm{w}, \bm{w}_k)$, which implies that $\nabla_{\bm{w}}^{10} \mathcal{Q}(\bm{w}_{k+1}, \bm{w}_{k}) = \bm{0}$. Finally, as  $\nabla^{20} \mathcal{Q}(\bm{w}^*, \bm{w}^*) = \mathcal{H}_2(\bm{w}^*)$ and $\nabla^{11} \mathcal{Q}(\bm{w}^*, \bm{w}^*) = \mathcal{H}_1(\bm{w}^*)$ we have
\begin{align*}
\bm{0} = \mathcal{H}_2(\bm{w}^*) (\bm{w}_{k+1} - \bm{w}^*) + \big( \mathcal{H}_1(\bm{w}^*) + \mathcal{H}_{12}(\bm{w}^*) + \mathcal{H}_{12}^\top(\bm{w}^*) \big) (\bm{w}_{k} - \bm{w}^*) + \dots
\end{align*}
Using the fact that $\bm{w}_{k+1} = G_{\textnormal{EM}}(\bm{w}_{k})$ and $\bm{w}^* = G_{\textnormal{EM}}(\bm{w}^*)$, taking the limit $k \to \infty$ gives
\begin{equation*}
\bm{0} = \mathcal{H}_2(\bm{w}^*) \nabla_{\bm{w}} G_{\textnormal{EM}} (\bm{w}^*) + \mathcal{H}_1(\bm{w}^*) + \mathcal{H}_{12}(\bm{w}^*) + \mathcal{H}_{12}^\top(\bm{w}^*),
\end{equation*}
so that 
\begin{equation*}
\nabla_{\bm{w}} G_{\textnormal{EM}} (\bm{w}^*) = - \mathcal{H}^{-1}_2(\bm{w}^*) \big( \mathcal{H}_1(\bm{w}^*) + \mathcal{H}_{12}(\bm{w}^*) + \mathcal{H}_{12}^\top(\bm{w}^*) \big) = I - \mathcal{H}_2^{-1}(\bm{w}^*) \mathcal{H}(\bm{w}^*).
\end{equation*}
In the case where the policy parametrization value consistent with respect to the given MDP then we have $\mathcal{H}_{12}(\bm{w}^*) + \mathcal{H}_{12}(\bm{w}^*)^\top = \bm{0}$, so that $\nabla_{\bm{w}} G_{\textnormal{EM}} (\bm{w}^*) = I - \mathcal{H}^{-1}_2(\bm{w}^*) \mathcal{A}_1(\bm{w}^*)$.
The rest of the proof follows from the result in Theorem~\ref{SAN_lemma_convergence_lemma} when considering $\alpha=1$.
\end{proof}
\end{reptheorem}

\section{Further Details for Estimation of Preconditioners and the Gauss-Newton Update Direction}

\subsection{Recurrent State Search Direction Evaluation for Second Gauss-Newton Method}\label{sec_supp_recurrent_state_eval}

In \cite{williams92} a sampling algorithm was provided for estimating the gradient of an infinite horizon MDP with average rewards. This algorithm makes use of a recurrent state, which we denote by $\bm{s}^*$. In Algorithm~\ref{alg:recurrent_state_inference} we detail a straightforward extension of this algorithm to the estimation the approximate Hessian, $\mathcal{H}_2(\bm{w})$, in this MDP framework. The analogous algorithm for the estimation of the diagonal matrix, $\mathcal{D}_2(\bm{w})$, follows similarly. In Algorithm~\ref{alg:recurrent_state_inference} we make use of an eligibility trace for both the gradient and the approximate Hessian, which we denote by $\bm{\Phi}^1$ and $\bm{\Phi}^2$ respectively. The estimates (up to a positive scalar) of the gradient and the approximate Hessian are denoted by $\bm{\Delta}^1$ and $\bm{\Delta}^2$ respectively.

\begin{algorithm}[tb]
  \caption{Recurrent state sampling algorithm to estimate the search direction of the second Gauss-Newton method. The algorithm is applicable to Markov decision processes with an infinite planning horizon and average rewards.}
   \label{alg:recurrent_state_inference}
%\begin{algorithmic}

  \KwIn{Policy parameter, $\bm{w} \in \mathcal{W}$, 
  \newline
  Number of restarts, $N \in \mathbb{N}$.}

  \begin{spacing}{1.05}
  \end{spacing}

  {Sample a state from the initial state distribution:}
  \begin{equation*}
  s_1 \sim p_1(\cdot).
  \end{equation*}

  \For{ $i= 1, ....., N$}{

  \begin{spacing}{1.5}
  \end{spacing}
 
  {Given the current state, sample an action from the policy:}
  \begin{equation*}
  a_t \sim \pi(\cdot|s_t;\bm{w}).
  \end{equation*}

  \eIf{ $s_t \ne s^*$,}{
  update the eligibility traces:
  
  \begin{align*}
  \bm{\Phi}^1 \leftarrow \bm{\Phi}^1 + \nabla_{\bm{w}} \log \pi (a_t|s_t;\bm{w}) \quad \quad \quad \quad  \bm{\Phi}^2 \leftarrow \bm{\Phi}^2 + \nabla_{\bm{w}} \nabla^\top_{\bm{w}} \log \pi (a_t|s_t;\bm{w})
  \end{align*}
  }
  {reset the eligibility traces:
  \begin{align*}
  \bm{\Phi}^1 = \bm{0}, \quad \quad \quad \quad \quad \bm{\Phi}^2 = \bm{0}.
  \end{align*}
  }

  \begin{spacing}{1.05}
  \end{spacing}  
  
  Update the estimates of the $\nabla_{\bm{w}} U(\bm{w})$ and $\mathcal{H}_2(\bm{w})$: 
  \begin{align*}
  \bm{\Delta}^1 \leftarrow \bm{\Delta}^1 + R(a_t, s_t) \bm{\Phi}^1, \quad \quad \quad \quad \bm{\Delta}^2 \leftarrow \bm{\Delta}^2 + R(a_t, s_t) \bm{\Phi}^2.
  \end{align*}
  \newline
  Sample state from the transition dynamics:
  \begin{equation*}
  s_{t+1} \sim p(\cdot|a_t, s_t).
  \end{equation*}

  Update time-step, $t \leftarrow t+1$.
  }
  
  \KwRet{$\bm{\Delta}^1$ \textnormal{and} $\bm{\Delta}^2$, \textnormal{which, up to a positive multiplicative constant, are estimates of $\nabla_{\bm{w}} U(\bm{w})$ and $\mathcal{H}_2(\bm{w})$.}}
\end{algorithm}

\subsection{Inversion of Preconditioning Matrices} \label{HessianInversion}

A computational bottleneck of the Newton method is the inversion of the Hessian matrix, which scales with $\mathcal{O}(n^3)$. In a standard application of the Newton method this inversion is performed during each iteration, and in large parameter systems this becomes prohibitively costly. We now consider the inversion of the preconditioning matrix in proposed Gauss-Newton methods. 

Firstly, in the diagonal forms of the Gauss-Newton methods the preconditioning matrix is diagonal, so that the inversion of this matrix is trivial and scales linearly in the number of parameters. In general the preconditioning matrix of the full Gauss-Newton methods will have no form of sparsity, and so no computational savings will be possible when inverting the preconditioning matrix. There is, however, a source of sparsity that allows for the efficient inversion of $\mathcal{H}_2$ in certain cases of interest. In particular, any product structure (with respect to the control parameters) in the model of the agent's behaviour will lead to sparsity in $\mathcal{H}_2$. For example, in partially observable Markov decision processes in which the behaviour of the agent is modeled through a finite state controller \citep{DBLP:conf/uai/MeuleauPKK99} there are three functions that are to be optimized, the initial belief distribution, 
the belief transition dynamics and the policy. In this case the dynamics of the system are given by,
\begin{equation*}
p(s',o',b',a'|s,o,b,a; \bm{v}, \bm{w}) = p(s'|s,a) p(o'|s') p(b'|b,o';\bm{v}) \pi(a'|b',o'; \bm{w}),
\end{equation*}
in which $o \in \mathcal{O}$ is an observation from a finite observation space, $\mathcal{O}$, and $b \in \mathcal{B}$ is the 
belief state from a finite belief space, $\mathcal{B}$. The initial belief is given by the initial belief distribution, $p(b|o;\bm{u})$.
The parameters to be optimized in this system are $\bm{u}$, $\bm{v}$ and $\bm{w}$.
It can be seen that in this system $\mathcal{H}_2(\bm{u}, \bm{v}, \bm{w})$ is block-diagonal (across the parameters 
$\bm{u}$, $\bm{v}$ and $\bm{w}$) and the matrix inversion can be performed more efficiently by inverting each of the block 
matrices individually. By contrast, the Hessian $\mathcal{H}(\bm{u}, \bm{v}, \bm{w})$ does not exhibit any such sparsity properties.

\subsection{A Hessian-free Conjugate Gradient Method for Fast Gauss-Newton Ascent} \label{HessianFree}

In general, the matrix inversion required in the full Gauss-Newton methods scales cubically in the number of policy parameters, which 
will be prohibitively expensive in large parameter systems. It is possible, however, to approximate the search direction of the 
second Gauss-Newton method at a computational cost that is linear in the number of policy parameters. We focus
on this form of the Gauss-Newton method for the remainder of this section. These computational savings are achieved through 
an application of the conjugate-gradient algorithm \citep{hestenes1952}, along with a Hessian-free approximation \citep{nocedal-2006} 
to a matrix-vector product that occurs within the conjugate-gradient algorithm. 

It can be seen that the search direction of the Gauss-Newton method at $\bm{w} \in \mathcal{W}$ is given by the solution to the linear system,
\begin{equation}
- \mathcal{H}_2 (\bm{w}) \bm{x} = \nabla_{\bm{w}} U(\bm{w}), \label{linearSystem1}
\end{equation}
The conjugate-gradient algorithm \citep{hestenes1952} is an iterative algorithm for solving linear systems.
The algorithm maintains an estimate to the solution of the linear system during the course of the algorithm. 
We denote the estimate at the $k^{\textnormal{th}}$ iteration by $\bm{x}_k$. 
The first approximation we propose is the use of $\bm{x}_k$, for some given $k \in \mathbb{N}$, as an approximation to the search direction of the Gauss-Newton method. 
As $- \mathcal{H}_2(\bm{w})$ is positive-definite the conjugate-gradient algorithm is guaranteed to find the 
exact solution of the linear system (\ref{linearSystem1}) within at most $n$ iterations. 
Furthermore, when $\bm{x}_0$ is appropriately selected, then $\bm{x}_k$ will be an ascent direction  
for all $k \in \mathbb{N}_n$. 
This property is guaranteed when, for instance, $\bm{x}_0 = \nabla_{\bm{w}} U(\bm{w})$. 
Each iteration of the conjugate-gradient algorithm scales quadratically in $n$. 
If $\bm{x}_k$ is used in place of $\bm{x}$ in the Gauss-Newton method, then the computational complexity will scale as 
$\mathcal{O}(k n^2)$. When 
$k \ll n$, therefore, the computational complexity of such an approach will be far less than the standard 
application of the Gauss-Newton method. 
When $n$ is large, however, this quadratic scaling in $n$ will still be too prohibitive and for this case we shall now consider an additional level of approximation in order to reduce the computational costs still further.

The computational bottleneck in each iteration of the conjugate-gradient algorithm is a matrix-vector product, that scales quadratically in the size of the linear system. In the case of the Gauss-Newton method, the matrix-vector product in the $k^{\textnormal{th}}$ iteration of the conjugate gradient algorithm takes the form, 
\begin{equation}
- \mathcal{H}_2(\bm{w}) \bm{p}_k = - \sum_{(s,a) \in \mathcal{S} \times \mathcal{A}} p_\gamma(s,a;\bm{w}) Q(s,a;\bm{w}) \nabla_{\bm{w}} \nabla_{\bm{w}}^\top \log \pi(a|s;\bm{w}) \bm{p}_k, \quad \quad \quad k \in \mathbb{N}_n, \label{apxnMatrixVectorProduct}
\end{equation}
in which $\bm{p}_k$ is the $k^{\textnormal{th}}$ conjugate direction found during the conjugate-gradient algorithm.
The matrix-vector product in (\ref{apxnMatrixVectorProduct}) can be equivalently viewed as a weighted summation of matrix-vector products which, for each state-action pair, $(s, a) \in \mathcal{S} \times \mathcal{A}$, we have the matrix-vector product, $\nabla_{\bm{w}} \nabla_{\bm{w}}^\top \log \pi(a|s;\bm{w}) \bm{p}_k$. This perspective of (\ref{apxnMatrixVectorProduct}) allows the use of standard finite-difference approximations to efficiently approximate each of these matrix-vector products, and thus (\ref{apxnMatrixVectorProduct}) itself. In particular, introducing the scalar, $\epsilon \in \mathbb{R}^+$, with $\epsilon \approx 0$, we have that 
\begin{equation}
\nabla_{\bm{w}} \nabla_{\bm{w}}^\top \log \pi(a|s;\bm{w}) \bm{p}_k \approx \frac{1}{\epsilon} \bigg( \nabla_{\bm{w}} \log \pi(a|s;\bm{w} + \epsilon \bm{p}_k) - \nabla_{\bm{w}} \log \pi(a|s;\bm{w}) \bigg), \label{finiteDifferenceMatrixVectorProduct}
\end{equation}
for each $(s,a) \in \mathcal{S} \times \mathcal{A}$. Using (\ref{finiteDifferenceMatrixVectorProduct}) in (\ref{apxnMatrixVectorProduct}) gives  
\begin{equation}
- \mathcal{H}_2(\bm{w}) \bm{p}_k \approx \frac{1}{\epsilon} \sum_{(s,a) \in \mathcal{S} \times \mathcal{A}} p_\gamma(s,a;\bm{w}) Q(s,a;\bm{w}) \bigg( \nabla_{\bm{w}} \log \pi(a|s;\bm{w} ) - \nabla_{\bm{w}} \log \pi(a|s;\bm{w} + \epsilon \bm{p}_k) \bigg). \label{matrixVectorProductApprox1} 
\end{equation}
The use of this approximation removes the necessity to either construct $\mathcal{H}_2(\bm{w})$ or to perform the matrix-vector product, and each iteration of conjugate-gradients now has a computational complexity that is linear in the dimension 
of the parameter space. Using $k$ iterations of conjugate-gradients to approximate the search direction results in a computational 
cost that scales as $\mathcal{O}(kn)$. 
We shall refer to the use of these two approximations (i.e., the use of the conjugate-gradient algorithm to approximately solve the linear system (\ref{linearSystem1}) and the use of the finite-difference approximation (\ref{matrixVectorProductApprox1})) as the conjugate-gradient Gauss-Newton method. A summary of the algorithm is given in Algorithm~\ref{alg:ConjugateGradientApproximateNewtonMethod}.

\begin{algorithm}[tb]
  \caption{The Conjugate-Gradient Gauss-Newton Method}\label{alg:ConjugateGradientApproximateNewtonMethod}
  \KwIn{Initial vector of policy parameters, $\bm{w}_0 \in \mathcal{W}$.}

  Set iteration counter, $k \leftarrow 0$.
  
  \Repeat{ \textnormal{Convergence of the policy parameters}}{
  
  Calculate the gradient of the objective at the current point in the parameter space, $\nabla_{\bm{w} = \bm{w}_k} U(\bm{w})$. 

  \begin{spacing}{1.5}  
  \end{spacing}    
  
  Use the conjugate-gradient algorithm to approximately solve the linear system, 
  \begin{equation}
  - \mathcal{H}_2(\bm{w}_k) \bm{x} = \nabla_{\bm{w} = \bm{w}_k} U(\bm{w}), \label{algLinearSystem} 
  \end{equation}
  using some given stopping criterion in the conjugate-gradient algorithm, 
  and using the finite-difference approximation (\ref{matrixVectorProductApprox1}) to approximate the matrix-vector 
  product (\ref{apxnMatrixVectorProduct}).

  \begin{spacing}{1.5}  
  \end{spacing}      
  
  Update policy parameters, $\bm{w}_{k+1} = \bm{w}_k + \alpha \bm{x}_k$, in which $\bm{x}_k$ is the approximate solution of the 
  linear system (\ref{algLinearSystem}) and $\alpha \in \mathbb{R}^+$ is the step size.
  
  \begin{spacing}{1.5}  
  \end{spacing}      
  
  Update iteration counter, $k \leftarrow k+1$.
  
  \begin{spacing}{1.5}  
  \end{spacing}        
  
  }

  \begin{spacing}{2.5}  
  \end{spacing}      
  
  \KwRet{$\bm{w}_k$}

\end{algorithm}

The use of the conjugate-gradient algorithm and the finite-difference approximation (\ref{matrixVectorProductApprox1}), 
are based upon methods used in Hessian-free algorithms \citep{nocedal-2006} from the numerical optimization literature.  
In the case of Markov decision processes, the conjugate-gradient algorithm would be used within an Hessian-free algorithm to solve the linear system, 
\begin{equation}
- \nabla_{\bm{w}} \nabla_{\bm{w}}^\top U(\bm{w}) \bm{x} = \nabla_{\bm{w}} U(\bm{w}). \label{HessianFreeLinearSystemMDP}
\end{equation}
A finite-difference approximation is also applied in Hessian-free methods, in this case taking the form,
\begin{align}
- \nabla_{\bm{w}} \nabla_{\bm{w}}^\top U(\bm{w}) \bm{p}_k \approx \frac{1}{\epsilon} & \bigg( \sum_{(s,a) \in \mathcal{S} \times \mathcal{A}} p_\gamma(s,a;\bm{w}) Q(s,a;\bm{w})  \nabla_{\bm{w}} \log \pi(a|s;\bm{w}) \label{matrixVectorProductApproxHessianFreeMDP} \\ 
& - \sum_{(s,a) \in \mathcal{S} \times \mathcal{A}} p_\gamma(s,a;\bm{w}+\epsilon\bm{p}_k) Q(s,a;\bm{w}+\epsilon\bm{p}_k)  \nabla_{\bm{w}} \log \pi(a|s;\bm{w}+\epsilon\bm{p}_k) \bigg).  \nonumber
\end{align}
Given the similarities between the conjugate-gradient Gauss-Newton method and Hessian-free methods, it is worth noting some important 
differences between the two algorithms. 
Firstly, as the Hessian is not necessarily negative-definite, it is not necessarily the case that the conjugate-gradient 
algorithm will be able to solve the linear system (\ref{HessianFreeLinearSystemMDP}).
It is no longer the case that $\bm{x}_k$, $k \in \mathbb{N}$, will be an ascent direction of the objective
function, regardless of the initialization of the conjugate-gradient algorithm \citep{nocedal-2006}.  
Additionally, comparing the finite-difference approximation (\ref{matrixVectorProductApprox1}) with the finite-difference approximation 
(\ref{matrixVectorProductApproxHessianFreeMDP}) it can be seen that in the rightmost term of 
(\ref{matrixVectorProductApprox1}) the discounted occupancy distribution and the state-action value function depend on the current 
policy parameters, $\bm{w} \in \mathcal{W}$, while in the corresponding term in (\ref{matrixVectorProductApproxHessianFreeMDP}) these 
quantities depend on the perturbed policy parameters, $\bm{w} + \epsilon \bm{p}_k \in \mathcal{W}$. Terms such as the state-action 
value function cannot be calculated exactly in most large-scale MDPs of interest and instead must be estimated. 
In a standard application of a Hessian-free method, therefore, it would be necessary to re-estimate such quantities in 
each iteration of the conjugate-gradient algorithm. By contrast, in the conjugate-gradient Gauss-Newton method the same 
estimate of the state-action value function and discounted occupancy marginals can be used in all of the iterations of the 
conjugate-gradient algorithm. In policy gradient algorithms estimating such terms typically forms an expensive part of the 
overall algorithm, which means that each iteration of the conjugate-gradient Gauss-Newton method will be more 
computationally efficient than Hessian-free methods. It also means that while it may appear that the 
approximation (\ref{matrixVectorProductApprox1}) should have the same cost as two gradient evaluations, 
it will be typically be cheaper than this in practice. Furthermore, it means that there is an additional 
level of approximation in Hessian-free methods that is not present in the conjugate-gradient Gauss-Newton method.

Additionally, by considering the Fisher information matrix that takes the form (\ref{fisher_information2}), 
the approach presented in this section could also be applied in the natural gradient framework.

\vskip 0.2in
\bibliography{pol_grad_bib}

\begin{thebibliography}{86}
\providecommand{\natexlab}[1]{#1}
\providecommand{\url}[1]{\texttt{#1}}
\expandafter\ifx\csname urlstyle\endcsname\relax
  \providecommand{\doi}[1]{doi: #1}\else
  \providecommand{\doi}{doi: \begingroup \urlstyle{rm}\Url}\fi

\bibitem[Abbeel et~al.(2007)Abbeel, Coates, Quigley, and Ng]{Abbeel-Coates}
P.~Abbeel, A.~Coates, M.~Quigley, and A.~Ng.
\newblock {An Application of Reinforcement Learning to Aerobatic Helicopter
  Flight}.
\newblock \emph{{NIPS}}, 19:\penalty0 1--8, 2007.

\bibitem[Amari(1997)]{amari-1996}
S.~Amari.
\newblock {Neural Learning in Structured Parameter Spaces - Natural Riemannian
  Gradient}.
\newblock \emph{NIPS}, 9:\penalty0 127--133, 1997.

\bibitem[Amari(1998)]{amari-1998}
S.~Amari.
\newblock {Natural Gradient Works Efficiently in Learning}.
\newblock \emph{Neural Computation}, 10:\penalty0 251--276, 1998.

\bibitem[Amari et~al.(1992)Amari, Kurata, and Nagaoka]{amari-etal-1992}
S.~Amari, K.~Kurata, and H.~Nagaoka.
\newblock {Information Geometry of Boltzmann Machines}.
\newblock \emph{IEEE Transactions on Neural Networks}, 3\penalty0 (2):\penalty0
  260--271, 1992.

\bibitem[Amari et~al.(1996)Amari, Cichocki, and Yang]{amari-1996b}
S.~Amari, A.~Cichocki, and H.~Yang.
\newblock {A New Learning Algorithm for Blind Signal Separation}.
\newblock \emph{NIPS}, 8:\penalty0 757--763, 1996.

\bibitem[Bagnell and Schneider(2003)]{Bagnell-2003}
J.~Bagnell and J.~Schneider.
\newblock {Covariant Policy Search}.
\newblock \emph{{IJCAI}}, 18:\penalty0 1019--1024, 2003.

\bibitem[Baird(1993)]{Baird93}
L.~Baird.
\newblock {Advantage Updating}.
\newblock Research Report WL-TR-93-1146, Wright Lab, 1993.

\bibitem[Barber(2011)]{barber-book-11}
D.~Barber.
\newblock \emph{Bayesian Reasoning and Machine Learning}.
\newblock Cambridge University Press, 2011.

\bibitem[Baxter and Bartlett(2001)]{baxter-etal-2001}
J.~Baxter and P.~Bartlett.
\newblock {Infinite Horizon Policy Gradient Estimation}.
\newblock \emph{Journal of Artificial Intelligence Research}, 15:\penalty0
  319--350, 2001.

\bibitem[Baxter et~al.(2001)Baxter, Bartlett, and Weaver]{baxter-etal-2001b}
J.~Baxter, P.~Bartlett, and L.~Weaver.
\newblock {Experiments with Infinite Horizon, Policy Gradient Estimation}.
\newblock \emph{Journal of Artificial Intelligence Research}, 15:\penalty0
  351--381, 2001.

\bibitem[Bellman(1957)]{bellman-book}
R.~Bellman.
\newblock \emph{Dynamic Programming}.
\newblock Princeton University Press, 1957.

\bibitem[Bertsekas(2010)]{Bertsekas-2010}
D.~P. Bertsekas.
\newblock {Approximate Policy Iteration: A Survey and Some New Methods}.
\newblock Research report, Massachusetts Institute of Technology, 2010.

\bibitem[Bertsekas and Ioffe(1996)]{Bertsekas-1996}
D.~P. Bertsekas and S.~Ioffe.
\newblock {Temporal Differences-Based Policy Iteration and Applications in
  Neuro-Dynamic Programming}.
\newblock {Research Report LIDS-P-2349}, Massachusetts Institute of Technology,
  1996.

\bibitem[Bhatnagar et~al.(2008)Bhatnagar, Sutton, Ghavamzadeh, and
  Lee]{bhatnagar-incremental-NAC-2008}
S.~Bhatnagar, R.~Sutton, M.~Ghavamzadeh, and M.~Lee.
\newblock {Incremental Natural Actor-Critic Algorithms}.
\newblock \emph{NIPS}, 20:\penalty0 105--112, 2008.

\bibitem[Bhatnagar et~al.(2009)Bhatnagar, Sutton, Ghavamzadeh, and
  Mark]{bhatnagar_2009}
S.~Bhatnagar, R.~Sutton, M.~Ghavamzadeh, and L.~Mark.
\newblock {Natural Actor-Critic Algorithms}.
\newblock \emph{Automatica}, 45:\penalty0 2471--2482, 2009.

\bibitem[Boyd and Vandenberghe(2004)]{boyd-vandenberghe-book}
S.~Boyd and L.~Vandenberghe.
\newblock \emph{Convex Optimization}.
\newblock {Cambridge University Press}, 2004.

\bibitem[Browne et~al.(2012)Browne, Powley, Whitehouse, Lucas, Cowling,
  Rohlfshagen, Tavener, Perez, Samothrakis, and Colton]{Browne-etal-2012}
C.~Browne, E.~Powley, D.~Whitehouse, S.~Lucas, P.~Cowling, P.~Rohlfshagen,
  S.~Tavener, D.~Perez, S.~Samothrakis, and S.~Colton.
\newblock {A Survey of Monte Carlo Tree Search Methods}.
\newblock \emph{IEEE Transactions on Computational Intelligence and AI in
  Games}, 4:\penalty0 1--43, 2012.

\bibitem[Crites and Barto(1995)]{Crites-Barto}
R.~Crites and A.~Barto.
\newblock {Improving Elevator Performance Using Reinforcement Learning}.
\newblock \emph{{NIPS}}, 8:\penalty0 1017--1023, 1995.

\bibitem[Dayan and Hinton(1997)]{Dayan-Hinton-EM-RL-97}
P.~Dayan and G.~E. Hinton.
\newblock {Using Expectation-Maximization for Reinforcement Learning}.
\newblock \emph{Neural Computation}, 9:\penalty0 271--278, 1997.

\bibitem[Deisenroth and Rasmussen(2011)]{Deisenroth2011c}
M.~P. Deisenroth and C.~E. Rasmussen.
\newblock {PILCO: A Model-Based and Data-Efficient Approach to Policy Search}.
\newblock In L.~Getoor and T.~Scheffer, editors, \emph{Proceedings of the 28th
  International Conference on Machine Learning}, Bellevue, WA, USA, June 2011.

\bibitem[Dempster et~al.(1977)Dempster, Laird, and Rubin]{dempster-1977}
A.~P. Dempster, N.~M. Laird, and D.~B. Rubin.
\newblock {Maximum Likelihood from Incomplete Data via the EM Algorithm}.
\newblock \emph{Journal of the Royal Statistical Society. Series B
  (Methodological)}, 39\penalty0 (1):\penalty0 1--38, 1977.

\bibitem[Fahey(2003)]{Fahey_tetris_site}
C.~Fahey.
\newblock {Tetris AI, Computers Play Tetris}
  \url{http://colinfahey.com/tetris/tetris_en.html}, 2003.

\bibitem[Furmston(2012)]{furmston-thesis}
T.~Furmston.
\newblock \emph{Applications of Probabilistic Inference to Planning \&
  Reinforcement Learning}.
\newblock PhD thesis, University College London, 2012.

\bibitem[Furmston and Barber(2009)]{furmston-barber}
T.~Furmston and D.~Barber.
\newblock {Solving Deterministic Policy (PO)MPDs using Expectation-Maximisation
  and Antifreeze}.
\newblock \emph{European Conference on Machine Learning (ECML)}, 1:\penalty0
  50--65, 2009.
\newblock {Workshop on Learning and data Mining for Robotics}.

\bibitem[Furmston and Barber(2010)]{furmston-barber-10}
T.~Furmston and D.~Barber.
\newblock {Variational Methods for Reinforcement Learning}.
\newblock \emph{AISTATS}, 9:\penalty0 241--248, 2010.

\bibitem[Gabillon et~al.(2013)Gabillon, Ghavamzadeh, and
  Scherrr]{gabillon-etal-2013-nips}
V.~Gabillon, M.~Ghavamzadeh, and B.~Scherrr.
\newblock {Approximate Dynamic Programming Finally Performs Well in the Game of
  Tetris}.
\newblock \emph{NIPS}, 26, 2013.

\bibitem[Gelly and Silver(2008)]{gelly-2008}
S.~Gelly and D.~Silver.
\newblock {Achieving Master Level Play in 9 x 9 Computer Go}.
\newblock \emph{{AAAI}}, 23:\penalty0 1537--1540, 2008.

\bibitem[Ghavamzadeh and Engel(2007)]{ghavamzadeh-bayesian-gradient}
M.~Ghavamzadeh and Y.~Engel.
\newblock {Bayesian Policy Gradient Algorithms}.
\newblock \emph{NIPS}, 19:\penalty0 457--464, 2007.

\bibitem[Glynn(1986)]{glynn-86}
P.~W. Glynn.
\newblock {Stochastic Approximation for Monte-Carlo Optimisation}.
\newblock \emph{Proceedings of the 1986 ACM Winter Simulation Conference},
  18:\penalty0 356--365, 1986.

\bibitem[Glynn(1990)]{glynn-90}
P.~W. Glynn.
\newblock {Likelihood Ratio Gradient Estimation for Stochastic Systems}.
\newblock \emph{Communications of the ACM}, 33:\penalty0 97--84, 1990.

\bibitem[Greensmith et~al.(2004)Greensmith, Bartlett, and
  Baxter]{Greensmith-etal-2004}
E.~Greensmith, P.~Bartlett, and J.~Baxter.
\newblock {Variance Reduction Techniques For Gradient Based Estimates in
  Reinforcement Learning}.
\newblock \emph{Journal of Machine Learning Research}, 5:\penalty0 1471--1530,
  2004.

\bibitem[Hestenes and Stiefel(1952)]{hestenes1952}
M.~Hestenes and E.~Stiefel.
\newblock {Methods of Conjugate Gradients for Solving Linear Systems}.
\newblock \emph{{Journal of Research of the National Bureau of Standards}},
  49:\penalty0 409--436, 1952.

\bibitem[Hoffman et~al.(2009)Hoffman, de~Freitas, Doucet, and
  Peters]{hoffman-etal-09}
M.~Hoffman, N.~de~Freitas, A.~Doucet, and J.~Peters.
\newblock {An Expectation Maximization Algorithm for Continuous Markov Decision
  Processes with Arbitrary Rewards.}
\newblock \emph{{AISTATS}}, 12\penalty0 (5):\penalty0 232--239, 2009.

\bibitem[Howard(1960)]{howard-policy-iteration-1960}
R.~A. Howard.
\newblock \emph{Dynamic Programming and Markov Processes}.
\newblock M.I.T. Press, 1960.

\bibitem[Ijspeert et~al.(2002)Ijspeert, Nakanishi, and Schaal]{ijspeert-2002}
A.~Ijspeert, J.~Nakanishi, and S.~Schaal.
\newblock {Motor Imitation with Nonlinear Dynamical Systems in Humanoid
  Robots}.
\newblock \emph{IEEE International Conference on Robotic and Automation}, pages
  1398--1403, 2002.

\bibitem[Ijspeert et~al.(2003)Ijspeert, Nakanishi, and Schaal]{ijspeert-2003}
A.~Ijspeert, J.~Nakanishi, and S.~Schaal.
\newblock {Learning Attractor Landscapes for Learning Motor Primitives}.
\newblock \emph{Neural Information Processing Systems}, 15:\penalty0
  1547--1554, 2003.

\bibitem[Jacobson and Mayne(1970)]{Jacobson-70}
D.~Jacobson and D.~Mayne.
\newblock \emph{Differential Dynamic Programming}.
\newblock Elsevier, 1970.

\bibitem[Kakade(2001)]{Kakade-2001}
S.~Kakade.
\newblock {Optimizing Average Reward Using Discounted Rewards}.
\newblock \emph{COLT}, 14:\penalty0 605--615, 2001.

\bibitem[Kakade(2002)]{Kakade-2002}
S.~Kakade.
\newblock {A Natural Policy Gradient}.
\newblock \emph{NIPS}, 14, 2002.

\bibitem[Khalil(2001)]{khalil-2001}
H.~Khalil.
\newblock \emph{Nonlinear Systems}.
\newblock Prentice Hall, 2001.

\bibitem[Kiefer and Wolfowitz(1952)]{kiefer-1952}
J.~Kiefer and J.~Wolfowitz.
\newblock {Stochastic Estimation of the Maximum of a Regression Function}.
\newblock \emph{Annals of Mathematical Statistics}, 23:\penalty0 462--466,
  1952.

\bibitem[Kober and Peters(2009)]{Kober-Peters-08}
J.~Kober and J.~Peters.
\newblock {Policy Search for Motor Primitives in Robotics}.
\newblock \emph{NIPS}, 21:\penalty0 849--856, 2009.

\bibitem[Kober and Peters(2011)]{Kober-peters-2011}
J.~Kober and J.~Peters.
\newblock {Policy Search for Motor Primitives in Robotics}.
\newblock \emph{Machine Learning}, 84\penalty0 (1-2):\penalty0 171--203, 2011.

\bibitem[Kocsis and Szepesv\'{a}ri(2006)]{kocsis-2006-ecml}
L.~Kocsis and C.~Szepesv\'{a}ri.
\newblock {Bandit Based Monte-Carlo Planning}.
\newblock \emph{European Conference on Machine Learning (ECML)}, 17:\penalty0
  282--293, 2006.

\bibitem[Kohl and Stone(2004)]{kohl-2004}
N.~Kohl and P.~Stone.
\newblock {Policy Gradient Reinforcement Learning for Fast Quadrupedal
  Locomotion}.
\newblock \emph{IEEE International Conference on Robotics and Automation},
  2004.

\bibitem[Konda and Tsitsiklis(1999)]{DBLP:conf/nips/KondaT99}
V.~Konda and J.~Tsitsiklis.
\newblock {Actor-Critic Algorithms}.
\newblock In \emph{NIPS}, pages 1008--1014, 1999.

\bibitem[Konda and Tsitsiklis(2003)]{Konda:2003:AA:942271.942292}
V.~R. Konda and J.~N. Tsitsiklis.
\newblock {On Actor-Critic Algorithms}.
\newblock \emph{SIAM J. Control Optim.}, 42\penalty0 (4):\penalty0 1143--1166,
  2003.

\bibitem[Lagoudakis and Parr(2003)]{DBLP:journals/jmlr/LagoudakisP03}
M.~G. Lagoudakis and R.~Parr.
\newblock {Least-Squares Policy Iteration}.
\newblock \emph{Journal of Machine Learning Research}, 4:\penalty0 1107--1149,
  2003.

\bibitem[Lehmann and Casella(1998)]{Lehmann-book}
E.~L. Lehmann and G.~Casella.
\newblock \emph{Theory of Point Estimation}.
\newblock {Springer}, 1998.

\bibitem[Li(2006)]{li-thesis}
W.~Li.
\newblock \emph{Optimal Control for Biological Movement Systems}.
\newblock PhD thesis, University of San Diego, 2006.

\bibitem[Li and Todorov(2004)]{li-todorov-2004}
W.~Li and E.~Todorov.
\newblock {Iterative Linear Quadratic Regulator Design for Nonlinear Biological
  Movement Systems}.
\newblock \emph{International Conference on Informatics in Control, Automation
  and Robotics}, 1, 2004.

\bibitem[Li and Todorov(2006)]{li-todorov-2006}
W.~Li and E.~Todorov.
\newblock {Iterative Optimal Control and Estimation Design for Nonlinear
  Stochastic Systems}.
\newblock \emph{IEEE Conference on Decision and Control}, 45, 2006.

\bibitem[Little and Rubin(2002)]{little-em-book}
R.~Little and D.~Rubin.
\newblock \emph{Statistical Analysis with Missing Data}.
\newblock {Wiley-Blackwell}, 2002.

\bibitem[Marbach and Tsitsiklis(2001)]{marbach-etal-2001}
P.~Marbach and J.~Tsitsiklis.
\newblock {Simulation-Based Optimisation of Markov Reward Processes}.
\newblock \emph{IEEE Transactions on Automatic Control}, 46\penalty0
  (2):\penalty0 191--209, 2001.

\bibitem[Meuleau et~al.(1999)Meuleau, Peshkin, Kim, and
  Kaelbling]{DBLP:conf/uai/MeuleauPKK99}
N.~Meuleau, L.~Peshkin, K~Kim, and L.~Kaelbling.
\newblock {Learning Finite-State Controllers for Partially Observable
  Environments}.
\newblock \emph{UAI}, 15:\penalty0 427--436, 1999.

\bibitem[Mnih et~al.(2015)Mnih, Kavukcuoglu, Silver, Rusu, Veness, Bellemare,
  Graves, Riedmiller, Fidjeland, Ostrovski, Petersen, Beattie, Sadik,
  Antonoglou, King, Kumaran, Wierstra, Legg, and Hassabis]{citeulike:13527831}
V.~Mnih, K.~Kavukcuoglu, D.~Silver, A.~A. Rusu, J.~Veness, M.~G. Bellemare,
  A.~Graves, M.~Riedmiller, A.~K. Fidjeland, G.~Ostrovski, S.~Petersen,
  C.~Beattie, A.~Sadik, I.~Antonoglou, H.~King, D.~Kumaran, D.~Wierstra,
  S.~Legg, and D.~Hassabis.
\newblock Human-level control through deep reinforcement learning.
\newblock \emph{Nature}, 518\penalty0 (7540):\penalty0 529--533, February 2015.

\bibitem[Neal and Hinton(1999)]{neal-hinton-em-variants}
R.~Neal and G.~Hinton.
\newblock {A View of the EM Algorithm that Justifies Incremental, Sparse and
  Other Variants}.
\newblock \emph{Learning in Graphical Models}, pages 355--368, 1999.

\bibitem[Ngo et~al.(2011)Ngo, Hwanjo, and TaeChoong]{ngo-etal-2011}
A~Ngo, Y.~Hwanjo, and C.~TaeChoong.
\newblock {Hessian Matrix Distribution for Bayesian Policy Gradient
  Reinforcment Learning}.
\newblock \emph{Information Sciences}, 181:\penalty0 1671--1685, 2011.

\bibitem[Nocedal and Wright(2006)]{nocedal-2006}
J.~Nocedal and S.~Wright.
\newblock \emph{Numerical Optimisation}.
\newblock Springer, 2006.

\bibitem[Ortega and Rheinboldt(1970)]{ortega-rheinboldt}
J.~M. Ortega and W.~C. Rheinboldt.
\newblock \emph{Iterative Solution of Nonlinear Equations in Several
  Variables}.
\newblock Academic Press, {first} edition, 1970.

\bibitem[Peters and Schaal(2008)]{peters-NAC_2008}
J.~Peters and S.~Schaal.
\newblock {Natural Actor-Critic}.
\newblock \emph{Neurocomputing}, 71\penalty0 (7-9):\penalty0 1180--1190, 2008.

\bibitem[Rawlik et~al.(2012)Rawlik, Toussaint, and
  Vijayakumar]{rawlik-2012-etal}
K.~Rawlik, M.~Toussaint, and S.~Vijayakumar.
\newblock {On Stochastic Optimal Control and Reinforcement Learning by
  Approximate Inference}.
\newblock \emph{International Conference on Robotics Science and Systems},
  2012.

\bibitem[Richter et~al.(2007)Richter, Aberdeen, and Yu]{richter-2007}
S.~Richter, D.~Aberdeen, and J.~Yu.
\newblock {Natural Actor-Critic for Road Traffic Optimisation}.
\newblock \emph{NIPS}, 19:\penalty0 1169--1176, 2007.

\bibitem[Russell and Norvig(2009)]{russell-norig-09}
S.~Russell and P.~Norvig.
\newblock \emph{Artificial Intelligence: A Modern Approach}.
\newblock Prentice Hall, 2009.

\bibitem[Saul et~al.(1996)Saul, Jaakkola, and Jordan]{saul-etal-1996}
L.~Saul, T.~Jaakkola, and M.~Jordan.
\newblock {Mean Field Theory for Sigmoid Belief Networks}.
\newblock \emph{Journal of Artificial Intelligence Research}, 4:\penalty0
  61--76, 1996.

\bibitem[Schaal(2006)]{schaal-2006}
S.~Schaal.
\newblock {The SL Simulation and Real-Time Control Software Package}.
\newblock Technical report, University of Southern California, 2006.

\bibitem[Schaal et~al.(2007)Schaal, Mohajerian, and Ijspeert]{schaal-2007}
S.~Schaal, P.~Mohajerian, and A.~Ijspeert.
\newblock {Dynamics Systems Vs. Optimal Control - A Unifying View}.
\newblock \emph{Progress in Brain Research}, 165\penalty0 (1):\penalty0
  425--445, 2007.

\bibitem[Schraudolph et~al.(2006)Schraudolph, Yu, and
  Aberdeen]{Schraudolph-etal-nips-2006}
N.~Schraudolph, J.~Yu, and D.~Aberdeen.
\newblock {Fast Online Policy Gradient Learning with SMD Gain Vector
  Adaptation}.
\newblock \emph{NIPS}, 18:\penalty0 1185--1192, 2006.

\bibitem[Schraudolph et~al.(2007)Schraudolph, Yu, and
  Gunter]{Schraudolph-2007-etal}
N.~Schraudolph, J.~Yu, and S.~Gunter.
\newblock {A Stochastic Quasi-Newton Method for Online Convex Optimization}.
\newblock \emph{AISTATS}, 11:\penalty0 433--440, 2007.

\bibitem[Spall(1992)]{spall-1992}
J.~Spall.
\newblock {Multivariate Stochastic Approximation Using a Simultaneous
  Perturbation Gradient Approximation}.
\newblock \emph{IEEE Transactions on Automatic Control}, 37:\penalty0 332--341,
  1992.

\bibitem[Spall and Cristion(1998)]{spall-1998}
J.~Spall and J.~Cristion.
\newblock {Model-Free Control of Nonlinear Stochastic Systems with
  Discrete-Time Measurements}.
\newblock \emph{IEEE Transactions on Automatic Control}, 43:\penalty0
  1198--1210, 1998.

\bibitem[Spong et~al.(2005)Spong, Hutchinson, and Vidyasagar]{spong-etal-2005}
M.~Spong, S.~Hutchinson, and M.~Vidyasagar.
\newblock \emph{Robot Modelling and Control}.
\newblock John Wiley \& Sons, 2005.

\bibitem[Srinivasan et~al.(2006)Srinivasan, Choy, and Cheu]{srinivasan-2006}
D.~Srinivasan, M.~C. Choy, and R.~L. Cheu.
\newblock {Neural Networks for Real-Time Traffic Signal Control}.
\newblock \emph{IEEE Transactions on Intelligent Transportation Systems},
  7:\penalty0 261--272, 2006.

\bibitem[Stengel(1993)]{stengel-1993}
R.~Stengel.
\newblock \emph{Optimal Control and Estimation}.
\newblock Dover, 1993.

\bibitem[Sutton et~al.(2000)Sutton, McAllester, Singh, and
  Mansour]{sutton-etal-00}
R.~Sutton, D.~McAllester, S.~Singh, and Y.~Mansour.
\newblock {Policy Gradient Methods for Reinforcement Learning with Function
  Approximation}.
\newblock \emph{{NIPS}}, 13, 2000.

\bibitem[Tedrake and Zhang(2005)]{Tedrake-2005}
R.~Tedrake and T.~Zhang.
\newblock {Learning to Walk in 20 Minutes}.
\newblock \emph{Proceedings of the Fourteenth Yale Workshop on Adaptive and
  Learning Systems}, 2005.

\bibitem[Tesauro(1994)]{Tesauro-1994}
G.~Tesauro.
\newblock {TD-Gammon, A Self-Teaching Backgammon Program Achieves Master-Level
  Play}.
\newblock \emph{Neural Computation}, 6:\penalty0 215--219, 1994.

\bibitem[Todorov and Tassa(2009)]{Todorov09iterativelocal}
E.~Todorov and Y.~Tassa.
\newblock {Iterative Local Dynamic Programming}.
\newblock \emph{IEEE Symposium on Adaptive Dynamic Programming and
  Reinforcement Learning}, pages 90--95, 2009.

\bibitem[Toussaint et~al.(2006)Toussaint, Harmeling, and
  Storkey]{toussaint-etal-06}
M.~Toussaint, S.~Harmeling, and A.~Storkey.
\newblock {Probabilistic Inference for Solving (PO)MDPs}.
\newblock Research Report EDI-INF-RR-0934, University of Edinburgh, School of
  Informatics, 2006.

\bibitem[Toussaint et~al.(2011)Toussaint, Storkey, and
  Harmeling]{toussaint-etal-2011}
M.~Toussaint, A.~Storkey, and S.~Harmeling.
\newblock \emph{Bayesian Time Series Models}, chapter {Expectation-Maximization
  Methods for Solving (PO)MDPs and Optimal Control Problems.}
\newblock Cambridge University Press, 2011.
\newblock In press. See \verb"userpage.fu-berlin.de/~mtoussai".

\bibitem[Veness et~al.(2009)Veness, Silver, Blair, and Uther]{veness-2009}
J.~Veness, D.~Silver, A.~Blair, and W.~Uther.
\newblock {Bootstrapping from Game Tree Search}.
\newblock \emph{{NIPS}}, 19:\penalty0 1937--1945, 2009.

\bibitem[Vlassis et~al.(2009)Vlassis, Toussaint, Kontes, and
  Piperidis]{vlassis-etal-2009}
N.~Vlassis, M.~Toussaint, G.~Kontes, and S.~Piperidis.
\newblock {Learning Model-Free Robot Control by a Monte Carlo EM Algorithm}.
\newblock \emph{{Autonomous Robots}}, 27\penalty0 (2):\penalty0 123--130, 2009.

\bibitem[Weaver and Tao(2001)]{weaver-etal-2001}
L.~Weaver and N.~Tao.
\newblock {The Optimal Reward Baseline for Gradient Based Reinforcement
  Learning}.
\newblock volume~17, 2001.

\bibitem[Whitehead(1992)]{whitehead-thesis}
S.~Whitehead.
\newblock \emph{Reinforcement Learning for Adaptive Control of Perception and
  Action}.
\newblock PhD thesis, University of Rochester, 1992.

\bibitem[Wierstra et~al.(2014)Wierstra, Schaul, Glasmachers, Sun, Peters, and
  Schmidhuber]{wierstra-etal-2014}
D.~Wierstra, T.~Schaul, T.~Glasmachers, Y.~Sun, J.~Peters, and J.~Schmidhuber.
\newblock {Natural Evolution Strategies}.
\newblock \emph{Journal of Machine Learning Researchs}, 15:\penalty0 949--980,
  2014.

\bibitem[Williams(1992)]{williams92}
R.~Williams.
\newblock {Simple Statistical Gradient Following Algorithms for Connectionist
  Reinforcement Learning}.
\newblock \emph{{Machine Learning}}, 8:\penalty0 229--256, 1992.

\end{thebibliography}

\end{document}